\documentclass[nohyperref]{article}

\usepackage{microtype}
\usepackage{graphicx}
\usepackage{subfigure}
\usepackage{booktabs} %

\usepackage{hyperref}

\usepackage[accepted]{icml2022}

\usepackage{amsmath}
\usepackage{amssymb}
\usepackage{mathtools}
\usepackage{amsthm}

\usepackage{xcolor}
\definecolor{vermillion}{RGB}{213,94,0}
\definecolor{skyblue}{RGB}{86,180,233}
\definecolor{bluishgreen}{RGB}{0,158,115}
\definecolor{yellow}{RGB}{240,228,66}
\definecolor{orange}{RGB}{230,159,0}
\usepackage[capitalize,noabbrev]{cleveref}

\theoremstyle{plain}
\newtheorem{theorem}{Theorem}[section]

\newtheorem{lemma}[theorem]{Lemma}
\newtheorem{corollary}[theorem]{Corollary}
\theoremstyle{definition}
\newtheorem{definition}[theorem]{Definition}
\newtheorem{assumption}[theorem]{Assumption}
\theoremstyle{remark}

\usepackage[textsize=tiny]{todonotes}

\allowdisplaybreaks
\usepackage{microtype}
\usepackage{graphicx}
\usepackage{subfigure}
\usepackage{booktabs} %
\usepackage{amsthm}
\usepackage{amsmath}
\usepackage{enumitem}
\usepackage{flushend}
\usepackage{amssymb}
\usepackage{hyperref}
\usepackage{thmtools,thm-restate}
\newcommand{\vdecomp}{{V_{\textrm{decomp}}}}

\crefname{theorem}{Theorem}{Theorems}
\crefname{cor}{Corollary}{Corollaries}
\crefname{assumption}{Assumption}{Assumptions}
\Crefname{lemma}{Lemma}{Lemmas}
\Crefname{alg}{Algorithm}{Algorithms}
\Crefname{claim}{Claim}{Claims}
\Crefname{observation}{Observation}{Observations}
\Crefname{invariant}{Invariant}{Invariants}
\Crefname{algorithm}{Algorithm}{Algorithms}
\usepackage{algorithm}

\newtheorem{claim}{Claim}

\newcommand{\pa}{\textrm{Pa}}
\newcommand{\mec}{\textrm{MEC}}
\newcommand{\oracle}{\textrm{Oracle}}
\newcommand{\CI}{\textrm{CI}}
\newcommand{\close}{{c_{\textrm{close}}}}
\newcommand{\csup}{{c_{\textrm{support}}}}
\newcommand{\clb}{{c_{\textrm{lb}}}}

\usepackage{todonotes}
\newcommand{\colorroot}{{\textrm{color-root}}}
\newcommand{\vinit}{{v_{\textrm{initial}}}}
\newcommand{\vplat}{{v_{\textrm{plateau}}}}
\newcommand{\vsur}{{v_{\textrm{surplus}}}}

\newcommand{\overvinit}{{\overline{v}_{\textrm{initial}}}}
\newcommand{\overvplat}{{\overline{v}_{\textrm{plateau}}}}

\newcommand{\overvhelp}{{\overline{v}_{\textrm{helpful}}}}
\newcommand{\relate}{{\textrm{related}}}
\newcommand{\colored}{{\textrm{color}}}
\newcommand{\ord}{{\textrm{order}}}

\usepackage{thmtools,thm-restate}

\newcommand{\indep}{\perp \!\!\! \perp}
\newcommand{\Xsrc}{{X_{\textrm{src}}}}

\icmltitlerunning{Entropic Causal Inference: Graph Identifiability}

\begin{document}

\twocolumn[
\icmltitle{Entropic Causal Inference: Graph Identifiability}

\begin{icmlauthorlist}
\icmlauthor{Spencer Compton}{mit,mitibm}
\icmlauthor{Kristjan Greenewald}{mitibm,ibm}
\icmlauthor{Dmitriy Katz}{mitibm,ibm}
\icmlauthor{Murat Kocaoglu}{purdue}
\end{icmlauthorlist}

\icmlaffiliation{mit}{Massachusetts Institute of Technology, Cambridge, USA}
\icmlaffiliation{mitibm}{MIT-IBM Watson AI Lab, Cambridge, USA}
\icmlaffiliation{ibm}{IBM Research, Cambridge, USA}
\icmlaffiliation{purdue}{Purdue University, West Lafayette, USA}
\icmlcorrespondingauthor{Spencer Compton}{scompton@mit.edu}

\icmlkeywords{Information Theory, Causality}

\vskip 0.3in
]

\printAffiliationsAndNotice{} %

\begin{abstract}
Entropic causal inference is a recent framework for learning the causal graph between two variables from observational data by finding the information-theoretically simplest structural explanation of the data, i.e., the model with smallest entropy. In our work, we first extend the causal graph identifiability result in the two-variable setting under relaxed assumptions. We then show the first identifiability result using the entropic approach for learning causal graphs with more than two nodes. Our approach utilizes the property that ancestrality between a source node and its descendants can be determined using the bivariate entropic tests. %
We provide a sound sequential peeling algorithm for general graphs that relies on this property. We also propose a heuristic algorithm for small graphs that shows strong empirical performance. We rigorously evaluate the performance of our algorithms on synthetic data generated from a variety of models, observing improvement over prior work. Finally we test our algorithms on real-world datasets.

\end{abstract}

\section{Introduction}
Causal reasoning is %
essential for high-quality decision-making, as, for instance, it improves interpretability and enables counterfactual reasoning \cite{athey2015machine,morgan2015counterfactuals,moraffah2020causal}. By learning the relationships between causes and effects, we can predict how various interventions would affect a system. %
Advances in causality enable us to better answer questions such as \emph{``Why does this phenomenon occur in the system?''} or \emph{``What could happen if the system were perturbed in this particular way?''}
Moreover, causal inference methods are being utilized to tackle key challenges for reliability of ML systems, such as domain adaptation \cite{magliacane2017domain,zhang2020domain} and generalization (e.g. via causal transportability or  imputation) \cite{bareinboim2012transportability, pearl2014external,squires2020causal}.

Structural causal models (SCMs) represent relationships in a system of random variables \cite{pearl2009causality}. In particular, each variable is modeled with a structural equation that characterizes how the variable is realized. Causal graphs are directed acyclic graphs (DAGs) that are used to represent such systems, where nodes and edges correspond to variables and the causal relations between these variables, respectively. A variable's structural equation is a function of the variable's corresponding node's  parents in the graph. 

Learning such causal graphs can be done through a series of interventions. However, in many settings it is not possible to perform such interventions. 
A large amount of literature has focused on learning the causal graph from observational data with additional ``faithfulness assumptions'' ~\cite{spirtes2000causation},
though in general it is impossible to fully learn the causal graph without stronger assumptions on the generative model. A variety of stronger assumptions and corresponding methodologies exist in the literature~\cite{shimizu2006linear,hoyer2008nonlinear,loh2014high,peters2014identifiability}. Most of these methods, however, are limited to continuous variables %
and thus cannot handle categorical data, especially in the multivariate setting.

A recent framework explicitly designed to handle categorical data is \emph{entropic causal inference} \cite{kocaoglu2017entropic,kocaoglu2017entropicISIT,compton2021entropic}. At a high level, the underlying assumption of this approach is that true causal mechanisms in nature are often ``simple,'' taking inspiration from the Occam's razor principle. The authors adopt an information-theoretic realization of this principle by using ``entropy" to measure the complexity of a causal model. As we further explore in this work, entropic causal inference provides a means to measure the amount of randomness a generative model would require to produce an observed distribution. As Occam's razor prefers simpler explanations, entropic causal inference prefers generative models with small randomness. We do not expect following this preference to always lead to the discovery of true causal relationships (just as one does not expect a simpler explanation to be always be the correct one), but view this as a guiding intuition that mirrors nature and experimental observations. Our experiments on semi-synthetic data demonstrates that the low-entropy assumption indeed holds in certain settings.  

Previously, the framework was applied to discovering the causal direction between two random variables given that the amount of randomness in the true causal relationship is small. %
We focus on extending this framework to learn larger causal graphs instead of just cause-effect pairs. Suppose the observed variables have $n$ states. Our contributions follow:
\begin{enumerate}[parsep=0pt]
    \item We show pairwise identifiability with strictly relaxed assumptions compared to the previously known results. We enable learning the causal graph $X\rightarrow Y$ from observational data even when $(i)$ the cause variable $X$ has low entropy of $o(\log(n))$ and $(ii)$ the exogenous noise has non-constant entropy, i.e., $\mathcal{O}(1)\ll H(E)=o(\log\log(n))$.
    \item We show the first identifiability result for causal graphs with more than two observed variables, with a new peeling algorithm for general graphs. 
    \item  We propose a heuristic algorithm that 
    searches over all DAGs and outputs the one that requires the minimum entropy to fit to the observed distribution.
    \item We experimentally evaluate our algorithms and show that entropic approaches outperform the discrete additive noise models in synthetic data. We also apply our algorithms on semi-synthetic data using the \emph{bnlearn}\footnote{\url{https://www.bnlearn.com/bnrepository/}} repository and demonstrate the applicability of low-entropy assumptions and the proposed method.
\end{enumerate}
\section{Related Work}
Learning causal graphs from observational data has been studied extensively in the case of continuous variables. \citet{loh2014high} proposes an algorithm for learning linear structural causal models when the error variance is known. Similarly, \citet{peters2014identifiability}
show that linear models with Gaussian noise become identifiable if the noise variance is the same for all variables. A more general modeling assumption is the additive noise model (ANM). In \cite{shimizu2006linear}, the authors show that for almost all linear causal models, the causal graph is identifiable if the additive exogenous noise is non-Gaussian. 

In the case of discrete and/or categorical variables, causal discovery literature is much more sparse. \cite{cai2018causal} introduces a method for categorical cause-effect pairs when there exists a hidden intermediate representation that is compact. In \cite{daniusis2010inferring,janzing2015justifying}, the authors propose using an information-geometric approach called IGCI that is based on independence of cause and the causal mechanism. However IGCI can provably recover the causal direction only in the case of deterministic relations. An extension of additive noise models to discrete data is done in \cite{peters2011causal} where identifiability is shown between two variables. The authors also propose using the regression-based algorithm of \cite{mooij2009regression} (which made continuous domain ANM applicable to arbitrary graphs) for the discrete setting as well. Without specific assumptions on the graph and the generative mechanisms, this is a heuristic algorithm, i.e., identifiability in polynomial time is not guaranteed by discrete ANM on graphs with more than two nodes. 

One related idea is to use Kolmogorov complexity to determine the simplest causal model~\cite{janzing2010causal}. Minimum-description length has been used as a substitute for Kolmogorov complexity (which is not computable) in a series of follow-up papers~\cite{budhathoki2017mdl,marx2021formally}. Our extension of entropic causal inference to graphs can be seen as an information-theoretic realization of this promise, where the complexity of the causal model is captured by its entropy. Other information-theoretic concepts such as interaction information \cite{ghassami2017interaction} and directed information \cite{etesami2014directed} have also been studied in the context of causality.

\section{Background and Notation}
\label{section:background}
\textbf{Causal Graphs and Learning:} 
Consider a %
causal system where each variable is generated as a function of a subset of the rest of the observed variables and some additional randomness. Such systems are modeled by structural equations and are called structural causal models (SCMs). Let $X_1, X_2, \dots, X_{\lvert V\rvert}$ be the set of observed variables. Accordingly, there exists functions $f_i$ and exogenous noise terms $E_i$ such that $X_i = f_i(\pa_i, E_i)$. This equation in a causal system should be understood as an assignment operator since changing $\pa_i$ affects $X_i$ whereas changing $X_i$ does not affect $\pa_i$. We say the set of variables $\pa_i$ \emph{cause} $X_i$. A directed acyclic graph (DAG) can be used to summarize these causal relations, which is called the \emph{causal graph}. We denote the causal graph by $G = (V, \mathcal{E})$ where $V$ is the set of observed nodes and $\mathcal{E}$ is the set of directed edges. There are $\lvert V\rvert$ nodes, $X_1, X_2, \dots, X_{\lvert V\rvert}$, where each $X_i$ corresponds to an observable random variable. Edges are constructed by adding a directed arrow from every node in the set $\pa_i$ to $X_i$ for all $i$. $\pa_i$ then becomes the set of parents of $X_i$ in $G$. We assume causal sufficiency, i.e., that there are no unobserved confounders, and that there is no selection bias. Under these assumptions, %
$\pa_i \indep E_i$. %
Additionally, for simplicity of presentation we denote the number of states of all variables as $n$ (i.e. $|X_i|=n$ for all $i$). Note that our proofs do not require each observed variable to strictly have the same number of states; we merely need them scale together, i.e. if $X_1\in [n_1]$ and $X_2\in [n_2]$ then $\frac{n_1}{n_2} = \Theta(1)$. All big-o notation in the paper is relative to $n$.
Our goal is to infer the directed causal graph from the observed joint distribution $p(X_1,X_2,\hdots, X_{\lvert V\rvert})$ using the assumptions of the entropic causality framework as needed. 

Even without making any parametric assumptions, we can learn some properties of the graph  %
from purely observational data. Algorithms relying on conditional independence tests (such as the PC or IC algorithms \cite{spirtes2000causation,pearl2009causality}) can identify the \emph{Markov equivalence class} (MEC) of $G$, i.e. the set of graphs that produce distributions with the exact same set of conditional independence relations. 
Moreover, a graph's Markov equivalence class uniquely determines its \emph{skeleton} (the set of edges, ignoring orientation) and \emph{unshielded colliders} (the induced subgraphs of the form $X \rightarrow Z \leftarrow Y$). A Markov equivalence class is summarized by %
a mixed graph called the \emph{essential graph}, which has the same skeleton and contains a directed edge if   %
all graphs in the equivalence class orient the edge in the same direction. All other edges are undirected. The problem of determining the true causal graph from observational data thus reduces to orienting these remaining undirected edges, given enough samples to perform conditional independence tests reliably. 

\textbf{Entropic Causality Framework: }
Without interventional data, one needs additional assumptions to refine the graph structure further than the equivalence class. %
The key assumption of the entropic causality framework is that, in nature, true causal models are often ``simple.'' In information-theoretic terms, this is formalized as the entropy of exogenous variables often being small. Previous work has shown guarantees for identifying the direction between a causal pair $X,Y$ where $Y=f(X,E), X\indep E$ for some exogenous variable $E$ from observational data. %
The work of \cite{kocaoglu2017entropic} showed that when the support size of the exogenous variable (i.e. the Renyi-0 entropy $H_0(E) = |E|$) is small, with probability $1$ it is impossible to factor the model in the reverse direction (as $X=g(Y,\tilde{E})$) with an exogenous variable with small support size (i.e. $|\tilde{E}|$ must be large). Thus, one can identify the causal pair direction by fitting the smallest cardinality exogenous variable in both directions and checking which direction enables the smaller cardinality. \citet{kocaoglu2017entropic} conjectured that this approach also would work well for Shannon entropy. \citet{compton2021entropic} resolved this conjecture, showing identifiability for causal pairs under particular generative assumptions. 

\begin{definition}[$(\alpha,\beta)$-support]
A discrete random variable $X$ is said to have $(\alpha,\beta)$-support if at least $\alpha$ states of $X$ have probability of at least $\beta$.
\end{definition}
\cite{compton2021entropic} assumes that the cause variable $X$ has $(\Omega(n),\Omega(\frac{1}{n}))$-support and that the Shannon entropy of the exogenous variable (i.e. $H(E) = H_1(E)$) is small. In this paper, we say an event holds ``with high probability'' if the probability of the event not holding is bounded by $O\left(\frac{1}{n^{\alpha}}\right)$ for any constant $\alpha > 0$. \cite{compton2021entropic} showed that when $H(E)=O(1)$, $H(\tilde{E})=\Omega(\log(\log(n)))$ with high probability. The high probability statement is with respect to the selection of the function $f$, i.e., for all but a vanishing (in $n$) fraction of functions $f$, identifiability holds. Moreover, they showed that this approach was robust to only having a polynomial number of samples, whereas the result of \cite{kocaoglu2017entropic} that assumed small $|E|$ required knowing the exact joint distribution, e.g. from an oracle or infinite samples. 

Algorithmically, one can provably orient causal pairs under the assumptions of \cite{compton2021entropic} by comparing the minimum entropy exogenous variable needed to factor the pair in both directions (i.e. comparing the minimum $H(E)$ for which there exists a function $f$ and $E \indep X$ such that $Y=f(X,E)$, and the analogous quantity minimizing $H(\tilde{E})$). Finding this minimum entropy exogenous variable is an optimization problem equivalent to the \emph{minimum-entropy coupling problem} for the conditionals, specifically, the minimum $H(E)$ in the direction $X\rightarrow Y$ is the same as the minimum-entropy coupling for $[(Y|X=i)],  \forall i\in[n]$ \cite{kocaoglu2017entropic,cicalese2017find,painsky2019innovation}. Accordingly, we denote the entropy of the minimum-entropy coupling for a variable $X$ conditioned on a set $S$ as $\mec(X\vert S)$. \citet{compton2021entropic} showed $\mec(Y\vert X) < \mec(X \vert Y)$ with high probability.

\section{Tightening the Entropic Identifiability Result for Cause-Effect Pairs}
In this work, we leverage results for the bivariate entropic causality setting to learn general graphs. Theorem 1 of \cite{compton2021entropic} provides identifiability guarantees in the bivariate setting. 
However, the assumptions of their theorem are not general enough to imply an identifiability result on graphs with more than $2$ nodes. 
Specifically, a fundamental challenge in applying bivariate causality to discover each edge in a larger graph is confounding due to the other variables, i.e., when one considers a pair of variables, the remaining variables act as confounders. These confounders cannot be controlled for since we do not know the causal graph and conditioning on other variables unknowingly creates additional dependencies. One natural approach to handle confounding is to recursively discover source nodes by conditioning on the common causes that are discovered so far in the graph. This idea will form the basis for our peeling algorithm to be proposed in Section \ref{subsection:peeling}. %
We are interested in learning graphs where the exogenous variable for every node has small entropy (in particular, $H(E_i)=o(\log(\log(n)))$). When conditioning on the source nodes, some nodes $X$ (e.g. the children of the source nodes) will thus have conditional entropies of order $H(X|\mathrm{sources})=o(\log(\log(n)))$ since for the children of source nodes, the only remaining randomness on $X$ will be due to the low-entropy exogenous variable. This creates problems when attempting to orient edges connected to these variables conditioned on the source nodes.
Specifically, Theorem 1 of \citet{compton2021entropic} requires the cause variable $X$ to have $(\Omega(n),\Omega(\frac{1}{n}))$-support which enforces $H(X) = \Omega(\log(n))$ -- and this is not satisfied for the above nodes with $o(\log(\log(n)))$ entropy. 

In the following bivariate result, we instead only require $(\Omega(n),\Omega(\frac{1}{n \log (n)}))$-support, and simultaneously relax the exogenous variable constraint from $H(E) = O(1)$ to $H(E) = o(\log(\log(n)))$. This condition can be satisfied for $X$ with $H(X) = O(1)$ as needed.
\begin{theorem}
Consider the SCM $Y=f(X,E), X\indep E$, where $X,Y\!\in\![n], E\!\in\! [m]$. Suppose $E$ is any random variable with entropy $H(E)=o(\log(\log(n)))$. Let $X$ have $(\Omega(n),\Omega(\frac{1}{n \log(n)}))$-support. Let $f$ be sampled uniformly randomly from all mappings $f\!:\![n]\!\times\![m]\!\rightarrow\! [n]$. Suppose $n$ is sufficiently large. Then, with high probability, any $\tilde{E}$ that satisfies $X \!= \!g(Y,\tilde{E}),\tilde{E}\indep Y$ for some $g$, entails $H(\tilde{E})\!\geq\! \Omega( \log(\log(n)))$. 
\label{theorem:pairwise}
\end{theorem}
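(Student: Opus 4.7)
I plan to adapt the minimum-entropy-coupling strategy of \citet{compton2021entropic} to the weaker support and entropy conditions here. As an overall reduction: for any $\tilde{E}$ with $X = g(Y,\tilde{E})$ and $\tilde{E}\indep Y$, each conditional $X\mid Y=y$ is a deterministic push-forward of $\tilde{E}$ via $g(y,\cdot)$, so $H(\tilde{E}) \ge \mec(X\mid Y) \ge \max_{y} H(X\mid Y=y)$; it therefore suffices to exhibit, with high probability over $f$, some $y^*$ with $H(X\mid Y=y^*) = \Omega(\log\log n)$.

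I would proceed as follows. The mode $e_0$ of $E$ satisfies $P(E=e_0)\ge 2^{-H(E)}=(\log n)^{-o(1)}$, using $H_\infty(E)\le H(E)=o(\log\log n)$. On the $X$ side, the support assumption combined with $\sum_x P(X=x)\le 1$ (which via Markov forces most of the $\Omega(n)$ heavy atoms to have $P(X=x)\le O(1/n)$) lets a dyadic pigeonhole extract a set $B\subseteq[n]$ of size $\Omega(n/\log\log n)$ with $P(X=x)\in[q,2q]$ for some common $q\in[\Omega(1/(n\log n)),\,O(1/n)]$. Since $f$ is uniformly random, $\{f(x,e_0):x\in B\}$ are i.i.d.\ uniform on $[n]$, so a standard balls-and-bins maximum-load bound yields w.h.p.\ a bin $y^*$ with preimage $A\subseteq B$ of size $k=\Omega(\log n/\log\log n)$. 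Decomposing $P(Y=y^*)=M+R$ into the contribution $M\le 2qk\,P(E=e_0)$ from $A\times\{e_0\}$ and the residual $R$ from all other $(x,e)$ pairs (which concentrates around its mean $\le 1/n$ by Bernstein's inequality with a union bound over $y^*$), each $x\in A$ satisfies $P(X=x\mid Y=y^*)\ge qP(E=e_0)/(M+R)$. When $M$ dominates $R$---the Compton et al.\ regime $q=\Omega(1/n)$ with $P(E=e_0)=\Omega(1)$---these conditional masses are $\Omega(1/k)$, and the entropy bound $\Omega(\log k)=\Omega(\log\log n)$ follows immediately.

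The principal obstacle is the new regime $q\approx\Omega(1/(n\log n))$ with $P(E=e_0)=(\log n)^{-o(1)}$: here $R$ dominates $M$, the conditional mass on $A$ is only $1/\mathrm{polylog}(n)$, and the naive support-entropy estimate yields merely $o(1)$. I plan to close the gap via the tighter min-entropy identity $H(\tilde{E})\ge H_\infty(\tilde{E})\ge -\log\max_x P(X=x\mid Y=y^*)$: one argues that for the chosen $y^*$, no single $x$ accumulates more than a $1/\mathrm{polylog}(n)$ fraction of the conditional mass, by separately controlling the heavy $X$-atoms (finitely many, each routed to $y^*$ with low probability under the random $f$) and the light atoms (whose contributions are dispersed across many $x$'s by the random-function structure). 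Executing this case analysis while carefully tracking the extra $\log n$ factor relative to the original bivariate proof is the main technical hurdle.
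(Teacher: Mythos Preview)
Your reduction to $H(\tilde{E})\ge\max_y H(X\mid Y=y)$, the mode-of-$E$ bound, the plateau extraction, and the balls-and-bins search for a heavily-loaded bin $y^*$ are all in line with the paper. You have also correctly identified the obstacle: under the relaxed support assumption the residual mass $R$ can swamp the plateau contribution $M$, so merely counting the $k$ plateau atoms in $A$ is insufficient.

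The gap is in your plan to close this obstacle. Reducing to the min-entropy of $X\mid Y=y^*$ is fine and is essentially equivalent to what the paper does, but your two-case sketch does not carry it. First, ``heavy $X$-atoms (finitely many)'' is false: nothing rules out $\Theta(\log^c n)$ atoms with $P(X=x)\ge 1/\log^c n$, and a single $(x,e_0)$ ball of mass $(\log n)^{-O(1)}$ landing in $y^*$ would alone force $P(X=x\mid Y=y^*)\approx 1$ since $P(Y=y^*)\approx 1/n$. So you cannot dismiss these with a union bound over $O(1)$ atoms; you must argue that \emph{most} bins avoid all such heavy contributions, and then show that one of those avoiding bins is simultaneously heavily loaded by plateau balls. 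Second, ``light atoms are dispersed'' is too coarse: between the truly heavy atoms and the light ones sits a large class of balls whose individual mass already exceeds the threshold $\Theta(1/(n\log n))$ you implicitly need---there can be $\Theta(n\log n)$ such balls, and a coupon-collector-type argument is required to guarantee a polynomial fraction of bins receives none of them.

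The paper handles this with a three-way classification of non-plateau balls---\emph{dense} (from atoms with $P(X=x)\ge 1/\log^3 n$), \emph{large} (individual ball mass above a threshold $\mathcal{T}/2\asymp 1/(n\log n)$), and \emph{small}---together with a \emph{surplus} quantity $z_y=\sum_{x\notin S}\max(0,P(X=x,Y=y)-\mathcal{T})$ that captures exactly the hurtful part of $R$. The key tools you are missing are: (i) a ``limited expansion'' lemma showing that the low entropy of $E$ prevents the $\le\log^3 n$ dense atoms from contaminating more than $n/4$ bins; (ii) an empty-bins bound showing $\Omega(n^{3/4})$ bins avoid all large balls; and (iii) a martingale argument bounding the total small-ball surplus. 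Intersecting these sets and then running the balls-and-bins max-load argument on plateau balls over the surviving good bins yields a $y^*$ where $z_{y^*}$ is negligible compared to the plateau mass; the entropy bound then follows because every non-plateau $x$ is capped at $\mathcal{T}$ in the ``helpful'' direction.
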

While interesting in its own right, we apply this tightened bivariate identifiability result to the general graph case in \cref{section:learning-graphs}.
Note that the assumption of a uniformly random $f$ (also used in \cite{compton2021entropic}, \cite{kocaoglu2017entropic}) is not meant as a description of how nature generates causal functions, but as the least-restrictive option for putting a measure on the space of possible functions so that high-probability statements can be made rigorously. Theorem \ref{theorem:pairwise} can be immediately adapted to any alternative distribution on the space of $f$ that does not assign any individual value of $f$ probability mass more than $n^{c'}$ times the probability mass assigned by the uniform distribution, for some constant $c'$. %

\paragraph{Proof overview for Theorem \ref{theorem:pairwise}.} Here we provide the intuition behind the proof strategy, the full proof is given in Appendix \ref{app:pairwise}. It is simple to show that the minimum entropy required to fit the function in the incorrect direction, $H(\tilde{E})$, is lower-bounded as $H(\tilde{E}) \ge \max_{y} H(X | Y=y)$. The overarching goal of our proof method is then to show that there exists a state $y'$ of $Y$ such that $H(X | Y=y') = \Omega(\log(\log(n)))$. 

To accomplish this, we start by showing that the $(\Omega(n),\Omega(\frac{1}{n \log(n) }))$-support of $X$ implies existence of a subset $S$ of $\Omega(n)$ states of $X$ that each have probability $\Omega(\frac{1}{n \log(n)})$ and are all relatively close in probability to each other. We call this subset $S$, the \emph{plateau} states. If one envisions them as adjacent in the PMF of $X$, these states would have similar heights and thus look like a plateau. 

Now, we conceptualize the realization of $f$ as a balls-and-bins game, where each element of $X \times E$ (a ball) is mapped i.i.d. uniformly randomly to a state of $Y$ (a bin). Using balls-and-bins arguments, it is our hope to show that there is a bin that receives $\Omega (\frac{\log(n)}{\log(\log(n))})$ \emph{plateau balls} of the form $(X \in S, E=e_1)$, where $e_1$ is the most probable state of $E$, and that this will cause the corresponding conditional distribution to have large entropy. The primary intuition is that a bin receiving many plateau balls would cause the corresponding conditional distribution to have many plateau states that all have near-uniform probabilities, and this near-uniform subset of the conditional distribution would contribute a significant fraction of the probability mass to guarantee that its entropy is large. With the stronger assumptions on $(\alpha,\beta)$-support by \cite{compton2021entropic}, this proof method suffices. However, as we are assuming a weaker notion of $(\alpha,\beta)$-support, it is not clear that the plateau balls would make up a significant fraction of the conditional's mass to guarantee large entropy.

In a sense, the plateau balls are probability masses that are ``helping'' us make some conditional entropy large. The proof of \cite{compton2021entropic} takes the perspective that all remaining mass from non-plateau states are ``hurting'' our effort to make a conditional distribution with large entropy. 
To accommodate our relaxed assumptions, 
we take a more nuanced perspective on \emph{helpful} and \emph{hurtful} mass. Consider a non-plateau state $x$ of $X$ that contributes a small amount of mass towards the conditional distribution corresponding to a state $y$ of $Y$. With the perspective of \cite{compton2021entropic}, this would be viewed as hurtful mass because it is from a non-plateau state of $X$. But intuitively, in terms of its contribution to $H(X|Y=y)$, it does not matter whether $x$ is a plateau state or not. Through careful analysis, we can show that if $P(X=x | Y=y)$ is small then they are not ``too hurtful.''
We follow this intuition to make a new definition of the good mass, where we set a threshold $\mathcal{T}$, define the first $\mathcal{T}$ mass we receive from a non-plateau state of $X$ as \emph{helpful} mass for the state of $Y$, and the surplus beyond $\mathcal{T}$ from the non-plateau state of $X$ as \emph{hurtful} mass for the state of $Y$. As before, all mass from plateau states will be helpful. With this new perspective and a careful analysis, we show that there is a state $y'$ that receives many plateau balls, and has much more helpful mass than hurtful mass. This then enables us to show that $H(X|Y=y')$ is large, proving the theorem.

\section{Learning Graphs via Entropic Causality}
\label{section:learning-graphs}

Now, we focus on how to leverage the capability of correctly orienting causal pairs to learn causal graphs \emph{exactly}. 
In comparison, traditional structure learning methods only learn the Markov equivalence class of graphs from observational data. For example, given the line graph $X \rightarrow Y \rightarrow Z$, such methods would deduce the true graph is either $X \rightarrow Y \rightarrow Z$ or $X \leftarrow Y \leftarrow Z$, but not that it is exactly $X \rightarrow Y \rightarrow Z$.

As was discussed in \cref{section:background}, learning the entire graph can be reduced to correctly orienting each edge in the skeleton. However, we cannot naively use a pairwise algorithm, as the rest of the observed variables can act as confounders. We examine how different pairwise oracles can enable us to characterize the value of using minimum entropy couplings to learn causal graphs. One example of a natural-feeling oracle is one that can correctly orient any edges that have no active confounding. Such an oracle enables learning of directed trees and complete graphs. 
However, it cannot be used to learn all general graphs (see \cref{section:non-identifiable} for an example). We propose an alternative oracle, that can distinguish between a source node and any node it can reach:

\begin{definition}[Source-pathwise oracle]
A source-pathwise oracle for a DAG $G$ always orients $A \rightarrow B$ if $A$ is a source and there exists a directed path from $A$ to $B$ in $G$.
\end{definition}

Let us formalize our entropic method for causal pairs as the following oracle:

\begin{definition}[MEC oracle]
A minimum entropy coupling (MEC) oracle returns $X\rightarrow Y$ if $\mec(Y\vert X)<\mec(X\vert Y)$ and $X\leftarrow Y$ otherwise, given the joint distribution $p(X,Y)$.
\end{definition}

We aim to show that our MEC oracle is a source-pathwise oracle for graphs with the following assumptions:

\begin{assumption}[Low-entropy assumption]
\label{assumption:graph}
Consider an SCM where $X_i = f_i(\pa_i,E_i), \pa_i \indep E_i, \forall i$, where $X_i \in [n], E_i \in [m]$. Suppose $|V|=O(1)$, $H(E_i)=o(\log(\log(n)))$ and $E_i$ has $(\Omega(n), \Omega(\frac{1}{n \log(n)}))$-support for all $i$, and $f_i$ are sampled uniformly randomly from all mappings $f_i\!:\![m]\!\times\![n]^{|\pa_i|}\!\rightarrow\! [n]$.
\end{assumption}

We are now ready to show the main result of our paper. We show that, under certain generative model assumptions, applying entropic causality on pairs of observed variables acts as a source-pathwise oracle for DAGs:

\begin{theorem}
\label{theorem:dag-oracle}
For any SCM under \cref{assumption:graph}, the MEC oracle is a source-pathwise oracle for the causal graph 
with high probability for sufficiently large $n$.
\end{theorem}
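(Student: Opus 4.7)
The plan is to fix an arbitrary source $A$ and descendant $B$, show $\mec(B|A) < \mec(A|B)$ with high probability, and conclude by union-bounding over the $O(|V|^2) = O(1)$ such pairs (using $|V| = O(1)$). The MEC oracle orients $A \to B$ precisely when $\mec(B|A) < \mec(A|B)$, so this suffices.

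For the upper bound, I would let $E'$ be the tuple of all exogenous noises except $E_A$. Mutual independence of the $E_i$'s together with $A$ being a source (so $A$ depends only on $E_A$) gives $E' \indep A$, and iterating the structural equations along a topological order realizes $B$ as a deterministic function $g(A, E')$. Hence
\[
\mec(B|A) \;\leq\; H(E') \;\leq\; \sum_{i \neq A} H(E_i) \;=\; O(1)\cdot o(\log\log n) \;=\; o(\log\log n).
\]

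For the lower bound $\mec(A|B) = \Omega(\log\log n)$ with high probability, I would adapt the plateau / balls-and-bins proof architecture of \cref{theorem:pairwise}. Since $\mec(A|B) \geq \max_b H(A|B=b)$, it suffices to exhibit $b^*$ with $H(A|B=b^*) = \Omega(\log\log n)$. First I would show that the marginal of $A$ inherits $(\Omega(n), \Omega(1/(n\log n)))$-support: because $f_A$ is a uniformly random map $[m] \to [n]$ and $E_A$ has plateau support by \cref{assumption:graph}, dropping the $\Omega(n)$ plateau states of $E_A$ into the $n$ states of $A$ yields $\Omega(n)$ plateau states $S$ of $A$ with high probability. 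Let $e^* = \arg\max_{e'} P(E' = e')$ and call $\{(a, e^*) : a \in S\}$ the plateau balls. I would then trace these $\Omega(n)$ plateau balls through the $O(1)$ intermediate structural equations en route to $B$: at each intermediate node $X_i$, the uniformly random $f_i$ maps the current ball tuples to $[n]$, and random-function / birthday-type concentration shows that either $\Omega(n)$ distinct tuples survive to the next layer, or some tuple already collects $\Omega(n)$ of the plateau balls (which only helps). A final application of $f_B$ reduces to the single-layer balls-and-bins step of \cref{theorem:pairwise} and produces a state $b^*$ collecting $\Omega(\log n / \log\log n)$ plateau balls, each corresponding to a distinct $a \in S$. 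The helpful/hurtful mass dichotomy of \cref{theorem:pairwise} then handles the rest: non-plateau mass (from $a \notin S$, or from $e' \neq e^*$, the latter being small because $H(E') = o(\log\log n)$ forces $\max_{e'} P(E'=e') \geq 1/(\log n)^{o(1)}$) is too diffuse to dominate the conditional at $b^*$, yielding $H(A|B=b^*) = \Omega(\log\log n)$.

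The main obstacle is precisely this propagation step. Composing $O(1)$ uniformly random $f_i$'s introduces dependencies between the images of different plateau balls, so I must verify that the balls either stay spread or collapse helpfully at every layer. Because $|V| = O(1)$, only a constant number of compositions occur, and each preserves (or improves) the worst-case contribution to a single bin, so the final $\Omega(\log n / \log\log n)$ count survives. The careful bookkeeping — conditioning on intermediate $f_i$'s, tracking concentration for the remaining randomness, and porting the helpful/hurtful dichotomy to the composed $g$ — is where the main technical work lies, and is the analogue, in the multivariate setting, of verifying the hypotheses of \cref{theorem:pairwise} for the effective bivariate channel from $A$ to $B$.
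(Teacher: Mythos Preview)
Your overall architecture matches the paper: upper-bound $\mec(B\mid A)$ by $H(E') = o(\log\log n)$, lower-bound $\mec(A\mid B)$ via $\max_b H(A\mid B=b)$, and union-bound over $O(1)$ source--descendant pairs. The upper bound and the reduction to finding a good $b^*$ are exactly as in the paper.

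The gap is in your propagation step, and it is not merely bookkeeping. Your ``spread or collapse helpfully'' dichotomy tracks only where the \emph{plateau balls} land; it does not control the non-plateau \emph{surplus} mass that travels with them through the graph. The claim that ``a final application of $f_B$ reduces to the single-layer balls-and-bins step of \cref{theorem:pairwise}'' is where the argument breaks: after composing through intermediate nodes, the effective map $g:A\times E'\to B$ is \emph{not} a uniformly random function, so the independence underpinning the helpful/hurtful dichotomy of \cref{theorem:pairwise} no longer holds. Concretely, a non-plateau ball that collided with a plateau ball $x$ at some intermediate node is now correlated with $x$'s placement at $B$, and such correlated surplus can swamp the plateau contribution at $b^*$. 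Your sketch gives no mechanism to bound this.

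The paper's resolution is a genuine additional idea: the \emph{Random Function Graph Decomposition}, which colors nodes so that a node receiving a \emph{new} color behaves like the diamond-graph case (plateau balls arrive nearly independently, because sharing a configuration requires coincidence in two distinct colors), while a node \emph{inheriting} its color behaves like the line-graph case (desirable bins ``survive'' to the next node). To make the diamond-type step rigorous, the paper inductively maintains, for a large subset of plateau balls, bounds on the \emph{related mass} $\relate_1(x)=O(1/n)$ and $\relate_2(x)=O(1/n^2)$; the latter upper-bounds the surplus that can share a configuration with $x$ at $\colorroot(Y)$, which is what lets the single-layer surplus analysis go through there. The survival argument within $Y$'s color then finishes. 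Your proposal identifies the right obstacle but is missing this two-regime decomposition and the related-mass bookkeeping that controls correlated surplus; without them the reduction to \cref{theorem:pairwise} does not go through.
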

Characterizing entropic causality as a source-pathwise oracle enables us to identify the true causal graph for general graphs. 
We outline the key intuitions of our proof:


\begin{figure*}[ht!]
	\centering
			\subfigure[Line Graph]{	\begin{tikzpicture}[
				observable/.style={circle, draw=black!60, fill=gray!5, very thick, minimum size=8mm},
				]
				\node[observable,fill=yellow] (X) {$X_{\text{src}}$};
				\node[observable,fill=skyblue] (Z) [below = 0.4cm  of X] {$X_2$};
				\node[observable,fill=skyblue] (T) [right= 0.4cm of X] {$X_3$};
				\node[observable,fill=skyblue] (Y)[below = 0.4cm of T]{$Y$};

				\draw[->,line width=0.5mm] (X.south) -- (Z.north);
				\draw[->,line width=0.5mm] (Z.north east) -- (T.south west);
				\draw[->,line width=0.5mm] (T.south) -- (Y.north);
			\end{tikzpicture}
			\label{fig:line}
			}
			\hspace{0.6in}
			    \subfigure[Diamond Graph]{	\begin{tikzpicture}[
				observable/.style={circle, draw=black!60, fill=gray!5, very thick, minimum size=8mm},
				]
				\node[observable,fill=yellow] (X) {$X_{\text{src}}$};
				\node[observable,fill=skyblue] (Z) [below right= 0.17cm and 0.45cm of X] {$X_3$};
				\node[observable,fill=vermillion] (T) [below left= 0.17cm and 0.45cm of X] {$X_2$};
				\node[observable,fill=bluishgreen] (Y)[below = .7cm of X]{$Y_{\hspace{0.02in} }$};

				\draw[->,line width=0.5mm] (X.south west) -- (T.north east);
				\draw[->,line width=0.5mm] (X.south east) -- (Z.north west);
				\draw[->,line width=0.5mm] (T.south east) -- (Y.north west);
				\draw[->,line width=0.5mm] (Z.south west) -- (Y.north east);
			\end{tikzpicture}
			\label{fig:diamond}
			}\hspace{0.6in}
			\subfigure[Hall Graph]{	\begin{tikzpicture}[
				observable/.style={circle, draw=black!60, fill=gray!5, very thick, minimum size=8mm},
				]
				\node[observable,fill=yellow] (X) {$X_{\text{src}}$};
				\node[observable,fill=skyblue] (X2) [right = 0.4cm  of X] {$X_2$};
				\node[observable,fill=vermillion] (X3) [below = 0.4cm of X2] {$X_3$};
				\node[observable,fill=bluishgreen] (X4)[right = 0.4cm of X2]{$X_4$};
				 \node[observable,fill=skyblue] (X5)[below = 0.4cm of X4]{$X_5$};
                \node[observable,fill=orange] (Y)[right = 0.4cm of X5]{$Y$};

				\draw[->,line width=0.5mm] (X.east) -- (X2.west);
				\draw[->,line width=0.5mm] (X.south east) -- (X3.north west);
				\draw[->,line width=0.5mm] (X2.east) -- (X4.west);
				\draw[->,line width=0.5mm] (X2.south east) -- (X5.north west);
				\draw[->,line width=0.5mm] (X3.north east) -- (X4.south west);
				\draw[->,line width=0.5mm] (X4.south east) -- (Y.north west);
				\draw[->,line width=0.5mm] (X5.east) -- (Y.west);				\end{tikzpicture}
			\label{fig:hall}
			}
			\caption{Graphs colored according to the Random Function Graph Decomposition (Definition \ref{def:rfgd}) used in the proof of Theorem \ref{theorem:dag-oracle}.}
\vspace{-.1in}
\end{figure*}

\paragraph{Proof overview for Theorem \ref{theorem:dag-oracle}.} Suppose $\Xsrc$ is a source and $Y$ is a node such that there is a path from $\Xsrc$ to $Y$. To show the MEC oracle is a source-pathwise oracle, we show that $\mec(\Xsrc|Y) > \mec(Y | \Xsrc)$. As in \cref{theorem:pairwise}, we will accomplish this by showing there is a state $y'$ of $Y$ such that $H(\Xsrc|Y=y') = \Omega(\log(\log(n)))$. 

We begin by conceptualizing the realization of all $f_i$ as a balls-and-bins game. Every node $X_i$ is a uniformly random function $f_i$ of $\pa(X_i) \cup E_i$. Let us define each ball as the concatenation of $X$ and all $E_i$ other than $E_{\textrm{src}}$. More formally, we denote each ball as $(X=x, E_1=e_1, \dots, E_{\textrm{src}-1}=e_{\textrm{src}-1}, E_{\textrm{src}+1}=e_{\textrm{src}+1}, \dots, E_{|V|}=e_{|V|})$, and each ball has a corresponding probability mass of $P(X=x) \times \Pi_{i \ne \textrm{src}} P(E_{i}=e_i)$. To view the realization of $f_i$ as a balls-and-bins game, we consider the nodes in an arbitrary topological order for the graph. When we process a node $X_i$, we group balls according to their configuration of $(\pa(X_i) \cup E_i)$. This is because balls with the same configuration correspond to the same cell of the function $f_i$. For each group of balls that all share the same configuration, we uniformly randomly sample a state of $X_i$ to assign all the balls in the group. This is essentially realizing one cell of $f_i$. Groups are assigned independently of other groups. In this sense, each realization of $f_i$ is a balls-and-bins game where we group balls by their configuration, and throw them together into states of $X_i$ (bins). 

Let us define plateau balls as those who have a plateau state of $\Xsrc$ and have the most probable state of $E_i$ for every $i \ne \textrm{src}$. As was done in \cref{theorem:pairwise}, our goal is to show that there will be a state $y'$ of $Y$ such that $y'$ receives many plateau balls and much more helpful mass than hurtful mass. However, it is not immediately clear how to show this in the graph setting. For intuition, we explore two special cases.

Consider the case of a line graph (Figure \ref{fig:line}). For simplicity of this proof overview, assume all $X_i$ other than $\Xsrc$ are deterministic functions of their parents (i.e., $H(E_i)=0$). Using techniques similar to \cref{theorem:pairwise}, we can show there are many bins of $X_2$ that receive many plateau balls and much more helpful mass than hurtful mass. Moreover, we can then use similar techniques to show a non-negligible proportion of those bins will have their corresponding balls mapped together to a bin of $X_3$ where it does not encounter much hurtful mass. We can repeat this argument again to show some of these desirable bins ``survive'' from $X_3$ to $Y$. While only a scalingly small fraction of these desirable bins ``survive'' each level, this still ensures the survival of at least one bin if the number of vertices is constant. This will accomplish our goal of having a state $y'$ of $Y$ with many plateau balls and much more helpful mass than hurtful mass.

On the other hand, consider the case of a diamond graph in Figure \ref{fig:diamond}. Again, assume for simplicity that all $X_i$ other than $\Xsrc$ are deterministic functions of their parents. Note that when we realize $f_Y$, two balls are always mapped independently unless they share the same configuration of $\pa(Y) = \{X_2, X_3\}$. We observe that $X_2$ and $X_3$ are both independent deterministic functions of $\Xsrc$. Accordingly, the probability of two particular states $x,x' \in \Xsrc$ satisfying $f_2(x)=f_2(x')$ and $f_3(x)=f_3(x')$ is equal to $\frac{1}{n^2}$. Therefore, almost all pairs of balls are mapped to $Y$ from $\Xsrc$ independently. Since everything is almost-independently mapped to $Y$, we can treat it like a bivariate problem and use techniques similar to \cref{theorem:pairwise}.

We are able to prove correctness for both of these graphs, but we do so in ways that are essentially opposite. For the line graph, we utilize strong dependence as bins with desired properties ``survive'' throughout the graph. For the diamond graph, we utilize strong independence as balls are all mapped to $Y$ essentially independently. To combine the intuitions of these two cases into a more general proof, we introduce the Random Function Graph Decomposition: %

\begin{algorithm}[h!]
\small
    \begin{algorithmic}[1]
        
        \STATE $\mathcal{R} \gets \{1,\dots, |V| \}$ \COMMENT{set of remaining nodes}
        \STATE {$\mathcal{I} \gets \emptyset$} \COMMENT{set of pairs found to be conditionally independent}
        \STATE {$\mathcal{T} \gets [$ $] $} \COMMENT{list of nodes in topological order}
        \WHILE{$|\mathcal{R}|>0$}
            \STATE {$\mathcal{N} \gets \emptyset$} \COMMENT{set of nodes discovered as non-sources}
            \STATE{$\mathcal{C} \gets \{1,\dots, |V| \} \backslash \mathcal{R}$} \COMMENT{condition on previous sources}
            \FORALL {$(X_i, X_j) \in \{\mathcal{R} \times \mathcal{R} \}$}
                \IF {$X_i \notin \mathcal{N}$ \AND $X_j \notin \mathcal{N}$ \AND $(X_i, X_j) \notin \mathcal{I}$}
                    \IF{$\CI (X_i, X_j | \mathcal{C})$}
                        \STATE {$\mathcal{I} \gets \mathcal{I} \cup (X_i, X_j)$}
                    \ELSIF{$\oracle(X_i,X_j | \mathcal{C}) \textrm{ orients } X_i \rightarrow X_j$}
                        \STATE {$\mathcal{N} \gets \mathcal{N} \cup \{X_j\}$}
                        \COMMENT{$X_j$ is not a source}
                    \ELSE
                        \STATE {$\mathcal{N} \gets \mathcal{N} \cup \{X_i\}$}
                        \COMMENT{$X_i$ is not a source}
                    \ENDIF
                \ENDIF
            \ENDFOR
            \STATE{$\mathcal{S} \gets \mathcal{R} \backslash \mathcal{N}$} \COMMENT{the remaining nodes that are a source}
            \STATE{$\mathcal{R} \gets \mathcal{R} \backslash \mathcal{S}$} \COMMENT{remove sources from remaining nodes}
            \FORALL{$X_i \in \mathcal{S}$} 
                \STATE {append $X_i$ to $\mathcal{T}$}
            \ENDFOR
        \ENDWHILE
        \COMMENT{Now, $\mathcal{T}$ is a valid topological ordering}
        \FORALL{$(i,j) \in \{1, \dots, |V|\}^2$ where $i < j$}
            \IF {$\CI(\mathcal{T}(i), \mathcal{T}(j) | \{\mathcal{T}(1), \dots, \mathcal{T}(j-1) \} \backslash \mathcal{T}(i))$}
                \STATE {no edge between $\mathcal{T}(i)$ and $\mathcal{T}(j)$}
            \ELSE
                \STATE{orient $\mathcal{T}(i) \rightarrow \mathcal{T}(j)$}
            \ENDIF
        \ENDFOR
    \end{algorithmic}
  \caption{Learning general graphs with oracle
    \label{alg:general}}
\end{algorithm}

\begin{restatable}[Random Function Graph Decomposition]{definition}{defdecomposition}
\label{def:rfgd}
Given a DAG and a pair of nodes $(\Xsrc,Y)$, Random Function Graph Decomposition colors the nodes iteratively following any topological order of the nodes as follows:
\begin{enumerate}
    \item Color the node with a new color if $\Xsrc$ is a parent of the node or if the node has parents of different colors.
    \item Color the node with the color of its parents if all of the node's parents have the same color.
\end{enumerate}
\end{restatable}

Using the Random Function Graph Decomposition, we claim that when a node is assigned a new color as in step $1$, we utilize independence as in the diamond graph (Figure \ref{fig:diamond}), and when a node inherits its color as in step $2$ we utilize dependence as in the line graph in Figure \ref{fig:line}. We illustrate Figure \ref{fig:hall} as an example. With a careful analysis, we utilize these intuitions to prove the MEC oracle is a source-pathwise oracle with high probability.

\subsection{Peeling Algorithm for Learning Graphs}
\label{subsection:peeling}
In the previous section, we have shown how entropic causality can be used as a source-pathwise oracle. Next, we show how to learn general graphs with a source-pathwise oracle. Our algorithm will iteratively determine the graph's sources, condition on the discovered sources, determine the graph's sources after conditioning, and so on. Doing this will enable us to find a valid lexicographical ordering of the graph. Given a lexicographical ordering, we can learn the skeleton with $O(n^2)$ conditional independence tests. %

Now, we outline how we iteratively find the sources. In each stage, we consider all the remaining nodes as candidate sources. It is our goal to remove all non-sources from our set of candidates. To do this, we iterate over all pairs of candidates and do a conditional independence test \emph{conditioned on the sources that are found so far}. If the pair is conditionally independent, we do nothing. We note that this will never happen for a pair where one node is a true source and the other node is reachable from the source through a directed path: Conditioning on previously found sources cannot d-separate such paths. Otherwise, the pair is conditionally dependent. We then use the source-pathwise oracle to orient between the two nodes, and eliminate the sink node of the orientation as a candidate (i.e., if we orient $A \rightarrow B$, we eliminate $B$ as a candidate source).

\begin{figure*}[ht!]
\vskip 0.2in
\begin{center}
\subfigure[]{\label{fig:3_node_triangle_100_HES}\includegraphics[width=0.32\textwidth]{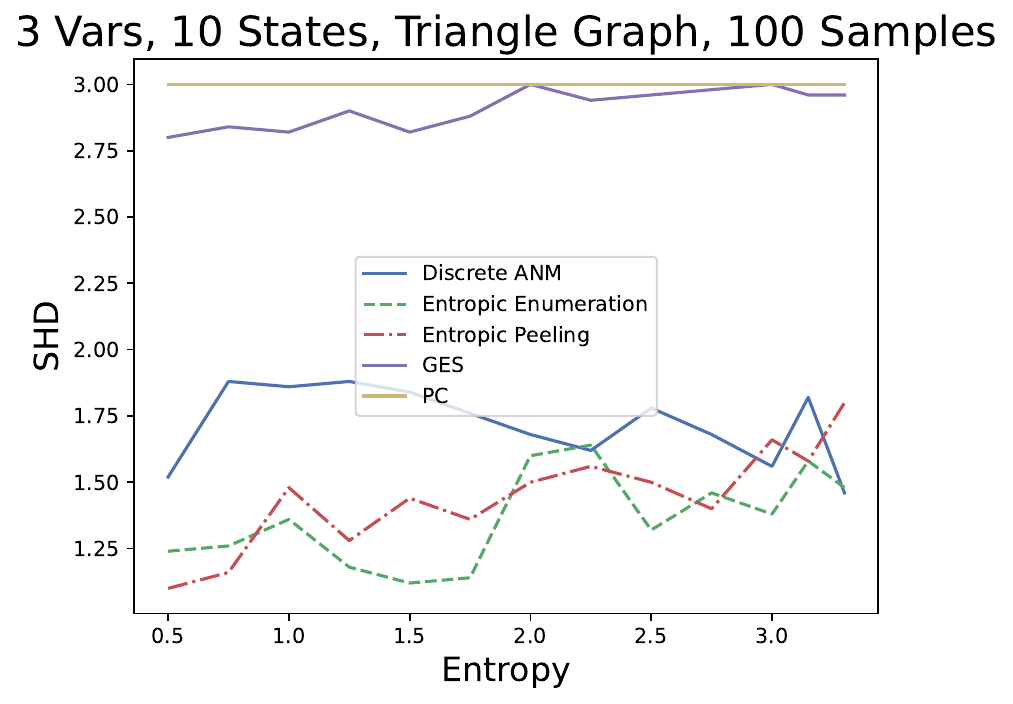}}
\subfigure[]{\label{fig:3_node_triangle_1000_HES}\includegraphics[width=0.32\textwidth]{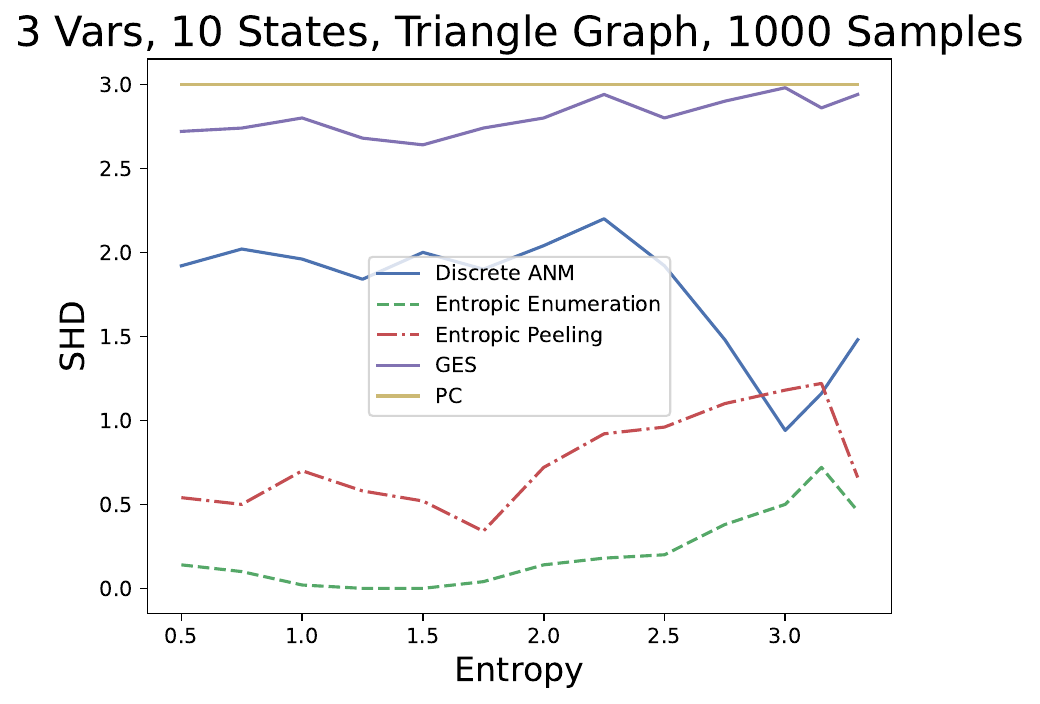}}
\subfigure[]{\label{fig:3_node_triangle_50000_HES}\includegraphics[width=0.32\textwidth]{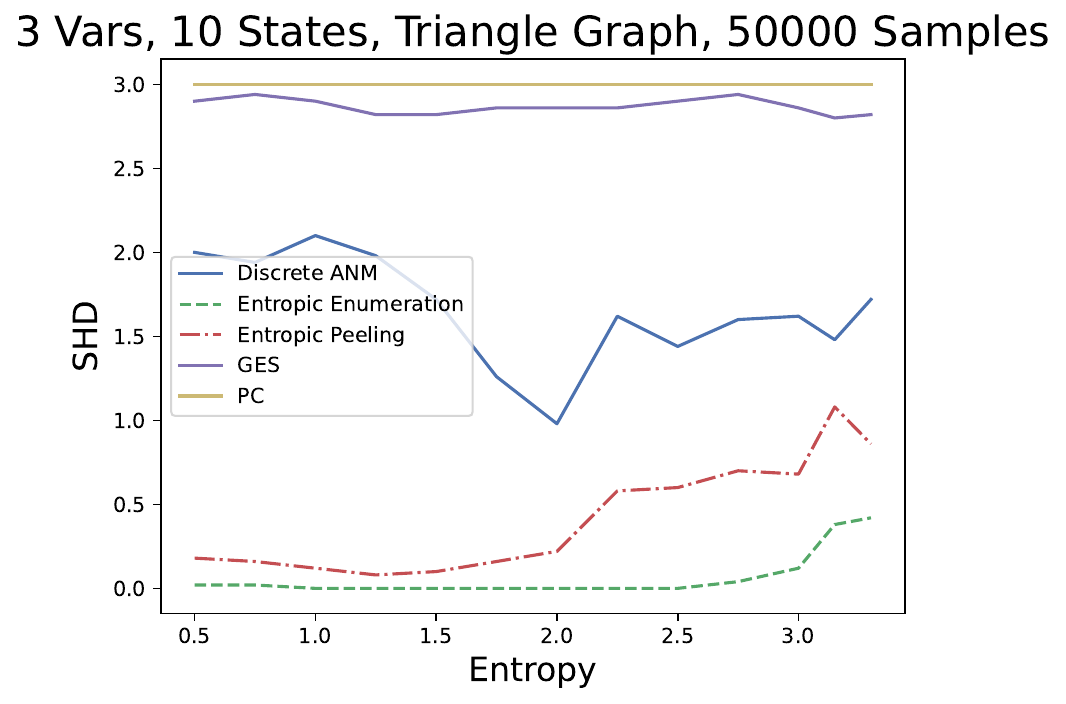}}
\caption{Performance of methods in the unconstrained setting in the triangle graph $X\rightarrow Y\rightarrow Z, X\rightarrow Z$: $50$ datasets are sampled for each configuration from the unconstrained model $X=f(\pa_X,E_X)$. The $x-$axis shows entropy of the exogenous noise. The exogenous noise of the first variable is fixed to be large ($\approx 3.3$ bits), hence it is a high entropy source (HES). Entropic methods consistently outperform the ANM algorithm in almost all regimes.} %
\label{fig:3_node_triangle_entropic_HES}
\end{center}
\vskip -0.2in
\end{figure*}

Suppose two nodes are dependent conditioned on the past sources. Then either the pair contains a source node and a descendant of the source node, or it contains two non-source nodes. In the former case when the pair contains a source node 
the source-pathwise oracle will always orient correctly and the non-source node will be eliminated. In the latter case when the pair are two non-sources we can safely eliminate either as a source candidate and accordingly oracle output is irrelevant. By the end of this elimination process, we can show that only true sources will remain as candidates in each step, which enables us to obtain a valid lexicographical ordering, and thus learn the causal graph.
We summarize this procedure as \cref{alg:general}. The following theorem shows the correctness of \cref{alg:general} given a source-pathwise oracle:
\begin{theorem}
\label{theorem:graph-alg}
\cref{alg:general} learns any faithful causal graph $D=(V,E)$ with $\mathcal{O}(\lvert V\rvert^2)$ calls to a source-pathwise oracle and $\mathcal{O}(\lvert V\rvert^2)$ conditional independence tests.
\end{theorem}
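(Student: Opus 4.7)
The plan is to break the theorem into two pieces: (i) the outer \textbf{while} loop of \cref{alg:general} produces a valid topological ordering $\mathcal{T}$ of $V$, and (ii) the final \textbf{for} loop recovers every edge of $G$ correctly; a simple counting argument then bounds the oracle and CI-test calls by $\mathcal{O}(|V|^2)$.

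For (i), I would induct on the outer \textbf{while} loop with the invariant that at the start of each iteration $\mathcal{C}=V\setminus\mathcal{R}$ is exactly the set of nodes already appended to $\mathcal{T}$, and this prefix is consistent with some topological ordering of $G$. The inductive step reduces to showing that the inner \textbf{for} loop computes $\mathcal{S}=\mathcal{R}\setminus\mathcal{N}$ equal to the set of sources of the sub-DAG $G[\mathcal{R}]$ (equivalently, nodes in $\mathcal{R}$ all of whose parents lie in $\mathcal{C}$). I would establish this via two sub-claims. First, no true source $X_i$ of $G[\mathcal{R}]$ is ever added to $\mathcal{N}$: if $(X_i,X_j)$ is tested, then either the pair is CI given $\mathcal{C}$ (nothing happens), or the source-pathwise oracle applied to $p(X_i,X_j\mid\mathcal{C})$ must orient $X_i\to X_j$, since $X_i$'s parents all lie in $\mathcal{C}$ so $X_i$ is a source of the DAG induced by conditioning on $\mathcal{C}$, and conditional dependence together with faithfulness forces $X_j$ to be reachable from $X_i$. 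Second, every non-source $X_k\in\mathcal{R}$ ends up in $\mathcal{N}$: chasing parents inside $\mathcal{R}$, one reaches a true source $X_\ell$ of $G[\mathcal{R}]$ that is an ancestor of $X_k$; by faithfulness $(X_\ell,X_k)$ is conditionally dependent given $\mathcal{C}$ (the $X_\ell\rightsquigarrow X_k$ chain is unblocked because $\mathcal{C}$ contains no internal node of that path), and when the inner loop reaches this pair the oracle orients $X_\ell\to X_k$, placing $X_k$ in $\mathcal{N}$. Together these two claims give $\mathcal{S}=$ sources of $G[\mathcal{R}]$, so appending $\mathcal{S}$ to $\mathcal{T}$ preserves the topological-prefix invariant; after at most $|V|$ iterations $\mathcal{R}$ is empty and $\mathcal{T}$ is a valid topological order of $G$.

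For (ii), given a valid topological ordering $\mathcal{T}$, the final \textbf{for} loop tests for each pair $i<j$ whether $\mathcal{T}(i)\indep\mathcal{T}(j)\mid \{\mathcal{T}(1),\ldots,\mathcal{T}(j-1)\}\setminus\{\mathcal{T}(i)\}$. If $\mathcal{T}(i)\to\mathcal{T}(j)$ is not an edge of $G$, then all parents of $\mathcal{T}(j)$ lie in the conditioning set, so the local Markov property yields independence (conditioning on further non-descendants of $\mathcal{T}(j)$ is harmless). Conversely, if the edge is present then faithfulness forbids any conditional independence between its endpoints given earlier nodes. Hence the test correctly identifies both the skeleton and, because the ordering is topological, the orientation. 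For the complexity, each oracle call in a stage of the outer loop adds exactly one new node to that stage's $\mathcal{N}$, so the number of oracle calls in stage $k$ is at most $|\mathcal{R}_k|-1$; summing over stages gives $\sum_k |\mathcal{R}_k|=\mathcal{O}(|V|^2)$ even in the worst case where $|\mathcal{R}_k|$ shrinks by one per stage. CI tests returning independence each add a new pair to $\mathcal{I}$, so there are at most $\binom{|V|}{2}$ of them; CI tests returning dependence are in bijection with oracle calls, hence also $\mathcal{O}(|V|^2)$. The final \textbf{for} loop contributes another $\mathcal{O}(|V|^2)$ CI tests, giving the claimed bounds.

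The main subtlety is the first sub-claim in part (i): one must justify that the source-pathwise oracle, defined with respect to the DAG $G$, behaves correctly when invoked on the conditional $p(X_i,X_j\mid\mathcal{C})$. Informally, conditioning on the already-discovered sources $\mathcal{C}$ reduces the oracle's task to the DAG $G[\mathcal{R}]$, and faithfulness rules out spurious orientations when one of the pair is a true source of $G[\mathcal{R}]$; making this precise (and noting that pairs of two non-sources can be oriented arbitrarily without harming correctness, since either elimination is acceptable) is the only non-routine step.
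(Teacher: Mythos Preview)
Your proposal is correct and follows essentially the same approach as the paper's proof: induct on the \textbf{while} loop to show each round peels exactly the sources of $G[\mathcal{R}]$, then use the resulting topological order together with the local Markov property and faithfulness to recover the edges. Your treatment is in fact more carefully organized than the paper's---you make the two sub-claims (sources never enter $\mathcal{N}$; non-sources always do) explicit, you give an actual counting argument for the $\mathcal{O}(|V|^2)$ bounds (which the paper merely asserts), and you correctly flag the one nontrivial point, namely that the source-pathwise oracle must be interpreted relative to the conditional SCM on $G[\mathcal{R}]$ once $\mathcal{C}$ is fixed.
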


Note that \cref{theorem:graph-alg} relies on a weaker notion of faithfulness in the causal graph. In particular, it purely relies on $X_i$ and $X_j$ being dependent conditioned on some set $S$, if there is a directed path from $X_i$ to $X_j$ (or vice versa) and $S$ forms a topological prefix of the graph. Note that under \cref{assumption:graph} this faithfulness property will hold with very high probability (at least $1-O \left( 1/2^{\Omega \left( n \right)}  \right)$)\footnote{Given the $(\Omega(n),\Omega(\frac{1}{n \log(n)}))$-support for $E_i$, one can show that $X_i$ will have a support size of at least $\Omega(n)$, with probability at least $1-O(1/2^{\Omega(n)})$. The probability of any two particular states of $X_i$ producing the same conditional distribution of $X_j|X_i=x_i$ is at most $1/n$ (by considering the realization of the last probability mass from $X_i$). Accordingly, the probability of this happening for $\Omega(n)$ states of $X_i$ is bounded by $1/n^{\Omega(n)}$.}, meaning it is not positing any assumption beyond \cref{assumption:graph}.

Finally, we show that we can use entropic causality together with \cref{alg:general} for learning general causal graphs:
\begin{corollary}
\label{corollary:general}
For any SCM under \cref{assumption:graph}, using entropic causality for pairwise comparisons in \cref{alg:general} learns, with high probability, the causal graph that is implied by the SCM.
\end{corollary}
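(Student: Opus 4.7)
The plan is to combine \cref{theorem:dag-oracle} (MEC oracle is source-pathwise) with \cref{theorem:graph-alg} (\cref{alg:general} is correct given any source-pathwise oracle). The only subtlety is that \cref{alg:general} does not invoke the oracle on the full joint $p(X_1,\dots,X_{\lvert V\rvert})$; at iteration $k$ it queries $\oracle(X_i, X_j \mid \mathcal{C})$ where $\mathcal{C}$ is the set of sources discovered in previous iterations. So I must argue that the MEC oracle continues to be source-pathwise after conditioning on a topological prefix of true sources.

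The first step is to observe that conditioning on the previously-output true sources yields a reduced SCM on the remaining nodes that itself satisfies \cref{assumption:graph}. Concretely, fix values for the prefix $\mathcal{C}$; each remaining node $X_i$ is still of the form $f_i(\pa_i', E_i)$ where $\pa_i'$ denotes the parents not in $\mathcal{C}$, and the restricted map $f_i(\cdot,\cdot)$ with the $\mathcal{C}$-coordinates fixed is a uniformly random map into $[n]$ because the original $f_i$ was. The exogenous noise terms $E_i$ are untouched, so their $(\Omega(n), \Omega(1/(n\log n)))$-support and $o(\log\log n)$-entropy conditions are preserved, and $\lvert V \rvert$ only shrinks. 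Hence \cref{theorem:dag-oracle} applies verbatim to the reduced SCM and, with high probability, the MEC oracle correctly orients $A\to B$ whenever $A$ is a source of the reduced graph and $B$ is reachable from $A$ by a directed path in it. This is precisely the source-pathwise property required at each stage of \cref{alg:general}.

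The second step is to check the weakened faithfulness hypothesis required by \cref{theorem:graph-alg}: whenever a directed path from $X_i$ to $X_j$ exists and $S$ is a topological prefix, we need $X_i \not\indep X_j \mid S$. As the footnote to \cref{theorem:graph-alg} observes, under \cref{assumption:graph} this holds with probability $1 - O(1/2^{\Omega(n)})$, since each $X_i$ almost surely has support $\Omega(n)$ and the chance that two distinct parent configurations induce identical child conditionals is at most $1/n$ per state. Because $\lvert V\rvert = O(1)$, there are only $O(1)$ paths and $O(1)$ oracle queries; a union bound over them preserves both the source-pathwise high-probability guarantee and the faithfulness guarantee. Applying \cref{theorem:graph-alg} then returns the true DAG, yielding the corollary.

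The only real obstacle is making the conditional-SCM reduction precise: one must verify that restricting a uniformly random function $f_i\colon [m]\times[n]^{\lvert\pa_i\rvert}\to[n]$ to a fixed value of some coordinates still gives a uniformly random function of the remaining coordinates, and that all remaining hypotheses of \cref{assumption:graph} transfer. Once this is spelled out, the corollary is simply \cref{theorem:dag-oracle} plus \cref{theorem:graph-alg} plus a constant-size union bound.
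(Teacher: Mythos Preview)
Your proposal is correct and matches the paper's intended approach: the corollary is stated without a separate proof and is simply the combination of \cref{theorem:dag-oracle} and \cref{theorem:graph-alg}. Your explicit verification that conditioning on a topological prefix of sources yields a reduced SCM still satisfying \cref{assumption:graph} (uniformly random restricted $f_i$, unchanged $E_i$, smaller $\lvert V\rvert$) is exactly the step the paper leaves implicit when the proof of \cref{theorem:graph-alg} asserts that $G\setminus\mathcal{C}$ is a valid Bayesian network for $p(\cdot\mid\mathcal{C}=c)$ and that the oracle works ``similar to the base case.''
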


\section{Experiments}
\label{section:experiments}
We first introduce a heuristic that we call the \emph{entropic enumeration} algorithm. In this algorithm, we enumerate over all possible causal graphs consistent with the skeleton and calculate the minimum entropy needed to generate the observed distribution from the graph with independent noise at each node. The minimum entropy needed to generate the joint distribution with some graph $D$ is $\sum_{X_i} \mec(X_i | \pa_D(X_i))$ where $\pa_D(X_i)$ denotes the parents of $X_i$ in $D$. The graph requiring the least randomness is then selected.

We are not aware of any provably correct method for causal discovery between categorical variables that are non-deterministically related. For discrete variables, the only such method other than entropic causality is the discrete additive noise model~\cite{peters2011causal}. We compare entropic causality to discrete ANM for learning causal graphs, using the graph extension of ANM proposed by \citet{mooij2009regression}. To isolate the role of our algorithms in identifying causal graphs beyond the equivalence class, we support every algorithm in our comparisons with the skeleton of the true graph (obtainable from conditional independence tests given enough data). We evaluate performance via the structural Hamming distance (SHD) from the estimated graph to the true causal graph. See the Appendix and our GitHub for details: \url{https://github.com/SpencerCompton/entropic-causality-for-graphs}.

\begin{figure*}[ht!]
\vskip .2in
\begin{center}
\subfigure[]{\label{fig:alarm}\includegraphics[width=0.32\textwidth]{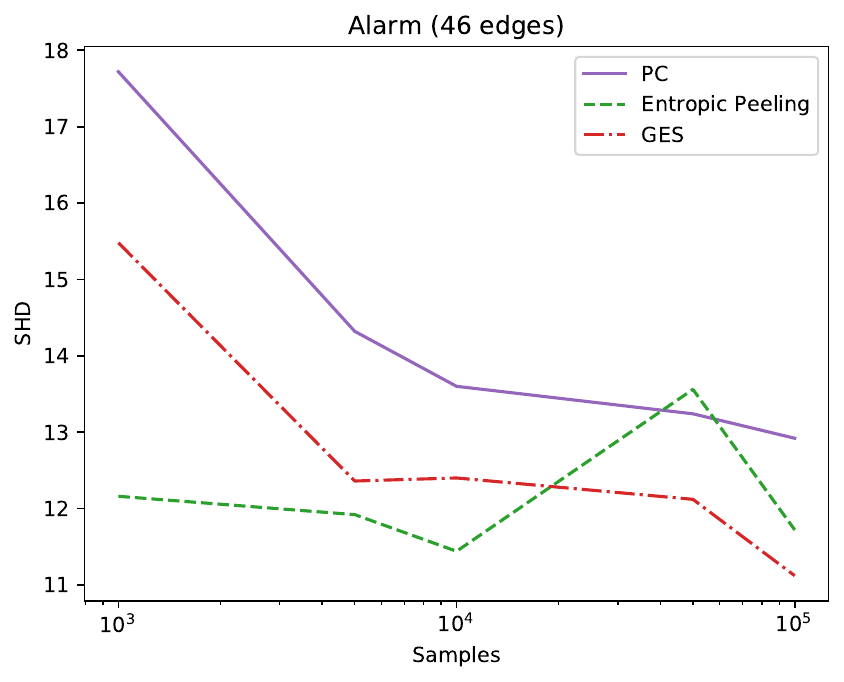}}%
\subfigure[]{\label{fig:survey}\includegraphics[width=0.32\textwidth]{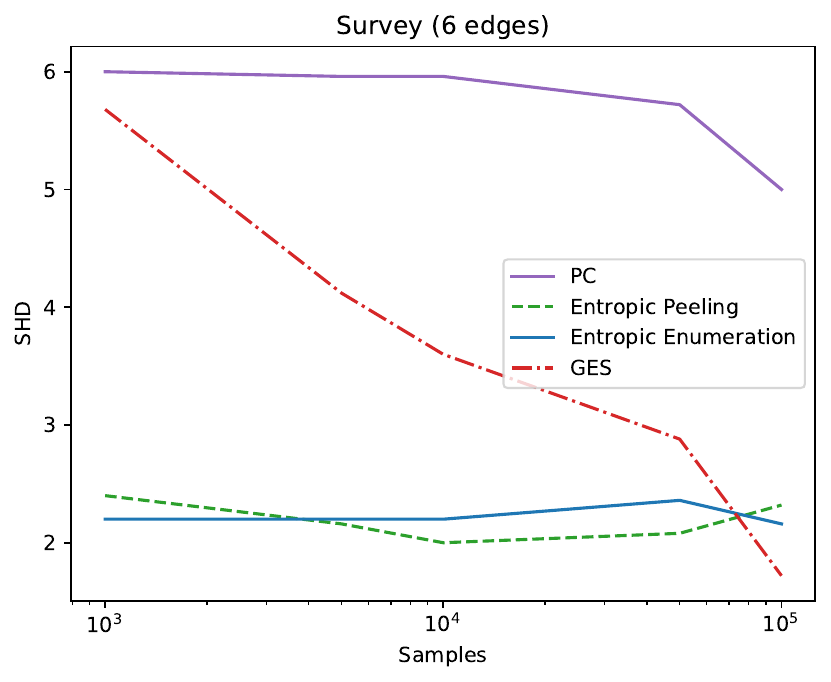}}
\subfigure[]{\label{fig:sachs}\includegraphics[width=0.32\textwidth]{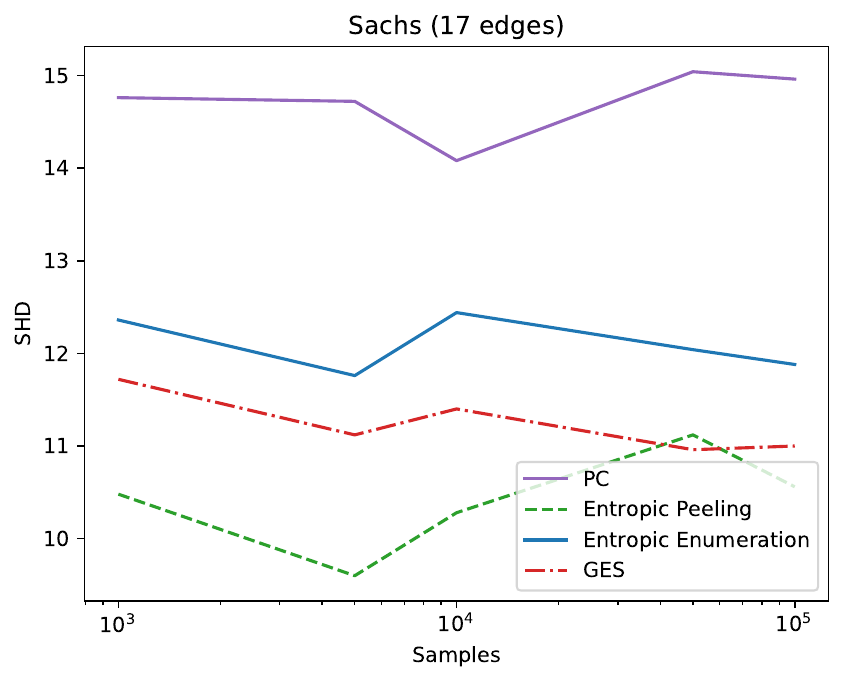}}\\
\subfigure[]{\label{fig:asia}\includegraphics[width=0.32\textwidth]{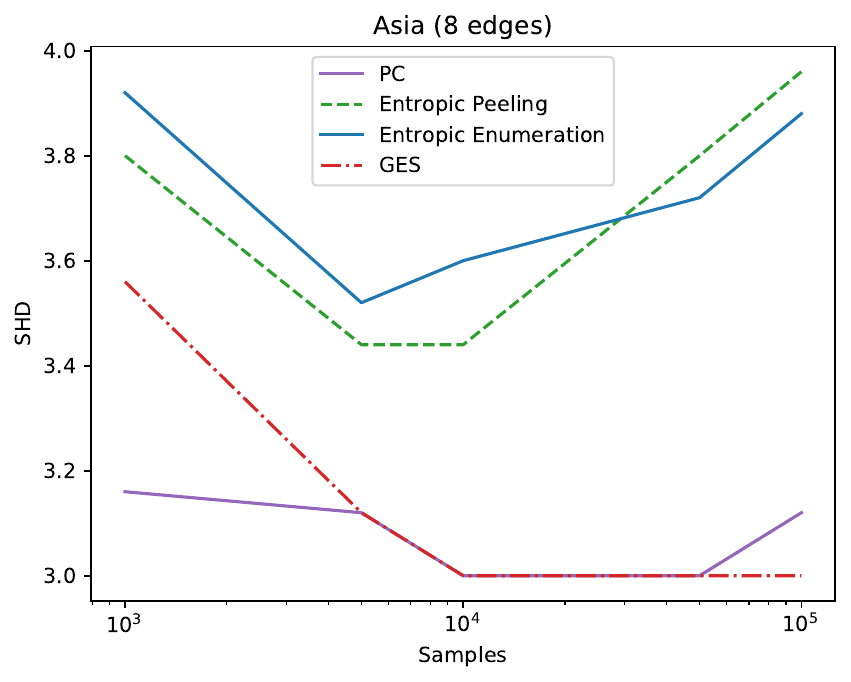}}%
\subfigure[]{\label{fig:cancer}\includegraphics[width=0.32\textwidth]{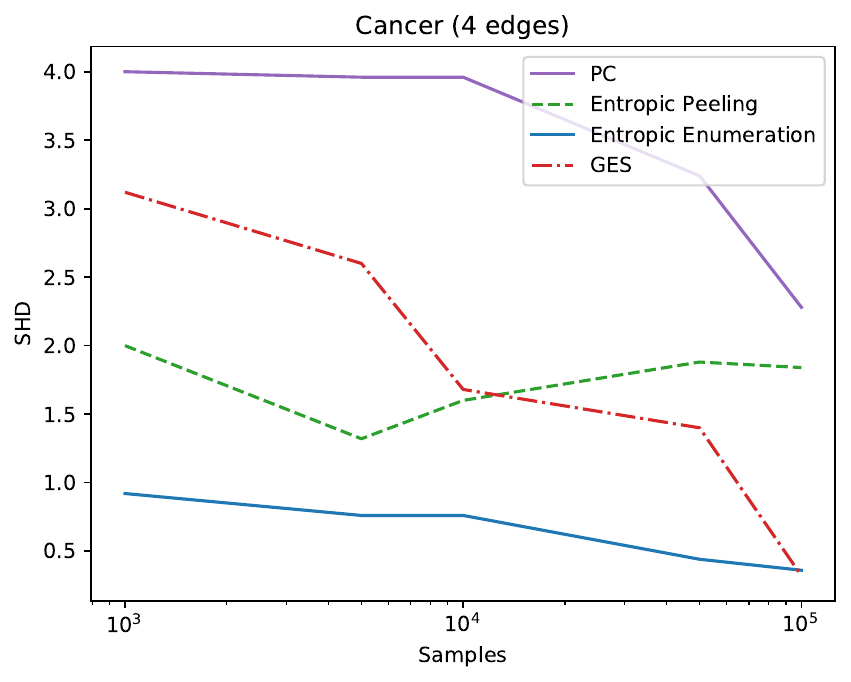}}%
\subfigure[]{\label{fig:earthquake}\includegraphics[width=0.32\textwidth]{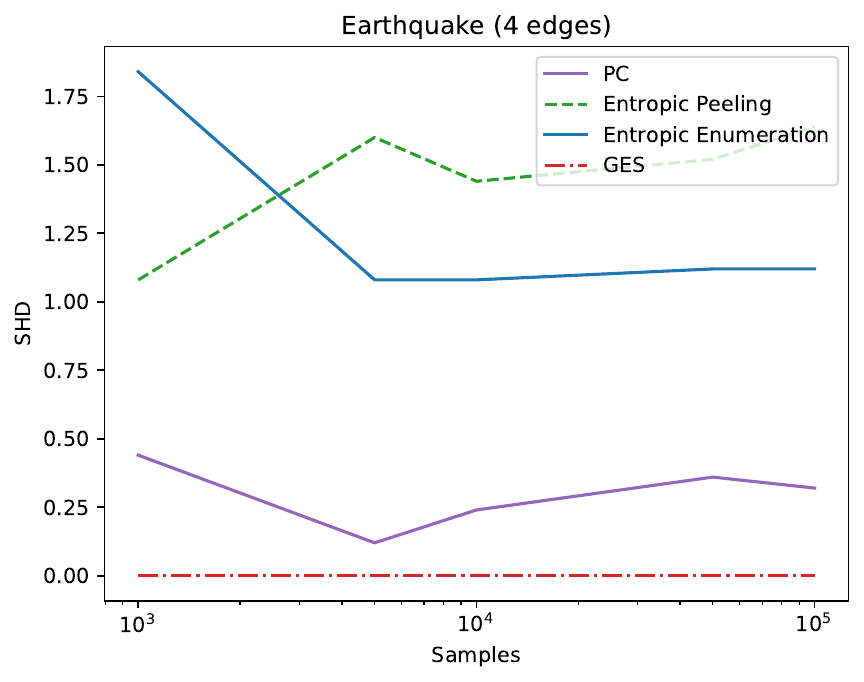}}
\subfigure[]{\label{fig:child}\includegraphics[width=0.32\textwidth]{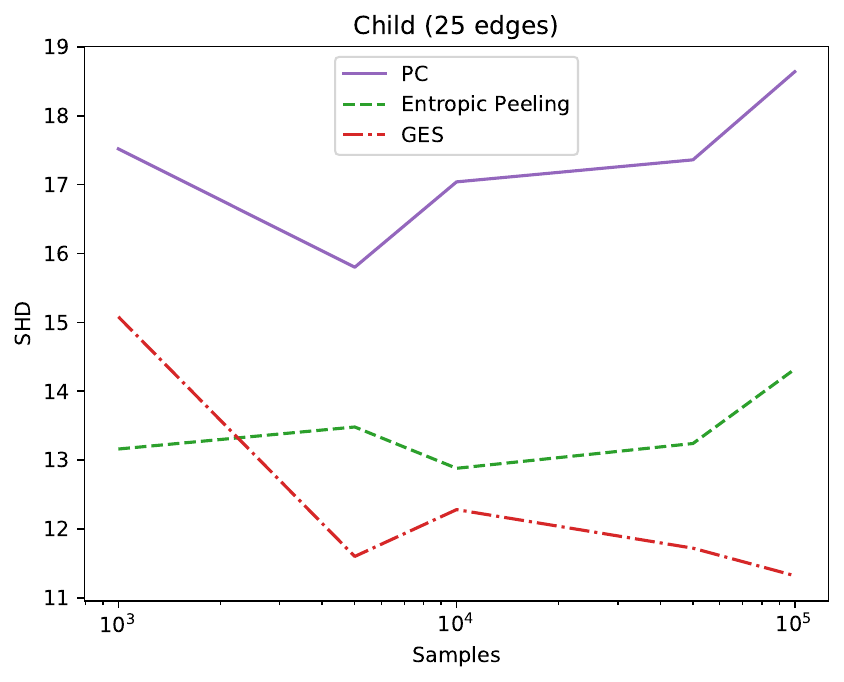}}
\subfigure[]{\label{fig:insurance}\includegraphics[width=0.32\textwidth]{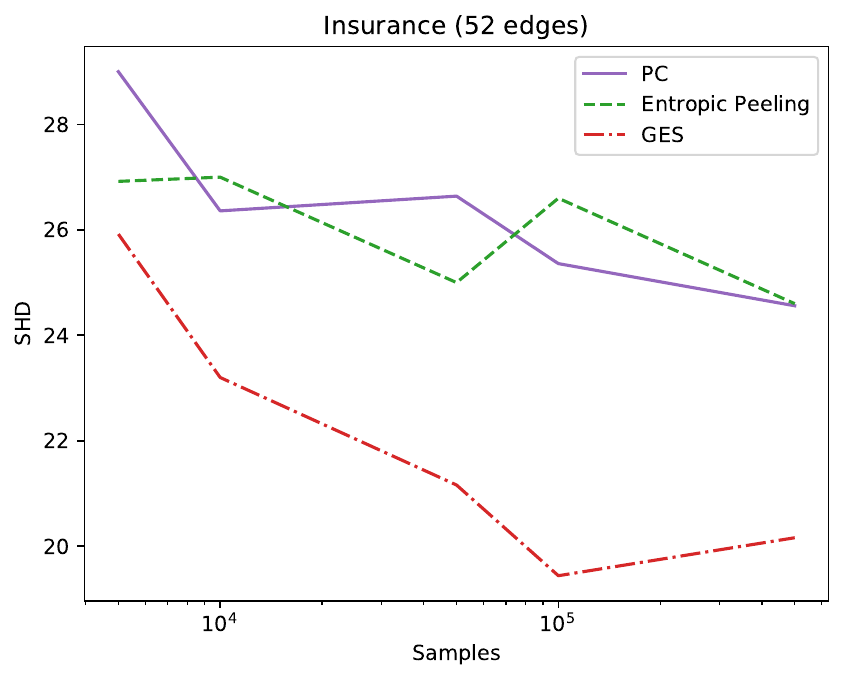}}
\caption{Performance of methods on networks from the \emph{bnlearn} repository with varying samples: $25$ datasets are sampled for each configuration from the \emph{bnlearn} network. The Alarm/Child/Insurance graphs are too large to run Entropic Enumeration.}
\label{fig:bnlearn_entropic}
\end{center}
\vskip -0.2in %
\end{figure*}

\begin{figure*}[ht!]
\begin{center}
\subfigure[]{\label{fig:3_node_line_anm100}\includegraphics[width=0.32\textwidth]{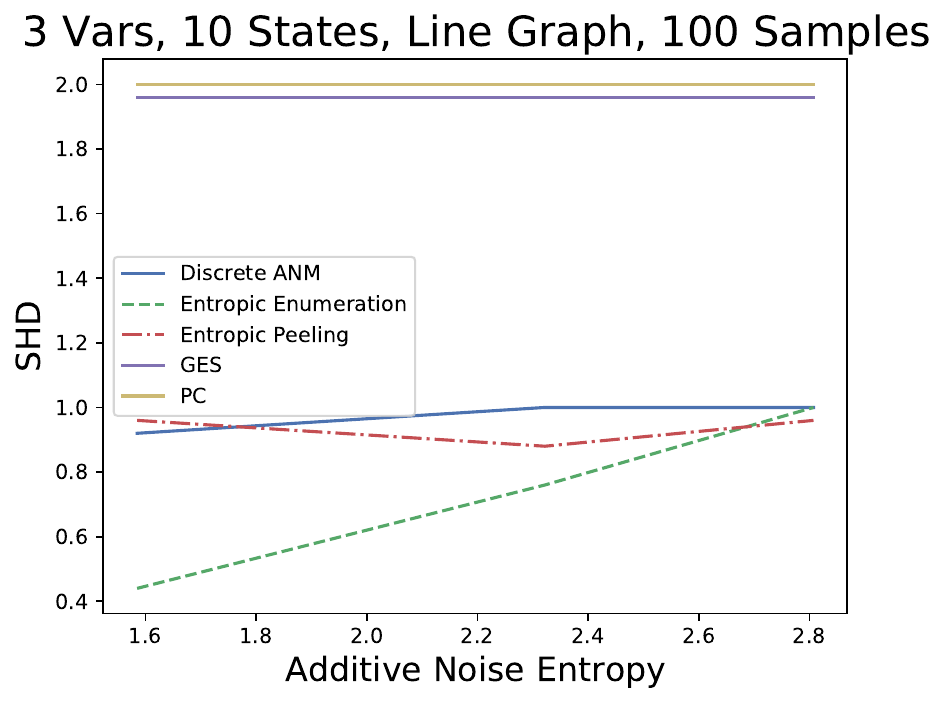}}
\subfigure[]{\label{fig:3_node_line_anm500}\includegraphics[width=0.32\textwidth]{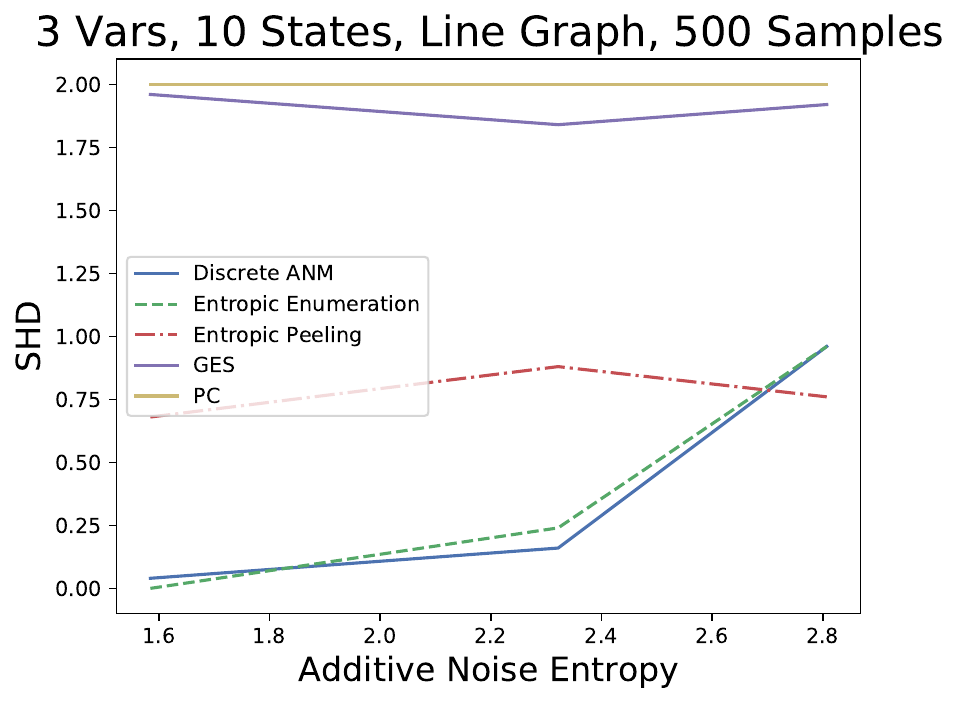}}
\subfigure[]{\label{fig:3_node_line_anm1000}\includegraphics[width=0.32\textwidth]{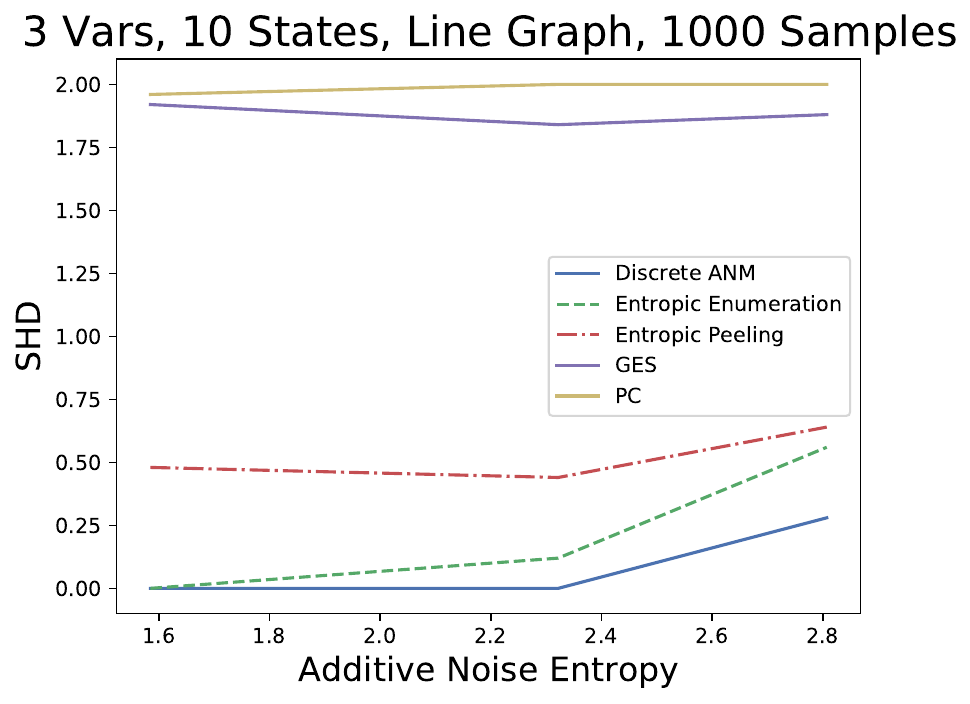}}
\caption{Performance of methods in the ANM setting on the line graph $X\rightarrow Y \rightarrow Z$: $25$ datasets are sampled for each configuration from the ANM model $X=f(\pa_X)+N$. The $x-$axis shows entropy of the additive noise.}
\label{fig:3_node_line_anm}
\end{center}
\end{figure*}
\paragraph{Performance on Synthetic Data.}
Figure \ref{fig:3_node_triangle_entropic_HES} compares the performance of entropic peeling, entropic enumeration and discrete ANM algorithms for the triangle graph, i.e., the graph with edges $X\rightarrow Y$, $Y \rightarrow Z$, and $X\rightarrow Z$. Every datapoint is obtained by averaging the SHD to the graph for $50$ instances of structural models. To ensure that the entropy of the exogenous nodes are close to the value on the x-axis, their distributions are sampled from a Dirichlet distribution with a parameter that is obtained through a binary search. %
We observe that the entropic methods consistently outperform the ANM approach. Importantly, we observe how entropic methods are able to near-perfectly learn the \emph{exact} triangle graph in almost all regimes, even though all triangle graphs are in the same Markov equivalence class and thus traditional structure learning algorithms like PC or GES cannot learn anything. With enough samples, entropic enumeration learns the graph near-perfectly until the exogenous noise nears $\log(n)$, exceeding our theoretical guarantee of $o(\log(\log(n)))$. In Figure \ref{fig:3_node_triangle_entropic_HES}, we fix the source node to have high entropy. Our motivation is that if all nodes have essentially zero randomness, then we expect the performance of any method to degrade as there is no randomness in samples to observe causality or faithfulness. In Figure \ref{fig:3_node_triangle_entropic} in Appendix, we do not fix a high-entropy source and still observe that entropic methods outperform ANM in almost all regimes. Experiments with different and larger graphs can be seen in the Appendix.

\paragraph{Performance in Discrete Additive Noise Regime.} In \cref{fig:3_node_line_anm}, we compare the performance of the entropic algorithms and discrete ANM \emph{when the true SCM is a discrete additive noise model}. Using the discrete ANM generative model, we observe that entropic enumeration out-performs the discrete ANM method with few samples and matches its performance with many samples. This demonstrates that even though entropic methods are designed for the general unconstrained SCM class, they perform similarly to ANM which was designed specifically for this setting.

\paragraph{Effect of Finite Samples.} We observe that entropic methods, particularly enumeration, work well even in regimes with low samples. Experiments focusing on the impact of finite samples can be found in the Appendix.

\paragraph{Performance on Real-World Data.} Due to the computational cost of discrete ANM, we compared entropic causality against GES and PC algorithms to evaluate how well it learned real-world causal graphs from the \emph{bnlearn} repository beyond their equivalence class. Figure \ref{fig:bnlearn_entropic} shows performance on the eight networks we evaluated. See also \cref{fig:real_data} for an examination of how the true graph's entropy compares with other orientations. In \cref{fig:real_data}, it seems interesting how for such a large graph as Alarm, the true graph is often the entropy minimizer among sampled orientations. It is also peculiar how for another large graph Insurance, the true graph is often the entropy maximizer among sampled orientations. This seems like a phenomenon that would be nice to have a better understanding about. We certainly do not claim that the assumptions of entropic causality are universally true in nature, but it does seem interesting how it seems there are real settings where the framework suggests strong signals about the true orientation.

\section{Conclusion}
In this work, we have extended the entropic causality framework to graphs. An identifiability result was proven, and two algorithms were presented and experimentally evaluated --- a theoretically-motivated sequential peeling algorithm and a heuristic entropic enumeration algorithm that performs better on small graphs. Overall, we observed strong experimental results in settings much more general than the assumptions used in our theory, indicating that a much stronger theoretical analysis might be possible. We note however that the quantity $H(E_i) = \Theta(\log(\log(n)))$ appears to be approximately a phase transition for the balls-and-bins setting, and posit that the development of novel tools may be required for such an extension of the theory.

We suggest such an advancement may involve an increased focus on a total entropy criterion (i.e., an extension of comparing $H(X)+H(E)$ to $H(Y)+H(\tilde{E})$ in the bivariate case), as in our proposed algorithm of entropic enumeration. Experiments indicate that this performs well, and one might argue that it appears to be more conceptually justified. For one, it mirrors Occam's razor in that it prefers the causal graph with minimal total randomness required. While we do not claim that this methodology will always discover the true generative model (as Occam's razor does not require the simplest explanation to \emph{always} be true), we believe these intuitions mirror nature more often than not, as confirmed by our experimental results. Moreover, such an approach appears to fare better with counter-examples for exogenous-based criterion such as the traveling ball scenario of \cite{janzing2019cause} discussed in \cite{compton2021entropic}. Showing theoretical guarantees for this approach's performance is of interest in future work, and can be framed more generally as, \emph{``Under what conditions is the true generative model the most information-theoretically efficient way to produce a distribution?''}

\flushcolsend
\clearpage

\bibliography{paper}
\bibliographystyle{icml2022}

\flushcolsend
\clearpage 
\appendix

\section{Proof of \cref{theorem:pairwise}}
\label{app:pairwise}
\subsection{Proof Outline}

Following the approach described in the proof overview in the main text (with the descriptions of helpful and hurful mass), we first introduce the \emph{surplus} of a state of $Y$ to characterize the amount of hurtful mass it receives:

Recall that $S$ is the set of ``plateau states" of $X$, i.e., those whose probabilities are close to one another. 

\begin{restatable}[Surplus]{definition}{defsurplusgeneric}
\label{definition:surplus-generic}
We define the surplus of a state $y$ of $Y$ as $z_y = \sum_{j \notin S} \max (0, P(X=j, Y=i) - \mathcal{T})$.
\end{restatable}

Intuitively, only values from states of $X$ outside the plateau states which exceed the threshold will significantly ``hurt" conditional entropy $H(X|Y=y)$. We will show there is a state $y'$ of $Y$ where $z_{y'}$ is small and $y'$ receives $\Omega(\frac{\log(n)}{\log(\log(n))})$ plateau balls. To bound $z_{y'}$, we will characterize it as the sum of contributions from three types of balls from $(X \backslash S) \times E$.\footnote{In the proofs, with a slight abuse of notation, we use $X,E$ both for the observed and exogenous variables, respectively and their supports.}

\begin{restatable}[Ball characterizations]{definition}{balltypes}
We characterize three types of balls:
\begin{enumerate}
    \item Dense balls. Consider a set $L$ of states of $X$, where a state of $X$ is in $L$ if $P(X=x) \ge \frac{1}{\log^{3}(n)}$. Dense balls are all balls of the form $(x \in L, e \in E)$. We call these dense balls, because the low-entropy of $E$ will prevent the collective mass of these balls from ``expanding'' well. 
    \item Large balls. For all balls of the form $(x \in X \backslash (S \cup L), e \in E)$ where the ball has mass $\ge \frac{\mathcal{T}}{2}$.
    \item Small balls. For all balls of the form $(x \in X \backslash (S \cup L), e \in E)$ where the ball has mass $< \frac{\mathcal{T}}{2}$.
\end{enumerate}
\label{def:balltypes}
\end{restatable}

We will show there are a non-negligible fraction of bins such that $z_y$ is small. To do so, we will bound the contribution from dense balls by showing that the small entropy of $E$ prevents ``spread'' in a sense, as there cannot be many states of $Y$ that receive much contribution towards $z_y$ from these dense balls. We will bound contribution from large balls by bounding the number of large balls, and showing that a non-negligible number of bins receive no large balls. Finally, we will bound contribution from small balls by showing how they often are mapped to states of $Y$ that have yet to receive $\frac{\mathcal{T}}{2}$ mass from the corresponding state of $X$, meaning they often don't immediately increase $z_y$. 

Finally, we will show (with high probability) the existence of a bin $y'$ with small $z_{y'}$ that will receive many plateau balls, and how this will imply $H(X | Y=y') = \Omega(\log(\log(n)))$.

\subsection{Complete Proof}
\textbf{Bounding $H(\tilde{E})$ via $H(X|Y=y)$.} Because $\tilde{E} \indep Y$, it must be true that $H(\tilde{E}) \ge \max_{y} H(X | Y=y)$. This is simple to prove by data processing inequality and is shown in Step 1 of the proof of Theorem 1 by \cite{compton2021entropic}. We aim to show there exists a $y'$ such that $H(X | Y=y') = \Omega(\log(\log(n)))$.

\textbf{Showing existence of a near-uniform plateau.} First, we aim to find a subset of the support of $X$ whose probabilities are multiplicatively close to one another. Here, we have a looser requirement for closeness than \cite{compton2021entropic}. Instead of requiring these probabilities to be within a constant factor of each other, we allow them to be up to a factor of $\log^{\close}(n)$ apart where $\close$ is a constant such that $0 < \close < 1$. While there are multiple values of $\close$ that would be suitable for our analysis, for simplicity of presentation we choose $\close = \frac{1}{4}$ throughout. This set of states of $X$ that are multiplicatively close to one another will be called the \emph{plateau} of $X$. We begin by showing how the $(\Omega(n),\Omega(\frac{1}{n \log(n)}))$-support assumption implies a plateau of states of $X$:

\begin{restatable}[Plateau existence]{lemma}{lemmaplateau}
\label{lemma:plateau}
Suppose $X$ has $(\csup n, \frac{1}{\clb n \log(n)})$-support for constants $0 < \csup \le 1$ and $\clb \ge 1$. Additionally, assume $n$ is sufficiently large such that $\frac{\log(2 \clb / \csup)}{\log(\log(n))} \le 1$. Then, there exists a subset $S \subseteq [n]$ of the support of $X$, such that the following three statements hold:
\begin{enumerate}
\item $\frac{\max_{i \in S} P(X = i)}{\min_{i \in S} P(X = i)} \le \log^\close(n)$\\ 
\item $\min_{i \in S} P(X = i) \ge \frac{1}{\clb n \log(n)}$\\
\item $|S| \ge \frac{\close \csup n}{6}$, for any $0<\close<1$.
\end{enumerate}
\end{restatable}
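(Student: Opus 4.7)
\textbf{Proof plan for Lemma \ref{lemma:plateau}.} The plan is to group the states of $X$ that fall in a bounded probability window into a constant number of multiplicative bands and then apply pigeonhole. First, I will trim away the states that are ``too heavy.'' Since probabilities sum to at most $1$, at most $n/c$ states have probability strictly greater than $c/n$ for any $c>0$. Taking $c=6/\csup$, at most $\csup n/6$ states exceed $6/(\csup n)$. Combined with the hypothesis that at least $\csup n$ states satisfy $P(X=i)\ge \frac{1}{\clb n \log(n)}$, this leaves at least $\frac{5\csup n}{6}$ states whose probabilities lie in the window $\bigl[\tfrac{1}{\clb n \log(n)},\,\tfrac{6}{\csup n}\bigr]$.

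Next, I will partition this window into bands of multiplicative width $\log^{\close}(n)$: band $B_k$ consists of probabilities in $\bigl[\tfrac{\log^{k\close}(n)}{\clb n \log(n)},\,\tfrac{\log^{(k+1)\close}(n)}{\clb n \log(n)}\bigr)$. The number of bands needed to cover the window is at most $\lceil \log(6\clb\log(n)/\csup)/(\close\log\log(n))\rceil$. Expanding $\log(6\clb\log(n)/\csup) = \log 3 + \log(2\clb/\csup) + \log\log(n)$, and using the hypothesis $\log(2\clb/\csup)/\log\log(n)\le 1$ together with $n$ being sufficiently large so that $\log 3/\log\log(n)\le 1$, this count is at most $3/\close$.

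By pigeonhole, some band $B_{k^\star}$ contains at least $\tfrac{5\csup n/6}{3/\close} = \tfrac{5\close\csup n}{18}\ge \tfrac{\close\csup n}{6}$ of these states; take $S$ to be the set of indices in $B_{k^\star}$. Property $3$ is then immediate. Property $2$ follows because every element of $S$ lies in the trimmed window and therefore has probability at least $\frac{1}{\clb n \log(n)}$. Property $1$ holds because any two probabilities within the same band $B_{k^\star}$ differ by a factor of at most $\log^{\close}(n)$ by construction.

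I do not expect any real obstacle here; the only subtle point is making sure that all the slack terms $\log 3$, $\log(2\clb/\csup)$, etc., can be absorbed into the $O(\log\log n)$ budget. This is exactly what the explicit hypothesis ``$n$ is sufficiently large such that $\log(2\clb/\csup)/\log\log(n)\le 1$'' is there to ensure, so the bookkeeping goes through cleanly for any fixed $\close\in(0,1)$ (in particular $\close = 1/4$, the value used throughout the rest of the proof of Theorem \ref{theorem:pairwise}).
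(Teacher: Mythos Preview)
Your proposal is correct and follows essentially the same approach as the paper: trim off the heavy states, confine the remaining $(\alpha,\beta)$-supported states to a window of multiplicative width $O(\log n)$, partition that window into $O(1/\close)$ geometric bands of ratio $\log^{\close}(n)$, and apply pigeonhole. The only cosmetic difference is that the paper cuts at $2/(\csup n)$ (leaving $\csup n/2$ states) while you cut at $6/(\csup n)$ (leaving $5\csup n/6$ states); both choices yield at most $3/\close$ bands and hence a band of size at least $\close\csup n/6$.
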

\begin{proof}
By definition of $(\csup n, \frac{1}{\clb n \log(n)})$ support, there are at least $\csup n$ states of $X$ with probability in range $[\frac{1}{\clb n \log(n)},1]$. Moreover, at most $\frac{\csup n}{2}$ states will have probabilities in range $[\frac{2}{\csup n}, 1]$. Otherwise, they would have total probability mass $>1$ which is impossible. Therefore, there are at least $\frac{\csup n}{2}$ states with probabilities in range $[\frac{1}{\clb n \log(n)},\frac{2}{\csup n}]$.

Now, we aim to divide the range $[\frac{1}{\clb n \log(n)},\frac{2}{\csup n}]$ into a number of contiguous segments such that all values in any segment are multiplicatively within $\log^\close(n)$ of each other. To do so, we can create segments $[\frac{1}{\clb n \log(n)} \times (\log^\close(n))^i , \frac{1}{\clb n \log(n)} \times (\log^\close(n))^{i+1}]$ from $i=0$ until the smallest $i$ that satisfies $\frac{1}{\clb n \log(n)} \times (\log^\close(n))^{i+1} \geq  \frac{2}{\csup n}$. Accordingly, we need $\lceil \frac{\log(({2}/{(\csup n)})/({1/{(\clb n \log(n))}}))}{\log(\log^\close(n))} \rceil \le 1 + \frac{1}{\close} + \frac{\log(2 \clb / \csup)}{\close \log(\log(n))} \le \frac{3}{\close}$ groups. Hence one group must have at least $\frac{\csup n /2}{3 / \close} = \frac{\close \csup n}{6}$ states of $X$ that are multiplicatively within $\log^\close(n)$ and have probability at least $\frac{1}{\clb n \log(n)}$.
\end{proof}

\textbf{Lower-bounding the most probable state of $E$.} Our proof method focuses on a balls-and-bins game where states of $X \times E$ are balls and states of $Y$ are bins. We focus first on \emph{plateau balls}, which are balls corresponding to states of $S$ (the set of plateau states of $X$) and the highest probability state of $E$. In particular, they are balls of the form $(X \in S, E=e_1)$ where $e_1$ is the most probable state of $E$. To show that these plateau balls have enough probability mass to be helpful, we first show that $P(E=e_1)$ is relatively large:

\begin{restatable}{lemma}{lemmaebound}
\label{lemma:e-bound}
If $H(E) \le \close \log(\log(n))$ then $P(E=e_1) \ge \frac{1}{\log^\close(n)}$
\end{restatable}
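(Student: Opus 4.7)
The plan is a one-line contrapositive argument based on the standard inequality that the Shannon entropy is lower-bounded by the min-entropy, $H(E) \ge -\log P(E=e_1)$. I would either cite this as monotonicity of Rényi entropies in the order parameter, or prove it in-line, which is the direction I would take since it keeps the writeup self-contained.

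The key step is: suppose for contradiction that $P(E=e_1) < \frac{1}{\log^\close(n)}$. Since $e_1$ is the most probable state, every state $e$ with $P(E=e)>0$ satisfies $P(E=e) \le P(E=e_1) < \frac{1}{\log^\close(n)}$, and hence $\log\!\bigl(1/P(E=e)\bigr) > \close \log(\log(n))$. Summing the entropy definition only over states with positive mass gives
\[
H(E) \;=\; \sum_{e:\,P(E=e)>0} P(E=e)\,\log\!\frac{1}{P(E=e)} \;>\; \close \log(\log(n)) \sum_{e:\,P(E=e)>0} P(E=e) \;=\; \close \log(\log(n)),
\]
contradicting the hypothesis $H(E) \le \close \log(\log(n))$. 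Therefore $P(E=e_1) \ge \frac{1}{\log^\close(n)}$, as claimed.

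There is no real obstacle here — the statement is a quantitative restatement of the well-known fact that a distribution with small max probability must have large Shannon entropy, and the quantitative version falls out immediately from the definition. I would keep the proof to two or three lines in the paper.
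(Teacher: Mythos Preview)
Your proof is correct and essentially identical to the paper's: both rely on the inequality $P(E=e_1)\ge 2^{-H(E)}$ (equivalently, Shannon entropy dominates min-entropy), with the only difference being that the paper cites this fact from prior work while you prove it in-line via the contrapositive.
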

\begin{proof}
For any distribution with entropy $H$, its state with the highest probability has at least probability $2^{-H}$ (see Lemma 5 of \cite{compton2021entropic}). Thus if $H(E) \le \close \log(\log(n))$ then $P(E = e_1) \ge 2^{- \close \log(\log(n))} = \frac{1}{\log^\close(n)}$.
\end{proof}

\textbf{Introducing surplus.} We now begin proving how there exists a bin that receives a large amount of mass that helps the bin have large conditional entropy (such helpful mass includes the plateau balls), and not much mass that hurts the conditional entropy making it small. To formalize this hurtful mass, recall the \emph{surplus} quantity described in \cref{definition:surplus-generic}. This surplus is a way of quantifying the probability mass received by a state of $Y$ that is hurtful towards making the conditional entropy large. We define surplus, with the threshold of $\mathcal{T}$ specified as $\frac{12}{n \log(n)}$ as follows:

\begin{restatable}[Surplus, $\mathcal{T}=\frac{12}{n \log(n)}$]{definition}{defsurplus}
We define the surplus of a state $i$ of $Y$ as $z_i = \sum_{j \notin S} \max (0, P(X=j, Y=i) - \frac{12}{n \log(n)})$, where $S$ is the set of plateau states of $X$. 
\end{restatable}

\textbf{Characterizing balls-and-bins.} Now we will show that there are a non-negligible number of states of $Y$ where the surplus is small. Recall from our proof outline that we view the process of realizing the random function $f$ as a balls-and-bins game. In particular, each element of $X \times E$ (a ball) is i.i.d. uniformly randomly assigned to a state of $Y$ (a bin). Only balls of the form $(x \in X \backslash S, e \in E)$ affect a bin's surplus. To bound surplus for bins, we characterize it as the sum of contributions from three types of balls from $(X \backslash S) \times E$, and restate this characterization from the proof outline:

\balltypes*

\textbf{Bounding the harmful effects of dense balls.} Recall that $\mathcal{T}=\frac{12}{n \log(n)}$. Now, we show how to bound the contribution of dense balls towards surplus. By our assumptions, $Y=f(X,E)$, and $H(E)$ is small, meaning there is not much randomness in our function. We defined $L$ as states of $X$ with probability at least $\frac{1}{\log^{3}(n)}$, so $|L| \le \log^{3}(n)$. We would like to show that there are not too many bins where the dense balls contribute a significant amount to surplus. If $H(E)=0$, this would be easy to show as then there would only be $|L| \le \log^{3}(n)$ dense balls and thus they could only affect the surplus of $\log^{3}(n)$ bins. However, we aim to show this claim in the more general setting where $H(E)=o(\log(\log(n)))$. To accomplish this, we follow the same intuition to show that the limited entropy of $E$ prevents this small number of states of $X$ from greatly ``spreading'' to significantly affect a large number of states of $Y$. In particular, we show:

\begin{restatable}[Limited expansion]{lemma}{lemmanoexpansion}
\label{lemma:no-expansion}
Suppose $Y$ can be written as a function $f(X,E)$ and $X \indep E$. Consider any subset $R$ of the support of $X$. For any subset $T$ of the support of $Y$ that satisfies $\forall t\in T: P(X \in R, Y=t) > \delta$, the cardinality of $T$ is upper bounded as
$|T| \le \frac{H(E) + \log(|R|) + 2}{\delta \log(\frac{1}{\delta})}$.
\end{restatable}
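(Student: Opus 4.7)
The plan is to sandwich the conditional entropy $H(Y \mid X \in R)$ between an upper bound coming from the structural equation and a lower bound coming from the support assumption on $T$, then rearrange for $|T|$. First I would upper bound: condition on the event $X \in R$ and expand $H(Y \mid X \in R) \le H(X, Y \mid X \in R) = H(X \mid X \in R) + H(Y \mid X, X \in R)$. The first term is at most $\log|R|$ since $X$ is supported on $R$ once we condition on $X \in R$. For the second, $Y = f(X,E)$ means that given $X = x$ the only randomness is through $E$, so $H(Y \mid X = x) \le H(E \mid X = x) = H(E)$ using $X \indep E$. Averaging over $x \in R$ gives $H(Y \mid X, X \in R) \le H(E)$, and thus $H(Y \mid X \in R) \le \log|R| + H(E)$.

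For the lower bound, set $p_t = P(Y = t \mid X \in R)$. Since $P(X \in R) \le 1$, the hypothesis $P(X \in R, Y = t) > \delta$ upgrades to $p_t > \delta$ for every $t \in T$. I would partition $T = T_1 \sqcup T_2$ with $T_1 = \{t \in T : \delta < p_t \le 1/e\}$ and $T_2 = \{t \in T : p_t > 1/e\}$. The constraint $\sum_{t \in T_2} p_t \le 1$ forces $|T_2| < e$, i.e.\ $|T_2| \le 2$. The function $h(p) = -p\log p$ is strictly increasing on $(0, 1/e)$, so $h(p_t) \ge h(\delta) = \delta \log(1/\delta)$ for every $t \in T_1$, while $h(p_t) \ge 0$ on $T_2$. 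Restricting the entropy sum to $T$ therefore yields
\[
H(Y \mid X \in R) \;\ge\; \sum_{t \in T} h(p_t) \;\ge\; (|T| - 2)\,\delta \log(1/\delta).
\]

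Combining the two bounds gives $(|T| - 2)\,\delta\log(1/\delta) \le H(E) + \log|R|$, so $|T| \le 2 + (H(E) + \log|R|)/(\delta \log(1/\delta))$. Since the maximum of $-p\log p$ on $(0,1)$ is $1/(e\ln 2) < 1$, we have $\delta\log(1/\delta) \le 1$, hence $2 \le 2/(\delta\log(1/\delta))$, which gives exactly the claimed bound $|T| \le (H(E) + \log|R| + 2)/(\delta\log(1/\delta))$. The main conceptual obstacle is that $h(p)$ is non-monotone on $(0,1)$, so one cannot simply assert $h(p_t) \ge h(\delta)$ for all $t \in T$; the fix, which drives the structure of the proof, is to peel off the at most two high-probability states in $T_2$ and absorb them as the additive $+2$ in the numerator. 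Everything else is a routine chain rule and cardinality argument.
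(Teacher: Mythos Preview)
Your proof is correct and follows essentially the same approach as the paper. Both arguments condition on $X\in R$ (the paper does this by introducing $X'$ with law $P(X=\cdot\mid X\in R)$ and $Y'=f(X',E)$, which is exactly your $H(Y\mid X\in R)$), upper-bound the resulting entropy by $\log|R|+H(E)$, and lower-bound it using that at least $|T|$ states have conditional probability exceeding $\delta$. Your $T_1/T_2$ split is a cleaner justification of the lower bound $H(Y\mid X\in R)\ge |T|\,\delta\log(1/\delta)-2$, which the paper only sketches with the remark that ``adding non-negative numbers to probabilities can decrease entropy by at most $2$.''
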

\begin{proof}
Consider a variable $X'$, whose distribution is obtained from the distribution of $X$ by keeping only the states in $R$, and then normalized. More formally, for any $i \in R$, $P(X' = i) = \frac{P(X = i)}{P(X \in R)}$, and for any $i \notin R$, $P(X' = i) = 0$. 

Recall $Y=f(X,E)$. Using the same $f,E$, we define $Y' = f(X',E)$. Note that $P(X \in R, Y=i) \le P(Y' = i)$. If $P(X \in R, Y=i) \ge \delta$, then it must be true that $P(Y'=i) \ge \delta$. Moreover, this implies that if there exists such a subset $T$ then $H(Y') \ge |T| \delta \log(\frac{1}{\delta}) - 2$ (note the negative two is from the fact that modifying a distribution by adding non-negative numbers to probabilities can decrease entropy by at most $2$). Moreover, by data-processing inequality note that $H(Y') \le H(X')+H(E | X' ) \le H(X')+H(E) \le \log(|R|) + H(E)$, where previous inequality is due to the fact that conditioning reduces entropy. This implies the desired inequality for the cardinality of set $T$.
\end{proof}

To more directly use this for our goal, we present:

\begin{restatable}{corollary}{corollarydenseball}
\label{corollary:denseball}

There exist no subset $|T|=n/4$ such that $\forall t \in T: P(X \in L, Y=t) \ge \frac{1}{n \log(\log(n)) \log^{2\close}(n)}$
\end{restatable}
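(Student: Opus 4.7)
The plan is to apply Lemma \ref{lemma:no-expansion} directly, taking $R = L$ and $\delta = \frac{1}{n \log(\log(n)) \log^{2\close}(n)}$, then show that the resulting upper bound on $|T|$ is $o(n)$, hence in particular less than $n/4$ for all sufficiently large $n$.

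First, I would bound $|L|$. By definition, every $x \in L$ satisfies $P(X=x) \ge 1/\log^{3}(n)$, and since these probabilities sum to at most $1$, we immediately get $|L| \le \log^{3}(n)$, so $\log(|L|) \le 3\log\log(n)$. Combined with the hypothesis $H(E) = o(\log\log(n))$ from \cref{theorem:pairwise}, the numerator in the bound of Lemma \ref{lemma:no-expansion} is
\[
H(E) + \log(|L|) + 2 \;=\; o(\log\log n) + 3\log\log n + 2 \;=\; O(\log\log n).
\]

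Next, I would compute the denominator $\delta \log(1/\delta)$. We have $1/\delta = n \log\log(n)\, \log^{2\close}(n)$, so
\[
\log(1/\delta) \;=\; \log n + \log\log\log n + 2\close \log\log n \;=\; \Theta(\log n).
\]
Therefore $\delta \log(1/\delta) = \Theta\!\left(\frac{\log n}{\,n \log\log(n) \log^{2\close}(n)\,}\right)$. Plugging these into the conclusion of Lemma \ref{lemma:no-expansion} gives
\[
|T| \;\le\; \frac{H(E) + \log|L| + 2}{\delta \log(1/\delta)} \;=\; O\!\left(\frac{n\, (\log\log n)^2 \log^{2\close}(n)}{\log n}\right).
\]
Since $\close = 1/4$, we have $\log^{2\close}(n) = \log^{1/2}(n)$, which is dominated by $\log n$ in the denominator; the overall bound is thus $o(n)$, and in particular strictly less than $n/4$ once $n$ is large enough. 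This contradicts the existence of the postulated $T$ with $|T| = n/4$.

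There is no real conceptual obstacle here: the corollary is essentially a quantitative instantiation of Lemma \ref{lemma:no-expansion}, and the work is bookkeeping the asymptotics. The only point that requires care is ensuring that the particular choice $\close = 1/4$ (fixed earlier in the paper) is indeed small enough — one needs $2\close < 1$ so that the factor $\log^{2\close}(n)$ is killed by the $\log n$ in the denominator — and that $\delta < 1$ so $\log(1/\delta) > 0$, both of which hold for sufficiently large $n$.
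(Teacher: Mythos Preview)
Your proposal is correct and follows essentially the same approach as the paper: both apply Lemma~\ref{lemma:no-expansion} with $R=L$ and the given $\delta$, bound $|L|\le \log^3(n)$, and then verify that the resulting upper bound on $|T|$ is $o(n)$ (the paper tracks explicit constants to reach $\frac{5n(\log\log n)^2 \log^{1/2} n}{\log n}\le n/4$, while you phrase the same computation in big-$O$ notation). No substantive difference.
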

\begin{proof}
Note that $|L| \le \log^3(n)$. By \cref{lemma:no-expansion}, any such $T$ must satisfy:
\begin{align}
    & |T| \\
    & \le \frac{H(E)+\log(|L|)+2}{1/(n \cdot \log(\log(n)) \cdot \log^{2 \close}(n)) \cdot \log(n)} \\
    & \le \frac{5 \log^2(\log(n)) \cdot n \cdot \log^{2 \close}(n)}{\log(n)} \\
    & \le \frac{5 \log^2(\log(n)) \cdot n \cdot \log^{1/2}(n)}{\log(n)} \label{step:close-bound}\\
    & \le \frac{n}{4} \label{step:final-nlarge}
\end{align}
We obtain Step \ref{step:close-bound} by previously setting $\close = \frac{1}{4}$. We obtain Step \ref{step:final-nlarge} when $n$ is sufficiently large such that $\frac{5 \log^2(\log(n))}{\log^{1/2}(n)} \le \frac{1}{4}$. It can be shown that $n\geq 5$ is sufficient.
\end{proof}

As a result, dense balls cannot significantly affect the surplus of many bins. 

\textbf{Bounding the harmful effects of large balls.} We now show how large balls cannot significantly affect the surplus of too many bins, by showing there is a non-negligible number of bins that receive no large balls. 

\begin{restatable}[Avoided big]{lemma}{lemmaavoidbig}
\label{lemma:empty-bins} Given a balls-and-bins game with $c \cdot n \ln(n)$ balls mapped uniformly randomly to $n$ bins, at least $\frac{n^{1-c}}{2}$ bins will receive no balls with high probability if $c$ is a constant such that $0 < c \le \frac{1}{3}$.
\end{restatable}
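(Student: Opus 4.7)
The plan is a standard first-moment plus concentration argument for the balls-and-bins process. Let $m := c n \ln(n)$, let $X$ denote the number of empty bins, and write $X = \sum_{i=1}^n X_i$ where $X_i$ is the indicator that bin $i$ is empty. The strategy is (i) compute $E[X]$ and show it is close to $n^{1-c}$, and (ii) show $X$ concentrates around $E[X]$ tightly enough to meet the paper's ``with high probability'' standard.

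For step (i), since balls are thrown independently and uniformly, $E[X_i] = (1-1/n)^m$. Using $\ln(1-1/n) \ge -1/n - 1/n^2$ (valid for $n \ge 2$), one obtains
\begin{align*}
(1-1/n)^{c n \ln(n)} \;\ge\; e^{-c\ln(n) - c\ln(n)/n} \;=\; n^{-c}(1-o(1)),
\end{align*}
so that $E[X] \ge n^{1-c}(1-o(1))$. For $n$ large enough this is at least $\tfrac{9}{10} n^{1-c}$, comfortably above the target threshold $\tfrac{1}{2} n^{1-c}$.

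For step (ii), the indicators $X_i$ are not independent, but we can view $X$ as a function $f(Y_1,\ldots,Y_m)$ of the $m$ independent ball positions $Y_j \in [n]$. Relocating any single ball alters the number of empty bins by at most one, so $f$ satisfies the bounded-differences condition with constants $c_j = 1$, and McDiarmid's inequality gives
\begin{align*}
\Pr\!\bigl[\,|X - E[X]| \ge t\,\bigr] \;\le\; 2\exp\!\Bigl(-\tfrac{2t^2}{m}\Bigr).
\end{align*}
Choosing $t = \tfrac{1}{4} n^{1-c}$ (so that $X \le \tfrac{1}{2} n^{1-c}$ entails $|X - E[X]| \ge t$ once $E[X] \ge \tfrac{3}{4} n^{1-c}$) yields
\begin{align*}
\Pr\!\bigl[\,X \le \tfrac{1}{2} n^{1-c}\,\bigr] \;\le\; 2\exp\!\Bigl(-\tfrac{n^{1-2c}}{8 c \ln(n)}\Bigr).
\end{align*}

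The main technical point that deserves attention — and the only step where the hypothesis $c \le \tfrac{1}{3}$ is actively used — is verifying this bound is strong enough to qualify as ``with high probability'' in the paper's sense (probability of failure $O(1/n^\alpha)$ for \emph{every} constant $\alpha>0$). Since $c \le \tfrac{1}{3}$, we have $1-2c \ge \tfrac{1}{3}$, so $n^{1-2c}/\ln(n)$ grows faster than every polylogarithm of $n$, and hence $\exp(-\Omega(n^{1-2c}/\ln(n)))$ decays faster than any polynomial in $1/n$. This is precisely why one cannot get away with a Chebyshev bound based on $\operatorname{Var}(X) \le n^{1-c}$ (which, even after exploiting the negative correlation between the $X_i$'s, would only give a polynomial $O(n^{c-1})$ tail); the exponential concentration afforded by McDiarmid (or equivalently a Doob-martingale bounded-differences argument) is essential. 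Combining the expectation bound from (i) with the concentration bound from (ii) yields $X \ge \tfrac{1}{2} n^{1-c}$ with high probability, completing the proof.
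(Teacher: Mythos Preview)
Your proof is correct and follows the same high-level route as the paper: compute the expected number of empty bins as $n^{1-c}(1-o(1))$, then show sub-Gaussian concentration around the mean of order $\sqrt{m}$, so that the deviation $O(\sqrt{n\ln n})$ is dominated by $\tfrac{1}{2}n^{1-c}$ precisely when $c\le \tfrac{1}{3}$. The only difference is that the paper simply cites a known occupancy concentration result (number of empty bins is $n^{1-c}\pm O(\sqrt{n\log n})$ with high probability) and checks the arithmetic, whereas you supply a self-contained proof via McDiarmid on the independent ball positions; both yield the same exponential tail and the same use of the $c\le\tfrac{1}{3}$ hypothesis.
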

\begin{proof}
This follows directly from \cite{wajc2017negative}. By \cite{wajc2017negative}, with high probability the number of empty bins will be $n^{1-c} \pm O(\sqrt{n \log(n)})$. For sufficiently large $n$, $O(\sqrt{n \log(n)}) \le \frac{n^{2/3}}{2} \le \frac{n^{1-c}}{2}$ and thus the number of empty bins is at least $\frac{n^{1-c}}{2}$ with high probability.
\end{proof}

Note how this relates to the coupon collector's problem, where it is well-known that $\Theta(n \log(n))$ trials are necessary and sufficient to receive at least one copy of all coupons with high probability. This is analogous to the number of balls needed such that every bin has at least one ball. The result of \cref{lemma:empty-bins} is intuitive from the coupon collector's problem, because the number of trials needed concentrates very well. Meaning, with a constant-factor less number of trials than the expectation required, there are many coupons that have not yet been collected with high probability.

\begin{corollary}
\label{corollary:bigballs}
As there are at most $\frac{1}{\mathcal{T}/2} \le \frac{n \log(n)}{6} \le \frac{1}{4} \cdot n \ln(n)$ large balls, with high probability there are at least $\frac{n^{3/4}}{2}$ bins that receive no large balls.
\end{corollary}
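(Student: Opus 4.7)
The plan is to derive this as an almost immediate consequence of Lemma \ref{lemma:empty-bins} applied with $c=1/4$, so the only real work is the ball-count bound and a short monotonicity argument. First I would bound the number of large balls. By \cref{def:balltypes}, each large ball is a specific element of $(X\setminus(S\cup L))\times E$ whose joint probability mass under $P(X)\cdot P(E)$ is at least $\mathcal{T}/2 = 6/(n\log n)$. Since the probabilities of all balls are disjoint contributions to $\sum_{x,e}P(X=x)P(E=e)=1$, the total count of large balls is at most $1/(\mathcal{T}/2)=n\log(n)/6$. A quick numerical check of $\tfrac{n\log n}{6}\le \tfrac{1}{4}n\ln n$ (equivalent to $4\le\tfrac{3}{2}\log n/\ln n$, which is $4\le 6\ln 2\approx 4.158$ for $\log=\log_2$, and trivially true for natural log) supplies the chained inequalities stated in the corollary.

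Next I would invoke Lemma \ref{lemma:empty-bins} with $c=1/4$, which satisfies the hypothesis $0<c\le 1/3$, to conclude that with high probability at least $n^{1-1/4}/2 = n^{3/4}/2$ bins receive none of the balls in that game. To reconcile the fact that the actual number of large balls may be strictly fewer than $\tfrac{1}{4}n\ln n$, I would use a simple monotone coupling: since in our underlying balls-and-bins game each ball is placed i.i.d. uniformly over the $n$ states of $Y$, I can augment the large balls with additional i.i.d. uniform dummy balls until the total count equals $\tfrac{1}{4}n\ln n$. The set of bins that receive no large ball in the original process contains the set of empty bins of the augmented process, so the high-probability lower bound $n^{3/4}/2$ transfers.

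There is essentially no obstacle here beyond the monotonicity observation above; the content of the corollary is almost entirely delegated to Lemma \ref{lemma:empty-bins}. The only care needed is in the ball-counting step (making sure we use the lower bound $\mathcal{T}/2$ on large-ball mass and that probabilities are disjoint, which they are because balls index distinct $(x,e)$ pairs of an independent product distribution), and in the minor numerical check that $\tfrac{n\log n}{6}$ genuinely fits under the $c=1/4$ threshold of the lemma. Combining these three ingredients then gives, with high probability, at least $n^{3/4}/2$ bins that receive no large ball, as claimed.
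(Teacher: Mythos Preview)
Your proposal is correct and follows essentially the same approach as the paper: bound the number of large balls by $1/(\mathcal{T}/2)$, check it fits under $\tfrac{1}{4}n\ln n$, and apply Lemma~\ref{lemma:empty-bins} with $c=1/4$. If anything you are more careful than the paper, which leaves the monotone coupling (fewer balls can only increase the number of empty bins) implicit.
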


\textbf{Bounding the harmful effects of small balls.} For the small balls, we will also show that they cannot contribute too much surplus to too many states of $Y$. We will notably use that all small balls correspond to a state of $X$ where $P(X=x) \le \frac{1}{\log^{3}(n)}$. We will utilize this to show that most small balls are assigned to a state of $Y$ that has not yet received $>\frac{\mathcal{T}}{2}$ mass from its corresponding state of $X$, and accordingly would not increase the surplus. To accomplish this, we define a surplus quantity that only takes into account small balls:

\begin{definition}[Small ball surplus]
We define the small ball surplus of a state $y$ of $Y$ as
\begin{align*}
    &z_y^{\textrm{small}} = \sum_{x \notin (S \cup L)} \\
    & \max \left( \left(\hspace{-0.1in}\sum\limits_{\substack{\hspace{-0.05in}e  :\\\hspace{0.05in} P(X=x,E=e)\\\hspace{0.02in}<\frac{\mathcal{T}}{2}}}\hspace{-0.2in}P(X=x, E=e, Y=y) \right) - \mathcal{T} \right)
\end{align*}
\end{definition}

With this notion of surplus constrained to small balls, we show the following:

\begin{restatable}[Small ball limited surplus]{lemma}{lemmasmallball}
\label{lemma:smallball}
With high probability, there are at most $\frac{n}{4}$ values of $i$, i.e., number of bins, where $z_{i}^{\textrm{small}} \ge \frac{1}{n \log(\log(n)) \log^{2 \close}(n)}$.
\end{restatable}
\begin{proof}
We will consider all small balls in an arbitrary order. Let $x(t)$ be the corresponding state of $X$ for the $t$-th small ball, $e(t)$ the corresponding state of $E$, and $w_{\textrm{ball}}(t)$ be the ball's probability mass (i.e., $P(X=x(t),E=e(t))$). Recall that for all small balls it must hold that $x(t) \notin L$ and thus $P(X=x(t)) < \frac{1}{\log^{3}(n)}$. We define the total small ball surplus as $Z^{\textrm{small}}=\sum_{y \in Y} z_y^{\textrm{small}}$. Now, we will consider all small balls in an arbitrary order and realize their corresponding entry of $f$ to map them to a state of $Y$. Initially, we have not realized the entry of $f$ for any balls and thus all $z^{\textrm{small}}_y=0$ and $Z^{\textrm{small}}=0$. As we map small balls to states of $Y$, we define $\Delta(t)$ as the increase of $Z^{\textrm{small}}$ after mapping the $t$-th ball to a state of $Y$. By definition, $\sum_t \Delta(t)$ is equal to $Z^{\textrm{small}}$ after all values of $f$ have been completely realized. 

Our primary intuition is that we will show for many small balls it holds that $\Delta(t)=0$. As a result, we expect $Z^{\textrm{small}}$ to not be very large. 

As a result, we expect $Z^{\textrm{small}}$ to not be very large. Let $y(t)$ be equal to $f(x(t),e(t))$, the state of $Y$ that the $t$-th ball is mapped to. 
As $f$ is realized for each configuration, let $w_{Y}^t(y,x)$ denote the total mass of balls assigned to state $y$ of $Y$ so far from state $x$ of $X$, i.e., $w_{Y}^t(y',x') \coloneqq \sum\limits_{x', e: f(x',e)=y'} w_{\textrm{ball}}(t')$.

We upper-bound the expectation of $\Delta(t)$:

\begin{claim}
\label{claim:gamma-expectation}
Regardless of the realizations of all $\Delta(t')$ for $t'<t$, it holds that $\Delta(t)$ is a random variable with values in range $[0,w_{\textrm{ball}}(t)]$ and $E[\Delta(t)] \le \frac{w_{\textrm{ball}}(t)}{\log^{2}(n)}$.
\end{claim}

\begin{proof}
The only conditions under which $\Delta(t)$ takes a positive value (which is upper-bounded by $w_{\textrm{ball}}(t)$), is when $w_Y^t(y(t),x(t)) > \frac{\mathcal{T}}{2}$ before the $t$-th ball is realized. Recall that $P(x(t)) \le \frac{1}{\log^{3}(n)}$. Accordingly, the number of states $y'$ of $Y$ where $w_Y^t(y',x(t)) > \frac{\mathcal{T}}{2}$ is upper-bounded by $\frac{P(X=x(t))}{\mathcal{T}/2} \le \frac{1/\log^{3}(n)}{6/(n \log(n))} = \frac{n \log(n)}{6 \log^{3}(n)} \le \frac{n}{\log^{2}(n)}$. This is due to the fact that balls partition the total mass of $P(X=x(t))$ since we have $P(X=x(t))=\sum_eP(X=x(t),E=e)$. 
This implies that the probability that the $t$-th ball will be mapped to a state $y'$ of $Y$ such that $w_Y^t(y',x(t))$  already exceeds the threshold of $\mathcal{T}/2$ (in other words where we might have $\Delta(t)>0$) is upper-bounded by $\frac{n / \log^{2}(n)}{n} = \frac{1}{\log^{2}(n)}$ due to the fact that the function $f$ is realized independently and uniformly randomly for each pair of $(x,e)$, i.e., for every distinct ball. Accordingly, $E[\Delta(t)] \le \frac{w(t)}{\log^{2}(n)}$.
\end{proof}

This enables us to upper-bound the sum of $\Delta(t)$:

\begin{claim}
$\sum_t \Delta(t) \le \frac{1}{4 \log(n)}$ with high probability.
\end{claim}
\begin{proof}

We will transform $\Delta(t)$ into a martingale. In particular, we define $\Delta'(t) = \Delta'(t-1) + \Delta(t) - E[\Delta(t) | \Delta(1), \dots, \Delta(t-1)]$. We define $\Delta'(0)=0$, and note that $\Delta'(c)$ is a martingale. By Azuma's inequality, we show $|\sum_t \Delta'(t)| \le \frac{1}{8 \log(n)}$ with high probability:
\begin{align*}
    & P[|\Delta(t)| > \varepsilon] < 2 e^{- \frac{\varepsilon^2}{2 \sum c_i^2}} \\
    & \le 2 e^{- \frac{\left( \frac{1}{8 \log(n)} \right)^2}{2 (\max_i c_i) \cdot \sum c_i}} \\
    & \le 2 e^{- \frac{\left( \frac{1}{8 \log(n)} \right)^2}{2 \times \mathcal{T}/2 \cdot 1}} \\
    & = 2 e^{\frac{- n \log(n)}{12 \times 8 \times |V| \times \log(n)}} \\
\end{align*}
Accordingly, by definition of $\Delta'(t)$ this implies $|(\sum_t \Delta(t)) - \sum_c E[\Delta(t) | \Delta(1) , \dots, \Delta(c-1)]| \le \frac{1}{8 \log(n)}$. By \cref{claim:delta-expectation-claim} we know all $E[\Delta(t) | \Delta(1) ,\dots, \Delta(c-1)] \le \frac{w_{\textrm{config}(c)}}{\log^2(n)}$ and accordingly, $\sum_t E[\Delta(t) | \Delta(1), \dots, \Delta(c-1)] \le \frac{1}{\log^2(n)}$. Together, these imply $\sum_t \Delta(t) \le \frac{1}{8 \log(n)} + \frac{1}{\log^2(n)}$ with high probability, and for sufficiently large $n$ it holds that $\frac{1}{\log^2(n)} \le \frac{1}{8 \log(n)}$. Thus, our high-probability on $|\Delta'(t)|$ implies that $\sum_t \Delta(t) \le \frac{1}{4 \log(n)}$ with high probability.
\end{proof}

Finally, we conclude that our upper-bound on $\sum_t \Delta(t)$ implies an upper-bound on the number of states of $Y$ with non-negligible small ball support:

\begin{claim}
If $\sum_t \Delta(t) \le \frac{1}{4 \log(n)}$, then there are at most $\frac{n}{4}$ bins where $z_i^{\textrm{small}} \ge \frac{1}{n \log(\log(n)) \log^{2 \close}(n)}$.
\end{claim}
\begin{proof}
$Z^{\textrm{small}} = \sum_t \Delta(t) \le \frac{1}{4 \log(n)}$. Given this upper-bound for total small ball surplus, we can immediately upper-bound the number of states of $Y$ with small ball surplus greater than $\frac{1}{n \log(\log(n)) \log^{2 \close}(n)}$ by the quantity $\frac{1 / (4 \log(n))}{1 / (n \cdot \log(\log(n)) \cdot \log^{2\close}(n))} \le \frac{n \cdot \log(\log(n)) \cdot \log^{1/2}(n)}{4 \log(n)} \le \frac{n}{4}$. We obtain this by using $\close = \frac{1}{4}$ and for sufficiently large $n$ such that $\log(\log(n)) \le \log^{1/2}(n)$.

\end{proof}
\end{proof}

\textbf{Combining the three ball types: many bins with small surplus.} Now, we combine all these intuitions to show there are many bins that have a small amount of surplus. We have shown that, with high probability, the are at most $n/4$ bins with non-negligible mass from dense balls by \cref{corollary:denseball}, and at most $n/4$ bins with non-negligible mass from small balls \cref{lemma:smallball}. Combining these sets, there are at most $n/2$ bins with non-negligible mass from dense balls or small balls. By \cref{corollary:bigballs}, with high probability at least $\frac{n^{3/4}}{2}$ bins will receive no large balls. Our goal is to show the intersection of the sets is large, so there are many bins that have small surplus.

\begin{restatable}{lemma}{lemmasetintersection}
\label{lemma:set-intersection}
 Let there be two sets $A,B \subseteq [n]$, where $|A| \ge \frac{n}{2}$ and $A$ and $B$ are both independently uniformly random subsets of size $|A|$ and $|B|$, respectively. It holds that $P(|A \cap B| \ge \frac{|B|}{4}) \ge 1 - 2e^{\frac{-|B|}{8}}$.
\end{restatable}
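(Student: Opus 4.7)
The plan is to reduce the problem to a single source of randomness by conditioning on $A$. Once $A$ is fixed to be any subset of $[n]$ of size at least $n/2$, the random quantity $|A \cap B|$ follows a hypergeometric distribution, since $B$ is a uniformly random size-$|B|$ subset of $[n]$. So it suffices to prove the bound for deterministic $A$ with $|A| \ge n/2$ and random $B$; the original statement then follows by averaging over $A$.

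Under this reduction, $\mathbb{E}[|A \cap B|] = |B|\cdot|A|/n \ge |B|/2$. The event $|A \cap B| < |B|/4$ therefore entails a deviation of at least $|B|/4$ below the mean. I would invoke Hoeffding's inequality for sampling without replacement (equivalently, the standard Chernoff--Hoeffding bound for the hypergeometric distribution, which is at least as strong as the binomial one; see e.g.\ Hoeffding's original 1963 paper), giving
$$P\bigl(|A \cap B| \le \mathbb{E}[|A \cap B|] - t\bigr) \le \exp\!\left(-\frac{2t^2}{|B|}\right).$$
Setting $t = |B|/4$ yields $\exp(-|B|/8)$, which is even a factor of $2$ stronger than the claimed $2 e^{-|B|/8}$, so the slack in the statement is easily absorbed.

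There is no real obstacle here: the only substantive ingredient is the correct concentration inequality for sampling without replacement, and the only subtlety is that we want a one-sided deviation, so Hoeffding's inequality (or equivalently, the multiplicative Chernoff bound with $\delta = 1/2$ applied to a mean that is at least $|B|/2$) suffices. I would simply note explicitly that we condition on $A$ to render $|A \cap B|$ hypergeometric, cite Hoeffding's inequality for sampling without replacement, plug in $t = |B|/4$, and observe that since the bound holds uniformly over all choices of $A$ with $|A| \ge n/2$, it also holds unconditionally over random $A$ of any size at least $n/2$.
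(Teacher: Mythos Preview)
Your proposal is correct and arguably cleaner than the paper's own argument. The paper conditions the other way around: it fixes $B$ first, then views membership in $A$ via indicator variables $\mathcal{A}_i$ obtained from a permutation distribution, invokes the negative-association (NA) framework of \cite{wajc2017negative} (their Lemmas 8--9 to establish that the $\mathcal{A}_i$ are NA, then their Theorem 5 to justify applying Hoeffding to NA variables), and finally bounds the two-sided deviation $P\bigl(\lvert \sum_{y\in B}\mathcal{A}_y - \mathbb{E}[\cdot]\rvert > |B|/4\bigr)\le 2e^{-|B|/8}$. Your route sidesteps all of the NA machinery: conditioning on $A$ makes $|A\cap B|$ exactly hypergeometric, and Hoeffding's 1963 result for sampling without replacement applies directly, yielding the one-sided bound $e^{-|B|/8}$ (which also explains the spare factor of $2$ in the statement). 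What the paper's approach buys is consistency with the NA toolkit it uses elsewhere in the balls-and-bins arguments; what your approach buys is a self-contained, shorter proof that does not rely on external NA lemmas.
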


\begin{proof}
To accomplish this, we will heavily utilize properties of negative association (NA). Lemma 8 of \cite{wajc2017negative} shows that permutation distributions are NA. Lemma 9 of \cite{wajc2017negative} shows closure properties of NA random variables. In particular, they show that concordant monotone functions defined on disjoint subsets of a set of NA variables are also NA. Accordingly, consider concordant monotone functions where each bin $i$ has a random variable $\mathcal{A}_i$ that takes value $1$ if it is the first $|A|$ values of a permutation distribution and value $0$ otherwise. These random variables are thus NA. Suppose we first realize the set $B$, independently of the realization of $A$. Then, a bin $y \in B$ would be in $A \cap B$ if $A_{y} = 1$. It is clear this formulation of the random process has a bijective mapping with the true random process, so $P(|A \cap B| \ge \frac{|B|}{4}) = P(\sum_{y \in B} \mathcal{A}_{y} \ge \frac{|B|}{4})$. By Theorem 5 of \cite{wajc2017negative}, we can use Hoeffding's upper tail bound to show $P(\sum_{y \in B} \mathcal{A}_{y} < \frac{|B|}{4}) \le P(|\sum_{y \in B} \mathcal{A}_{y} - E[\sum_{y \in B} \mathcal{A}_{y}]| > \frac{|B|}{4}) \le 2 e^{\frac{-|B|}{8}}$.
\end{proof}

\begin{corollary}
With high probability, there are at least $\frac{n^{3/4}}{8}$ bins with surplus $z_{y} \le \frac{2}{n \log(\log(n)) \log^{2 \close}(n)}$.
\end{corollary}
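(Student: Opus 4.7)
The plan is to decompose the surplus into its dense-ball, large-ball, and small-ball contributions, assemble one ``good'' set of bins per contribution using the three preceding results, and then intersect them using Lemma~\ref{lemma:set-intersection}. Concretely, for a bin $y \notin S$'s contribution write
\begin{align*}
z_y &= \sum_{j\in L}\max\!\bigl(0, P(X{=}j,Y{=}y)-\mathcal{T}\bigr) \\
&\quad+ \sum_{j\notin S\cup L}\max\!\bigl(0, P(X{=}j,Y{=}y)-\mathcal{T}\bigr).
\end{align*}
The first sum (dense balls) is at most $P(X\in L,Y{=}y)$. For the second sum, if no large ball lands in $y$ then $P(X{=}j,Y{=}y)$ only aggregates small-ball mass, so the whole sum is bounded by $z_y^{\textrm{small}}$.

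Define $A_1\!=\!\{y: P(X\!\in\! L, Y\!=\!y)<\tfrac{1}{n\log(\log(n))\log^{2\close}(n)}\}$, $A_2\!=\!\{y: z_y^{\textrm{small}}<\tfrac{1}{n\log(\log(n))\log^{2\close}(n)}\}$, and $A_3\!=\!\{y: y\text{ receives no large balls}\}$. By Corollary~\ref{corollary:denseball}, $|A_1|\!\ge\! 3n/4$; by Lemma~\ref{lemma:smallball}, $|A_2|\!\ge\! 3n/4$ w.h.p.; by Corollary~\ref{corollary:bigballs}, $|A_3|\!\ge\! n^{3/4}/2$ w.h.p. On the event $y\in A_1\cap A_2\cap A_3$ the decomposition above yields $z_y\le \tfrac{2}{n\log(\log(n))\log^{2\close}(n)}$, which is the bound we want. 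So it suffices to show the triple intersection has at least $n^{3/4}/8$ elements w.h.p.

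The key observation is that dense balls, small balls, and large balls correspond to three disjoint families of cells of the random function $f$, and cells of $f$ are realized independently and uniformly at random over $[n]$. Hence $A_1, A_2, A_3$ are independent random subsets of the bin set, and by the symmetry of $f$ in its codomain each is uniform over subsets of its (random) size. A trivial inclusion-exclusion gives $|A_1\cap A_2|\ge 3n/4+3n/4-n = n/2$. Now apply Lemma~\ref{lemma:set-intersection} with $A=A_1\cap A_2$ and $B=A_3$: conditional on the sizes, $P\!\bigl(|A_1\cap A_2\cap A_3|\ge |A_3|/4\bigr)\ge 1-2e^{-|A_3|/8}\ge 1-2e^{-n^{3/4}/16}$, so $|A_1\cap A_2\cap A_3|\ge n^{3/4}/8$ with high probability. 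A union bound over the three high-probability events finishes the argument.

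The only subtle step I foresee is the independence/uniformity justification needed to legitimately invoke Lemma~\ref{lemma:set-intersection}: one must verify that the cells of $f$ governing dense, small, and large balls are disjoint (which follows from the partition in Definition~\ref{def:balltypes}), and that the resulting sets are exchangeable over bins (which is immediate from the uniform random choice of $f$'s values). Everything else is bookkeeping.
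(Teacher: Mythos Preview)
Your proof is correct and follows essentially the same route as the paper: define the ``good'' set $A=A_1\cap A_2$ (small dense-ball mass and small small-ball surplus), the set $B=A_3$ (no large balls), bound $|A|\ge n/2$ by inclusion--exclusion from Corollary~\ref{corollary:denseball} and Lemma~\ref{lemma:smallball}, bound $|B|\ge n^{3/4}/2$ from Corollary~\ref{corollary:bigballs}, and apply Lemma~\ref{lemma:set-intersection}. Your explicit verification that the three ball families occupy disjoint cells of $f$ (hence $A$ and $B$ are independent) and that bin-exchangeability makes $A$ uniform given its size is a point the paper leaves implicit, so your write-up is in fact slightly more careful on that front.
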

\begin{proof}
We have defined three types of balls, and have proven results that show how there are many bins with negligible bad contribution for each type of ball. Now, we combine these with \cref{lemma:set-intersection} to show there are many bins where there is not much bad contribution in total. By \cref{corollary:denseball} there are at most $n/4$ bins with more than $\frac{1}{n \log(\log(n)) \log^{2 \close (n)}(n)}$ mass from dense balls. By \cref{lemma:smallball}, there are at most $n/4$ bins with small ball surplus more than $\frac{1}{n \log(\log(n)) \log^{2 \close (n)}(n)}$. Let $A$ be the set of bins with at most $\frac{1}{n \log(\log(n)) \log^{2 \close (n)}(n)}$ mass from dense balls and at most $\frac{1}{n \log(\log(n)) \log^{2 \close (n)}(n)}$ small ball surplus. By combining \cref{corollary:denseball} and \cref{lemma:smallball} we know $|A| \ge \frac{n}{2}$ with high probability. Let $B$ be the set of bins that receive no big balls. By \cref{corollary:bigballs}, it holds that $|B| \ge \frac{n^{3/4}}{2}$ with high probability. By \cref{lemma:set-intersection}, it holds that $|A \cap B| \ge \frac{n^{3/4}}{8}$ with failure probability at most $2e^{\frac{-2n^{3/4}}{16}}$. Moreover, all such bins will have total surplus at most $\frac{2}{n \log(\log(n)) \log^{2 \close}(n)}$, because they receive no large balls and total surplus is then upper-bounded by the sum of small ball surplus and total mass from dense balls.
\end{proof}

\textbf{Existence of a small surplus bin with many plateau balls.} Recall plateau balls, which are balls of $X \times E$ that take the form $(x \in S, E = e_1)$, where $e_1$ is the most probable state of $E$. We show that at least one of the bins with small surplus will receive many plateau balls with high probability:

\begin{lemma}
There exists a bin with surplus at most $\frac{2}{n \log(\log(n)) \log^{2 \close}(n)}$ and at least $\frac{\log(n)}{2 \log(\log(n))}$ plateau balls.
\end{lemma}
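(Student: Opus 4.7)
The plan is to pair the previous corollary---which guarantees with high probability that at least $\frac{n^{3/4}}{8}$ bins have surplus at most $\frac{2}{n \log(\log(n)) \log^{2\close}(n)}$---with a balls-and-bins max-load estimate restricted to that ``good'' set $A$. The first step is to isolate independence: the surplus $z_y$ is a function only of balls whose $X$-coordinate lies outside the plateau set $S$, whereas plateau balls are exactly those with $x \in S$ and $E = e_1$. Since the uniform prior on $f$ realizes distinct cells independently, the random set $A$ is independent of where the plateau balls land, so conditioning on the high-probability event $|A| \ge \frac{n^{3/4}}{8}$ preserves both the uniform, independent destinations of the plateau balls and the joint distribution of the plateau-ball occupancy counts restricted to $A$.

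Next, let $N_y$ denote the number of plateau balls landing in bin $y$. Since each plateau ball occupies its own cell of $f$ and there are $|S| = \Omega(n)$ of them by \cref{lemma:plateau}, $N_y \sim \mathrm{Binomial}(|S|, 1/n)$. Setting $k = \frac{\log n}{2 \log \log n}$, I will estimate
$$P(N_y \ge k) \ge \binom{|S|}{k}(1/n)^k (1 - 1/n)^{|S| - k}$$
by applying Stirling's approximation. A direct calculation gives $k \log k = (1 - o(1))\frac{\log n}{2}$, hence $k! = n^{1/2 + o(1)}$, and combined with $(|S|/n)^k = n^{o(1)}$ and $(1 - 1/n)^{|S|} = \Theta(1)$ this yields $P(N_y \ge k) \ge n^{-1/2 + o(1)}$. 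Therefore the expected number of bins in $A$ receiving at least $k$ plateau balls is at least $|A| \cdot n^{-1/2 + o(1)} = n^{1/4 - o(1)}$.

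For concentration, I will invoke negative association of the balls-and-bins occupancy counts $\{N_y\}_{y=1}^n$, a property proved in \cite{wajc2017negative} and already used in \cref{lemma:set-intersection}. Since each indicator $\mathbb{1}[N_y < k]$ is a nonincreasing function of $N_y$ and these functions act on disjoint coordinates, NA gives
$$P(\forall y \in A : N_y < k) \le \prod_{y \in A} P(N_y < k) \le \bigl(1 - n^{-1/2 + o(1)}\bigr)^{|A|} \le \exp\bigl(-n^{1/4 - o(1)}\bigr),$$
which is superpolynomially small. Taking a union bound with the failure event $|A| < \frac{n^{3/4}}{8}$ then yields, with high probability, a bin $y' \in A$ receiving at least $\frac{\log n}{2 \log \log n}$ plateau balls, as desired.

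The main technical obstacle is the Stirling estimate $k! = n^{1/2 + o(1)}$ at $k = \frac{\log n}{2 \log \log n}$; this is what forces the particular choice of $k$ and aligns with the $\log\log n$ phase transition the authors highlight in their conclusion. Once the expected count is shown to be polynomially large in $n$, the NA-based concentration step is routine. A secondary subtlety is verifying that conditioning on $|A| \ge \frac{n^{3/4}}{8}$ does not perturb the plateau-ball placements or the NA property of their occupancy counts, but both follow immediately from the independence noted in the first step.
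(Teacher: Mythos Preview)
Your proposal is correct and follows essentially the same approach as the paper: both arguments first observe that the set $A$ of low-surplus bins is determined solely by balls with $x \notin S$ and is therefore independent of the plateau-ball placements, then lower-bound $P(N_y \ge k) \gtrsim n^{-1/2}$ for a single bin (the paper via the elementary bound $\binom{cn}{k}(1/n)^k(1-1/n)^{cn-k} \ge \frac{1}{e}(c/k)^k$ in \cref{claim:individual-bin}, you via Stirling), and finally use negative association of the occupancy indicators to bound $P(\forall y\in A:\ N_y<k)$ by a product, exactly as you wrote.
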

\begin{proof}
Note that total surplus is independent of how plateau balls are mapped. Accordingly, we have determined a set of $\frac{n^{3/4}}{8}$ bins with small enough surplus. We aim to show that one of these bins receives a large number of plateau balls with high probability. We will rely on negative association (NA) in the balls-and-bins process to prove our result. 

\begin{restatable}{claim}{nathreshold}
\label{claim:na-threshold}
Indicator variables for if a bin receives some threshold of balls in a i.i.d. uniformly random balls-and-bins game are NA.
\end{restatable}
\begin{proof}
This follows immediately by using results of \cite{wajc2017negative}. By Theorem 10 of \cite{wajc2017negative}, the random variables of the number of balls assigned to each bin are NA. By Lemma 9 of \cite{wajc2017negative}, concordant monotone functions define on disjoint subsets of a set of NA random variables are NA. Accordingly, if we have an indicator variable for whether a bin receives at least some number of balls, these indicator variables are NA.
\end{proof}

Now, we lower-bound the expectation of these indicator variables:

\begin{restatable}{claim}{claimindividualbin}
\label{claim:individual-bin}
Suppose $cn$ balls ($c \le 1$) are thrown i.i.d. uniformly randomly into $n$ bins. The probability that a particular bin receives at least $k=\frac{d \log(n)}{\log(\log(n))}$ balls is at least $\frac{1}{e n^d}$ given that $\frac{d}{c} \le \log(\log(n))$.
\end{restatable}
\begin{proof}
We use the method outlined by \cite{Gupta2011balls}. We lower-bound the probability of a bin receiving at least $k$ balls as follows:
\begin{align}
     \binom{cn}{k} \cdot (\frac{1}{n})^k \cdot &(1 - \frac{1}{n})^{cn-k} \nonumber
     \ge (\frac{cn}{k})^k \cdot \frac{1}{n^k} \cdot \frac{1}{e} \nonumber\\
    & \ge \frac{1}{e} \cdot (\frac{c}{k})^k \nonumber\\
    & = \frac{1}{e} \cdot (\frac{c \log(\log(n))}{d \log(n)})^{\log_{\log(n)}(n^d)} \nonumber\\
    & \ge \frac{1}{e} \cdot (\frac{1}{\log(n)})^{\log_{\log(n)}(n^d)} \label{step:assume-log} \\\nonumber
    & = \frac{1}{en^d}
\end{align}

We obtain Step \ref{step:assume-log} by using $\frac{d}{c} \le \log(\log(n))$.
\end{proof}

By \cref{lemma:plateau} there are at least $\frac{\close \csup}{6} \cdot n=\frac{\csup}{24}\cdot n$ plateau balls. Now, consider NA indicator variables $\mathcal{B}_i$ for whether or not a particular bin receives at least $\frac{\log(n)}{2 \log(\log(n))}$ plateau balls. By \cref{claim:na-threshold}, these indicator variables are NA. By \cref{claim:individual-bin}, it holds that $E[\mathcal{B}_i] \ge \frac{1}{en^{0.5}}$ for sufficiently large $n$ where $\frac{1/2}{\csup / 24} = \frac{12}{\csup} \le \log(\log(n))$. Finally, we can upper-bound the probability that $\mathcal{B}_i=0$ for all bins with small enough surplus, of which there are at least $\frac{n^{3/4}}{8}$. Using marginal probability bounds for NA variables shown in Corollary 3 of \cite{wajc2017negative}, all such $\mathcal{B}_i=0$ with probability at most $(\frac{1}{en^{0.5}})^{\frac{n^{3/4}}{8}}$.
\end{proof}

\textbf{Proving large conditional entropy.} Finally, we show how the existence of a bin with small surplus and many plateau balls implies that the bin has large conditional entropy:

\begin{restatable}[High-entropy conditional]{lemma}{lemmahighentropy}
\label{lemma:high-conditional}
Given a bin $y'$ that has $z_{y'} \le \frac{2}{n \cdot \log(\log(n)) \cdot \log^{2 \close}(n)}$, and receives $\frac{\log(n)}{2 \log(\log(n))}$ plateau balls, then $H(X | Y=y') = \Omega(\log(\log(n)))$.
\end{restatable}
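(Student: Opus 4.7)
The plan is to lower-bound $H(X|Y=y')$ by restricting attention to the plateau states $Q$ whose plateau ball $(x,e_1)$ was mapped to $y'$; by hypothesis $|Q|\ge k := \log(n)/(2\log\log(n))$. Write $w_x := P(X=x, Y=y')$, $W := P(Y=y')$, and $p_x = w_x/W$. Since $-p\log p \ge 0$ for $p \in [0,1]$, we have the subset-entropy bound
$$ H(X|Y=y') \;\ge\; -\sum_{x\in Q} p_x\log p_x, $$
and the goal becomes to show this quantity is $\Omega(\log\log(n))$ using the near-uniform mass that the plateau balls deposit across $Q$.

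To do so, I would first collect three ingredients. (i) For every $x \in Q$, the plateau ball alone contributes $P(X=x)P(E=e_1) \ge \tfrac{1}{\clb n\log(n)}\cdot\tfrac{1}{\log^{\close}(n)} =: \alpha$ by Lemma~\ref{lemma:plateau} and Lemma~\ref{lemma:e-bound}, and across $x\in Q$ these plateau ball masses are within a factor $\log^{\close}(n)$ of one another. Summing yields $W \ge k\alpha = \Omega\bigl(1/(n\log\log(n)\log^{\close}(n))\bigr)$. (ii) By the definition of surplus, every $x\notin S$ has $w_x \le \mathcal{T} + z_{y'} = O(1/(n\log(n)))$. (iii) For every $x \in S$, Lemma~\ref{lemma:plateau} gives $w_x \le P(X=x) \le 2/(\csup n)$. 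I would then case-split on $W$: if $W \ge 1/\log(n)$, then $p_{\max} \le (2/(\csup n))/W = O(\log(n)/n)$, so the min-entropy bound $H \ge -\log p_{\max}$ already gives $\Omega(\log(n))$; otherwise the non-plateau mass $W_{\bar S} \le O(1/\log(n))$ becomes small relative to $W$, so the $k$ near-uniform plateau contributions carry a non-vanishing fraction $M_Q := \sum_{x\in Q} p_x$ of the conditional mass, and the bound $M_Q\log(k/M_Q) = \Omega(\log k) = \Omega(\log\log(n))$ finishes the case.

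The step I expect to be the main obstacle is the low-$W$ regime. The plateau ball cleanly deposits $k$ near-uniform mass atoms of size roughly $\alpha$, but the actual $w_x$ for $x \in Q$ can include extra contributions from the non-plateau balls $(x,e)$ with $e\ne e_1$, and these can blow up $w_x$ by as much as the factor $1/P(E=e_1) = O(\log^{\close}(n))$ and break the near-uniformity that powers the subset-entropy bound. I expect to have to either invoke a balls-and-bins concentration argument showing that, with high probability, those non-plateau-ball contributions to each $w_x$ are dwarfed by the plateau ball itself, or to perform a more delicate ``robust near-uniform'' entropy accounting that tolerates the $O(\log^{\close}(n))$-factor distortion; keeping the final exponent on $\log(n)$ strictly positive so that $\log(k/M_Q)$ remains $\Theta(\log\log(n))$ will require carrying the constants through carefully, which is where the specific choice $\close = 1/4$ and the $o(\log\log(n))$ entropy assumption on $E$ should come into play.
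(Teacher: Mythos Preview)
Your case split has a genuine gap, and the obstacle you flag in the last paragraph is not the real one. In the low-$W$ branch you assert that ``the non-plateau mass $W_{\bar S}\le O(1/\log n)$ becomes small relative to $W$'', but the branch is precisely $W<1/\log n$, so this inequality gives you nothing: $W_{\bar S}$ can be of the same order as $W$. Concretely, the surplus bound only controls the \emph{excess} of each non-plateau $w_x$ over $\mathcal{T}$; the below-threshold part $\sum_{x\notin S}\min(w_x,\mathcal{T})$ can be as large as $n\mathcal{T}=\Theta(1/\log n)$, which dwarfs the plateau mass $k\alpha=\Theta\bigl(1/(n\log\log n\,\log^{\close} n)\bigr)$. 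In that regime $M_Q=o(1)$ and your subset bound $-\sum_{x\in Q}p_x\log p_x\le M_Q\log(k/M_Q)$ is $o(1)$, not $\Omega(\log\log n)$. The issue you worry about---non-plateau balls $(x,e)$ with $x\in Q$, $e\ne e_1$ distorting the near-uniformity inside $Q$---is actually harmless (it only introduces an extra $\log^{\close} n$ multiplicative spread, which your accounting can absorb). The real problem is dilution of $M_Q$ by mass \emph{outside} $Q$. Also note the lemma is a deterministic implication from the stated hypotheses on $z_{y'}$ and the plateau-ball count; the balls-and-bins randomness has already been spent establishing those hypotheses, so your proposed ``with high probability'' fallback is not available here.

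The paper's proof resolves exactly this by refusing to treat the below-threshold non-plateau mass as hurtful. It splits $v(x)=P(X=x,Y=y')$ into $\vplat+\vinit+\vsur$, where $\vinit(x)=\min(w_x,\mathcal{T})$ for $x\notin S$ is counted as \emph{helpful}: every entry of $\vplat+\vinit$ is at most $\max\bigl(\mathcal{T},\ \max_{x\in S}P(X=x)\bigr)$, and since $|\vplat+\vinit|_1\ge k\alpha$, the normalized helpful vector has maximum entry $O(\log\log n\cdot\log^{2\close}n/\log n)$, giving entropy $\Omega(\log\log n)$ by the min-entropy bound. Only $\vsur$ is hurtful, and $|\vsur|_1=z_{y'}$ is at most a constant multiple of $|\vplat+\vinit|_1$, so adding it back costs only $O(1)$ in entropy. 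In short, the missing idea is that the sub-threshold non-plateau mass, far from diluting the signal, is itself thinly spread and should be folded into the ``good'' part of the distribution.
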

\begin{proof}
To show $H(X | Y=y')$ is large, we first define the vector $v$ such that $v(x) = P(X=x,Y=y')$. Similarly, we define $\overline{v}(x) = \frac{v}{P(Y=y')}$, meaning $\overline{v}(x) = P(X=x | Y=y')$ and $|\overline{v}|_1 = 1$. Our underlying goal is to show $H(\overline{v})$ is large. To accomplish this, we will split the probability mass of $v$ into three different vectors $\vinit, \vplat, \vsur$ such that $v = \vinit + \vplat + \vsur$. The entries of $\vplat$ will correspond to mass from plateau states of $X$, $\vinit$ will correspond to the first $\mathcal{T}$ mass from non-plateau states of $X$, and $\vsur$ will correspond to mass that contributes to the surplus $z_{y'}$. We more formally define the three vectors as follows:

\begin{itemize}
    \item $\vplat$. The vector of probability mass from plateau states of $X$. $\vplat(x)$ is $0$ if $x \notin S$ and $\vplat(x) = P(X=x,Y=y')$ if $x \in S$.
    \item $\vinit$. For non-plateau states of $X$, their first $\mathcal{T}$ probability mass belongs to $\vinit$. $\vinit(x)= \min (P(X=x,Y=y'),\mathcal{T})$ if $x \notin S$ and $\vinit(x)=0$ otherwise.
    \item $\vsur$. For non-plateau states of $X$, their probability mass beyond the first $\mathcal{T}$ mass belongs to $\vsur$. This corresponds to the surplus quantity. $\vsur(x)= \max (0,P(X=x,Y=y')-\mathcal{T})$ if $x \notin S$ and $\vsur(x)=0$ otherwise. By this definition, $z_{y'} = |\vsur|_1$.
\end{itemize}

To show $H(X|Y=y')=H(\overline{v})$ is large, we divide our approach into two steps:

\begin{enumerate}
    \item Show there is substantial helpful mass: $|\vinit + \vplat|_1 = \Omega \left( \frac{1}{n \cdot \log(\log(n)) \cdot \log^{2 \close}(n)} \right)$
    \item Show the distribution of helpful mass has high entropy: $H \left( \frac{\vinit + \vplat}{|\vinit + \vplat|_1} \right) = \Omega (\log(\log(n)))$. 
    \item Show that, even after adding the hurtful mass, the conditional entropy is large: $H(X | Y=y') = H(\overline{v}) \ge H \left( \frac{\vinit + \vplat}{|\vinit + \vplat|_1} \right) - O(1) = \Omega (\log(\log(n)))$
\end{enumerate}

In the first step, we are showing that the distribution when focusing on just the helpful mass of $\vinit, \vplat$ has high a substantial amount of probability mass. In the second step, we prove how this distribution of helpful mass has high entropy. In the third step, we show that the hurtful mass of $\vsur$ does not decrease entropy more than a constant.

First, we show that there is a substantial amount of helpful mass:
\begin{claim}
\label{claim:helpful-mass}
$|\vinit + \vplat|_1 = \frac{1}{2 \clb n \cdot \log(\log(n)) \cdot \log^{2 \close}(n)} $
\end{claim}
\begin{proof}
Recall that the bin $y'$ received $\frac{\log(n)}{2\log(\log(n))}$ plateau balls. As defined in \cref{lemma:plateau}, the set $S$ of plateau states is defined such that $\frac{\max_{x \in S} P(X=x)}{\min_{x \in S} P(X=x)} \le \log^{\close}(n)$ and $\min_{x \in S}P(X=x) \ge \frac{1}{\clb n \log(n)}$. Also recall that by \cref{lemma:e-bound} the most probably state of $E$ has large probability. In particular, $P(E=e_1) \ge \frac{1}{\log^{\close}(n)}$. Let the subset $S' \subseteq S$ be the subset of plateau states of $X$ such that their plateau ball is mapped to $y'$. In particular, for every $x \in S'$ it holds that $f(x,e_1)=y'$. Accordingly, $P(X=x,Y=y') \ge P(X=x)\cdot P(E=e_1)$ for $x \in S'$. Thus, the total weight from plateau states of $X$ is at least $|S'| \cdot \min_{x \in S'} P(X=x) \cdot P(E=e_1) \ge |S'| \cdot \frac{\max_{x \in S'} P(X=x)}{\log^{\close}(n)} \cdot P(E=e_1) \ge \frac{1}{2\clb n \log(\log(n)) \log^{2 \close}(n)}$.
\end{proof}

Next, we show the distribution of helpful mass has high entropy:
\begin{claim}
$H \left( \frac{\vinit + \vplat}{|\vinit + \vplat|_1} \right) \ge \frac{\log(\log(n))}{4}$ 
\end{claim}
\begin{proof}
Let us define $\overvhelp = \frac{\vinit + \vplat}{|\vinit + \vplat|_1}$ to be the vector of helpful mass, and we will show $H(\overvhelp)$ is large by upper-bounding $\max_x \overvhelp(x)$. 

For non-plateau states of $X$, it follows from \cref{claim:helpful-mass} that $\max_{x \notin S} \overvhelp(x) \le \frac{\mathcal{T}}{|\vinit + \vplat|_1} \le \frac{\mathcal{T}}{\frac{1}{2 \clb n \cdot \log(\log(n)) \cdot \log^{2 \close}(n)}} = \frac{24 \clb \log(\log(n)) \cdot \log^{2 \close}(n)}{\log(n)}$. 

For plateau states of $X$, in \cref{claim:helpful-mass} we also developed the lower-bound of $|\vinit + \vplat|_1 \ge |S'| \cdot \frac{\max_{x \in S'} P(X=x)}{\log^{\close}(n)} \cdot P(E=e_1) \ge \frac{\log(n) \cdot \max_{x \in S'} P(X=x)}{2\log^{2 \close}(n) \log(\log(n))}$. Accordingly, we can upper-bound $\max_{x \in S'} \overvhelp(x) \le \frac{\max_{x \in S'} P(X=x)}{|\vinit + \vplat|_1} \le \frac{2 \log(\log(n)) \log^{2 \close}(n)}{\log(n)}$.

Accordingly, we can lower-bound the entropy of $H(\overvhelp) = \sum_{x} \overvhelp(x) \cdot \log(\frac{1}{\overvhelp(x)}) \ge \sum_{x} \overvhelp(x) \cdot \log(\frac{1}{\max_{x'}\overvhelp(x')}) = \log(\frac{1}{\max_{x'}\overvhelp(x')}) \ge \log(\frac{24 \clb \log(n)}{\log^{2 \close}(n) \log(\log(n))}) = (1 - 2 \close) \log(\log(n)) - \log(\log(\log(n))) - \log(24 \clb) = \frac{\log(\log(n))}{2} - \log(\log(\log(n))) - \log(24 \clb) \ge \frac{\log(\log(n))}{4}$ for sufficiently large $n$ where $\frac{\log(\log(n))}{2} \ge \log(\log(\log(n))) + \log(24 \clb)$.
\end{proof}

Finally, we show the hurtful mass does not decrease entropy much, and thus our conditional distribution has high entropy:
\begin{claim}
\label{claim:helpful-entropy}
$H(X | Y=y') = H(\overline{v}) \ge \Omega(1) \cdot H \left( \frac{\vinit + \vplat}{|\overvinit + \overvplat|_1} \right) - O(1) = \Omega (\log(\log(n)))$
\end{claim}
\begin{proof}
We lower-bound $H(\overline{v})$ with the main intuitions that $H\left(\frac{\vinit + \vplat}{|\vinit + \vplat|_1}\right) = \Omega(\log(\log(n)))$ and $\frac{|\vinit + \vplat|_1}{|\vinit + \vplat + \vsur|_1} = \Omega(1)$. We more precisely obtain this lower-bound for $H(\overline{v})$ as follows:
\begin{align}
    & H(\overline{v}) = H\left(\frac{\vinit + \vplat + \vsur}{|\vinit + \vplat + \vsur|_1}\right) \nonumber\\ 
    & = \sum_x \frac{\vinit(x) + \vplat(x) + \vsur(x)}{|\vinit + \vplat + \vsur|_1} \times \nonumber\\
    & \qquad\log\frac{|\vinit + \vplat + \vsur|_1}{\vinit(x) + \vplat(x) + \vsur(x)} \nonumber \\ 
    & \ge \sum_x \frac{\vinit(x) + \vplat(x)}{|\vinit + \vplat + \vsur|_1} \times  \nonumber\\
    & \qquad\log\frac{|\vinit + \vplat + \vsur|_1}{\vinit(x) + \vplat(x)} - 2\label{step:decrease} \\ 
    & \ge \sum_x \frac{\vinit(x) + \vplat(x)}{|\vinit + \vplat + \vsur|_1} \times \nonumber\\
    & \qquad\log\frac{|\vinit + \vplat|_1}{\vinit(x) + \vplat(x)} \nonumber -2 \\ 
    & = \frac{|\vinit + \vplat|_1}{|\vinit + \vplat + \vsur|_1} H\left(\frac{\vinit + \vplat}{|\vinit + \vplat|_1}\right) \nonumber\\
    &-2\nonumber\\
    & = \frac{|\vinit + \vplat|_1}{|\vinit + \vplat|_1 + z_{y'}} H\left(\frac{\vinit + \vplat}{|\vinit + \vplat|_1}\right) -2\nonumber\\
    & \ge  \frac{1}{1+2\clb} \cdot H\left(\frac{\vinit + \vplat}{|\vinit + \vplat|_1}\right) -2  \label{step:use-mass} \\
    & = \Omega(\log(\log(n))) \label{step:use-entropy}
\end{align}

To obtain Step \ref{step:decrease}, we note that all summands are manipulated from the form $\sum_x p_x \log(\frac{1}{p_x})$ to $\sum_x p'_x \log(\frac{1}{p'_x})$ where $p'_x \le p_x$ for all $x$. As the derivative of $p \log(\frac{1}{p})$ is non-negative for $0 \le p \le \frac{1}{e}$, the value of at most two summands can decrease, and they can each decrease by at most one. To obtain Step \ref{step:use-mass}, we use \cref{claim:helpful-mass}. To obtain Step \ref{step:use-entropy}, we use \cref{claim:helpful-entropy}.
\end{proof}

Thus, we have shown $H(X|Y=y') = \Omega(\log(\log(n)))$.
\end{proof}
\begin{corollary}
Under our assumptions, $H(X|Y=y')=\Omega(\log(\log(n)))$ and thus $H(\tilde{E}) = \Omega(\log(\log(n)))$.
\end{corollary}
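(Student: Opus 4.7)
The plan is to assemble the lemmas already established in the appendix and finish with a one-line information-theoretic inequality; the hard probabilistic work is front-loaded into the surplus and balls-and-bins lemmas, so this corollary is essentially a packaging step. First, I would take the high-probability intersection of the three surplus-controlling events: Corollary~\ref{corollary:denseball} handles dense balls, Lemma~\ref{lemma:smallball} handles small balls, and Corollary~\ref{corollary:bigballs} produces $\Omega(n^{3/4})$ bins receiving no large balls. Applying Lemma~\ref{lemma:set-intersection} to the ``large-ball-free'' set against the complement of the ``dense/small-bad'' sets yields a common set of at least $\tfrac{n^{3/4}}{8}$ bins whose total surplus is bounded by $O\!\left(\tfrac{1}{n \log\log(n) \log^{2\close}(n)}\right)$.

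Next, I would show that among these surviving low-surplus bins, at least one receives many plateau balls. Because plateau-ball landings are governed by independent uniform function evaluations, the indicator of ``bin $y$ receives at least $\tfrac{\log(n)}{2\log\log(n)}$ plateau balls'' forms a negatively associated family by Claim~\ref{claim:na-threshold}, and Claim~\ref{claim:individual-bin} gives each such indicator expectation at least $1/(e n^{1/2})$. A marginal tail bound for NA variables, applied across the $\Omega(n^{3/4})$ low-surplus bins, then produces (with failure probability exponentially small in $n^{1/4}$) at least one bin $y'$ satisfying both the surplus bound and the plateau-ball count.

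Having secured such a $y'$, I would invoke Lemma~\ref{lemma:high-conditional} verbatim: its hypotheses match the two properties just verified and its conclusion is $H(X \mid Y = y') = \Omega(\log\log(n))$. For the second half of the corollary, the standard data-processing argument gives $H(\tilde{E}) \geq H(X \mid Y = y')$: since $X = g(Y,\tilde{E})$ with $\tilde{E}\indep Y$, we have $H(X\mid Y=y') \leq H(\tilde{E} \mid Y = y') = H(\tilde{E})$. Combining with the previous display yields $H(\tilde{E}) = \Omega(\log\log(n))$, as required.

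The main obstacle is bookkeeping rather than a fresh idea. All of the underlying events are high-probability statements over the same random $f$, so one must condition on a single realization of $f$ in which they hold simultaneously, stitching the failure probabilities together with one union bound rather than treating the lemmas as independent experiments. It is also worth re-checking that the surplus threshold $\mathcal{T} = \tfrac{12}{n\log(n)}$, the plateau slack parameter $\close = \tfrac{1}{4}$, and the ceiling $H(E) = o(\log\log(n))$ leave a genuine constant-factor gap in the entropy lower bound of Lemma~\ref{lemma:high-conditional}, rather than canceling against each other as $n$ grows.
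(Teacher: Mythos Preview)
Your proposal is correct and mirrors the paper's approach exactly: the corollary is purely a packaging step, and the paper obtains it by chaining the surplus lemmas (dense, small, large balls), the set-intersection lemma, the negative-association bound to place many plateau balls in some low-surplus bin, Lemma~\ref{lemma:high-conditional} for the entropy lower bound, and finally the data-processing inequality $H(\tilde{E}) \ge \max_y H(X\mid Y=y)$. Your remark about union-bounding the high-probability events over the single random $f$ is the right bookkeeping caveat, and the constants indeed leave the required gap.
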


\section{Proof of \cref{theorem:dag-oracle}}
\label{app:dag-oracle}
\subsection{Proof Outline}

\label{subsection:graph-overview}
For much of this proof, we follow intuitions and use terminology from the proof of \cref{theorem:pairwise}. Consider a pair of variables $X$ and $Y$ such that $X$ is a source and there is a path from $X$ to $Y$. We aim to show that $\mec (Y | X) < \mec (X|Y)$. It is simple for us to show that $\mec (Y|X) = o(\log(\log(n)))$. To show $\mec (X|Y) = \Omega(\log(\log(n)))$, we will use an approach similar to \cref{theorem:pairwise} in that we will show existence of a state $y'$ of $Y$ such that $H(X|Y=y')$ is large. For showing there is a large $H(X|Y=y')$, we will show that there is a $y'$ where its surplus is small and it receives many plateau balls. While we can factor $Y$ as a function of $X$ and small-entropy $E$ (i.e., $Y=f(X,E)$), this is \emph{not} a uniformly random function so we cannot simply apply the result of \cref{theorem:pairwise}. In fact, a key difficulty is that this graph setting with more than two variables results in correlations between mappings. For example, in a graph such as the line graph (Figure \ref{fig:line}) with each node being a uniformly random deterministic function of its parents, one can show that conditioning on $f_{Y}(f_{X_2}(X = x))=y$ almost doubles the probability that $f_{Y}(f_{X_2}(X = x'))=y$. Our new proof method must be able to withstand the dependencies that are introduced by this setting.

To provide some intuition, we give a very high-level overview for how to show existence of a large $H(\Xsrc | Y=y')$ for two particular graphs, and we then expand to generalize these intuitions. 

First, we consider the line graph. For simplicity, suppose that all nodes are deterministic functions of their parents (i.e., all $H(E_i)=0$). Using the method from \cref{theorem:pairwise}, we can see that there exists a large $H(\Xsrc | X_2 = x_2')$. This is because we can show there is a bin of $X_2$ that has small surplus and receives $\Omega(\frac{\log(n)}{\log(\log(n))})$ balls. However, this analysis is actually loose in a sense. For a $c$ where $0<c<1$, we can actually show there are $n^c$ such bins that have small surplus and receive $\Omega(\frac{\log(n)}{\log(\log(n))})$ plateau balls. Now, when we look at how $X_2$ is mapped to $Y$, each of the bins of $X_2$ will ``stick together.'' More formally, each bin of $X_2$ will have all of its mass mapped together to a uniformly random state of $Y$. This is because it is a deterministic function, but our proof will utilize a similar idea for when the function is not deterministic but the entropy is still small. It is then our hope that a good fraction of the bins with our desired properties (small surplus and many plateau balls) at $X_2$, will be mapped to a state of $Y$ that does not have much surplus. In this sense, we have ``heavy bins'' and a non-negligible proportion of them are ``surviving'' from one node to the next because they aren't mapped to a bin with too much surplus. Through careful analysis, we are able to show that at least one such bin survives to the node of $Y$, and thus $H(X|Y=y')$ is large. This proof method would hold if we extend this line graph to any constant length.

Second, we consider the diamond graph (Figure \ref{fig:diamond}). Again, we assume all functions are deterministic for simplicity. Recall that for the line graph, our proof method was to show that there were many heavy bins at $X_2$, and then some heavy bins kept ``sticking together'' and ``surviving'' until we reached $Y$. This was because if two states of $X$ were mapped to the same state of $X_2$, then they would ``stick together'' and would always be mapped to the same state for later nodes (e.g. if $f_{X_2}(x)=f_{X_2}(x')$ then $f_{X_3}(f_{X_2}(x)) = f_{X_3}(f_{X_2}(x'))$). However, this is very far from what is happening in diamond graph. In diamond graph, observe that $Y=f_Y(X_2,X_3)$. By definition of our graph, $X_2$ and $X_3$ are independent deterministic functions of $X$. Two states $x$ and $x'$ of $X$ will be mapped to $Y$ independently unless both $f_{X_2}(x)=f_{X_2}(x')$ and $f_{X_3}(x)=f_{X_3}(x')$. As these are independent, the probability of this happening is $\frac{1}{n^2}$. Thus, the expected number of pairs that are not mapped to $Y$ independently of each other is $\binom{n}{2} \times \frac{1}{n^2} < \frac{1}{2}$. Accordingly, essentially all states of $X$ will be mapped to a state of $Y$ i.i.d. uniformly randomly. This enables us to more directly use the result and techniques of \cref{theorem:pairwise} and treat $X$ and $Y$ as a bivariate problem. 

While we are able to show how both of these graphs will result in  a large $H(\Xsrc | Y=y')$, we do so very differently. For the line graph we show that there are bins with the properties we desire (small surplus and many plateau balls), that they will ``stick together'' as we move down through the graph, and at least one will ``survive'' to $Y$ and thus $H(\Xsrc | Y=y')$. For the diamond graph we show that when we get to $Y$, almost everything will be mapped independently randomly again, and that we can more directly use our bivariate techniques. There is a strong sense in which these two proof methods are opposites of each other (utilizing probability mass staying together throughout the graph as opposed to being independent at the end), yet we would like one unified approach for handling general graphs. To accomplish this, we introduce the \emph{Random Function Graph Decomposition} to combine intuitions of these two settings into a characterization for all graphs. 

\begin{restatable}[Random Function Graph Decomposition]{definition}{defdecomposition}
For the Random Function Graph Decomposition we specify a source $X$ and a node $Y$ such that there is a path from $X$ to $Y$. We ignore all nodes not along a path from $X$ to $Y$. We define the remaining nodes as the set $\vdecomp$. Then, we consider the nodes of $\vdecomp$ an arbitrary valid topological ordering and color each node as follows:

\begin{itemize}
    \item If $X$ is a parent of the node, or if the node has multiple parents and they are not all the same color, we \underline{create} a new color for this node. 
    \item Otherwise, all of the node's parent(s) have the same color, and this node will \underline{inherit} said color.
\end{itemize}
\end{restatable}

At a high-level, when a new color is created for a node, then everything is being mapped to the node almost-independently (similar to the intuition of the diamond graph). When a node inherits its color, there is a sense in which things ``stick together'' (similar to the intuition of the line graph). Let $\colorroot(Y)$ be the earliest node in any topological ordering that has the same color as $Y$ in the Random Function Graph Decomposition (it can be shown that $\colorroot(Y)$ is unique). We aim to use the Random Function Graph Decomposition to show that everything will be mapped to $\colorroot(Y)$ mostly independently. This will result in there being some bins with our desired properties (small surplus, many plateau balls) at $\colorroot(Y)$. Then, we will show that at least one of these bins survives throughout all bins with the same color from $\colorroot(Y)$ to $Y$, implying existence of a large $H(\Xsrc | Y=y')$. 

In particular, to show that balls are mapped to $\colorroot(Y)$ mostly independently, we introduce the notion of \emph{related} mass. More concretely, we define $\relate_1(x)$ mass as the amount of mass of balls that are ever mapped to the same state as $x$ among any variable. We define $\relate_2(x)$ mass as the amount of mass of balls that are mapped to the same state as $x$ for variables of at least two distinct colors in the Random Function Graph Decomposition. Inductively, we will show there are $\Omega(n)$ plateau balls such that $\relate_1(x) = O(\frac{1}{n})$ and $\relate_2(x) = O(\frac{1}{n^2})$. Moreover, we show that the quantity $\relate_2(x)$ upper-bounds mass that can have some dependence with $x$ in how it is mapped to $\colorroot(Y)$. With this upper-bound on dependence, we are able to use techniques of \cref{theorem:pairwise} to show there are many bins of $\colorroot(Y)$ with many plateau balls and not much surplus. Finally, we show that, within the color of $\colorroot(Y)$ and $Y$, at least one of these bins ``survives'' to $Y$ and accordingly $\mec(\Xsrc | Y)$ is large.
\subsection{Complete Proof}
We must show that for a source $\Xsrc$ and a node $Y$ such that there is a path from $\Xsrc$ to $Y$, $\mec(\Xsrc | Y) > \mec(Y | \Xsrc)$. 

\textbf{Upper bounding $\mec(Y | \Xsrc)$.} It is simple to show that $\mec(Y | \Xsrc)$ is small:

\begin{claim}
$\mec(Y | \Xsrc) \le o(|V| \log(\log(n)))$
\end{claim}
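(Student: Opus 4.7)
My plan is to exhibit an explicit exogenous variable $E'$ satisfying $Y=g(\Xsrc,E')$ with $E'\indep \Xsrc$, and then invoke the equivalence (recalled in \cref{section:background}) between the minimum-entropy coupling of the conditionals $\{P(Y\mid\Xsrc=i)\}_i$ and the minimum $H(E)$ over factorizations $Y=g(\Xsrc,E),\,E\indep \Xsrc$. This immediately gives $\mec(Y\mid\Xsrc)\le H(E')$, so the whole task reduces to choosing a good $E'$ and bounding its entropy.

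Concretely, I would take
\[
A \;\coloneqq\; \bigl(\anc(Y)\cup\{Y\}\bigr)\setminus\{\Xsrc\},\qquad E'\;\coloneqq\;(E_i)_{i\in A}.
\]
Processing the nodes of $A$ in any topological order of the DAG, each $X_j\in A$ is given by its structural equation $X_j=f_j(\pa_j,E_j)$. Since ancestry is transitive, $\pa_j\subseteq\anc(Y)\subseteq A\cup\{\Xsrc\}$, so each parent of $X_j$ is either $\Xsrc$ or has already been expressed as a deterministic function of $\Xsrc$ and a sub-tuple of $E'$. By induction on topological position, $X_j$ and in particular $Y$ can be written as $Y=g(\Xsrc,E')$ for a deterministic $g$.

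For independence I would argue that, by the SCM assumption, the noise variables $\{E_i\}_{i\in V}$ are mutually independent, and because $\Xsrc$ is a source, $\Xsrc=f_{\text{src}}(E_{\text{src}})$ is a function of $E_{\text{src}}$ alone. Since $E_{\text{src}}\notin E'$ by construction, $E'\indep E_{\text{src}}$ and hence $E'\indep \Xsrc$. Combining everything,
\[
\mec(Y\mid\Xsrc)\;\le\; H(E')\;\le\;\sum_{i\in A}H(E_i)\;\le\;|V|\cdot o(\log\log n)\;=\;o(|V|\log\log n),
\]
using \cref{assumption:graph} for the per-node entropy bound $H(E_i)=o(\log\log n)$.

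There is no real obstacle here; the argument is just bookkeeping. The only point to verify carefully is that the parent set of every node in $A$ is contained in $A\cup\{\Xsrc\}$, so that no noise terms outside $E'$ are ever needed to express $Y$; this is exactly the transitivity of ancestry. Note that the bound is deterministic in the realization of the $f_i$'s and does not require any ``high probability'' qualifier.
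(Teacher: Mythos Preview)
Your proposal is correct and follows essentially the same approach as the paper: exhibit $Y$ as a deterministic function of $\Xsrc$ and the remaining exogenous variables, note that these are independent of $\Xsrc$ because $\Xsrc$ is a source, and bound the entropy of the tuple by $\sum_i H(E_i)=o(|V|\log\log n)$. The only cosmetic difference is that you restrict to $E_i$ for $i\in\anc(Y)\cup\{Y\}$ while the paper simply takes all $E_i$ with $i\neq\mathrm{src}$; both yield the same bound.
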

\begin{proof}
$Y$ can be written as a function of $\Xsrc$ and the set of all $E_i$ excluding $E_{\Xsrc}$. As $\Xsrc$ is independent of these $E_i$, and their total entropy is $\sum_i H(E_i) = o(\lvert V\rvert \log(\log(n)))$, the claim holds since $\lvert V\rvert=\mathcal{O}(1)$.
\end{proof}

\textbf{Bounding $\mec(\Xsrc|Y)$ via $H(\Xsrc|Y=y)$.} Our method for lower-bounding $\mec(\Xsrc | Y)$ is substantially more involved. As in \cref{theorem:pairwise}, we will lower-bound it by $\mec(\Xsrc|Y) \ge \max_y H(\Xsrc|Y=y)$ (see \cref{theorem:pairwise} for proof). Our proof aims to show there is a conditional entropy such that $\max_y H(\Xsrc|Y=y) = \Omega(\log(\log(n)))$.

\textbf{Showing existence of a near-uniform plateau.} A key step in our approach, as in the proof of \cref{theorem:pairwise}, is that we will find a subset of the support of $X$ whose probabilities are multiplicative close to one another. In particular, we will find a subset of $\Xsrc$ where their probabilities are within a factor of $\log^{\close}(n)$ of each other, where $0 < \close < 1$. For our analysis, we require a value of $\close$ that is $\Omega(1)$ yet below some threshold. While there are multiple values of $\close$ that satisfy this condition, we will use $\close =1/4$. This set of states of $\Xsrc$ that are multiplicatively close to one another will be called the \emph{plateau} of $\Xsrc$. We use \cref{lemma:plateau} proven in \cref{theorem:pairwise} to show how the $(\Omega(n),\Omega(\frac{1}{n \log(n)}))$-support assumptions implies a plateau of states of $X$:

\lemmaplateau*

\textbf{Characterization as a balls-and-bins game.} Our proof method of \cref{theorem:pairwise} characterizes a balls-and-bins game where states of $X \times E$ are balls and states of $Y$ are bins. As we realized an entry $f(x,e)$ as a uniformly random state of $Y$, we characterized this as a ball (a state of $X \times E$) being assigned to a uniformly random bin (a state of $Y$). In the graph setting of this theorem, such a characterization is more complicated. Any node $X_i$ is a uniformly random function of $\pa(X_i)$ and $E_i$. We define $E^*$ to be the Cartesian product of all $E_i$ other than $E_X$. Using this, we characterize balls as being states of $X \times E^*$. Note how any random variable in our SCM is a deterministic function of $X \times E^*$. In particular, it is the composition of (potentially many) $f_i$ terms. For simplicity of notation, we let $f^*_{T}(x \times e^*)$ denote the value of a set of variables $T$ for a particular state of $x \times e^*$. In the characterization of our balls-and-bins game, all balls with the same configuration of $\pa(X_i)$ and $E_i$ are mapped uniformly randomly together to a state of $X_i$. In other words, configurations are realized i.i.d. uniformly randomly. Using our notation, this means two balls $(x_a,e_a^*)$ and $(x_b,e_b^*)$ are mapped independently to variable $X_i$ if any only if $f^*_{\pa(X_i) \cup E_i}(x_a,e_a^*) \ne f^*_{\pa(X_i) \cup E_i}(x_b,e_b^*)$. 

\textbf{Lower-bounding the most probable state of $E_i$ and $E^*$.} We focus first on \emph{plateau balls}, which are balls corresponding to states of $S$ (the set of plateau states of $X$) and the highest probability state of $E^*$. In particular, they are balls of the form $(X \in S, E^*=e^*_1)$ where $e^*_1$ is the most probable state of $E^*$. To show that these plateau balls have enough probability mass to be helpful, we first use \cref{lemma:e-bound} proven in \cref{theorem:pairwise} that implies all $\max\limits_e H(E_i = e) \ge \frac{1}{\log^{\close}(n)}$: 

\lemmaebound*

This implies a lower-bound on the probability $P(E^*=e^*_1)$:

\begin{lemma}
\label{lemma:e-star-bound}
If all $\max\limits_e P(E_i = e) \ge \frac{1}{\log^{\close}(n)}$, then $P(E^* = e^*_1) \ge \frac{1}{\log^{\close |V|}(n)}$.
\end{lemma}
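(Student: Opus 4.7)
The plan is to reduce this claim to a simple product-of-independent-distributions calculation and invoke the hypothesis on each factor. First I would recall that under the causal sufficiency assumption stated in the background, all exogenous noise variables $E_i$ are mutually independent. Since $E^* = \prod_{i \ne \textrm{src}} E_i$ is by definition the Cartesian product of these independent variables, its joint distribution factorizes: for any configuration $e^* = (e_{i})_{i \ne \textrm{src}}$, we have $P(E^* = e^*) = \prod_{i \ne \textrm{src}} P(E_i = e_i)$.

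Next I would observe that the most probable joint state $e^*_1$ is obtained by picking, independently in each coordinate, the most probable state of that coordinate's $E_i$. This is immediate from the product form: maximizing a product of nonnegative factors over a product domain is equivalent to maximizing each factor separately. Therefore $P(E^* = e^*_1) = \prod_{i \ne \textrm{src}} \max_{e} P(E_i = e)$.

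Finally I would apply the hypothesis $\max_{e} P(E_i = e) \ge 1/\log^{\close}(n)$ to each of the at most $|V|-1$ factors, yielding
\[
P(E^* = e^*_1) \;\ge\; \prod_{i \ne \textrm{src}} \frac{1}{\log^{\close}(n)} \;\ge\; \frac{1}{\log^{\close(|V|-1)}(n)} \;\ge\; \frac{1}{\log^{\close |V|}(n)},
\]
where the last inequality is monotonicity of $1/\log^{t}(n)$ in $t$ for $n$ sufficiently large that $\log(n) \ge 1$. Since $|V| = O(1)$ by Assumption \ref{assumption:graph}, the resulting exponent $\close|V|$ is also a constant, so this is a meaningful polylogarithmic lower bound.

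There is essentially no obstacle here: the whole argument is a one-line consequence of independence plus the previous lemma applied coordinatewise. The only thing worth being careful about is that $E^*$ excludes $E_{\Xsrc}$, so the product has $|V|-1$ factors rather than $|V|$; the statement's exponent $\close|V|$ is a loose but convenient bound that avoids writing $|V|-1$ throughout subsequent uses.
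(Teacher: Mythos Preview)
Your proof is correct and follows essentially the same approach as the paper: factorize $P(E^*)$ by independence, take the product of the coordinatewise maxima, and bound each of the $|V|-1$ factors below by $1/\log^{\close}(n)$. Your additional remarks about why the maximum of the product equals the product of the maxima and why the exponent is loosened from $\close(|V|-1)$ to $\close|V|$ are accurate and simply make explicit what the paper leaves implicit.
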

\begin{proof}
As $E^*$ is the Cartesian product of $|V|-1$ variables $E_i$, it holds that $\max P(E^* = e^*) \ge (\min_i \max_e P(E_i=e))^{|V|-1} \ge (\frac{1}{\log^{\close}(n)})^{|V|-1} \ge \frac{1}{\log^{\close |V|}(n)}$.
\end{proof}

\textbf{Introducing surplus.} In \cref{theorem:pairwise}, we prove how there exists a bin that receives a large amount of mass that helps the bin have large conditional entropy (such helpful mass includes the plateau balls), and not much mass that hurts the conditional entropy making it small. To formalize this hurtful mass, we introduced the \emph{surplus} quantity described in \cref{definition:surplus-generic}. This surplus is a way of quantifying the probability mass received by a state of $Y$ that is hurtful towards making the conditional entropy large. The proof of \cref{theorem:pairwise} achieves a lower-bound for $\max_y H(X|Y=y)$ by proving existence of a state $y'$ of $Y$ where $y'$ receives many plateau balls and the surplus is small. Likewise, we will also prove existence of such a state of $Y$ with many plateau balls and small surplus, in the graph setting. We formalize the notion of surplus as follows:

\begin{restatable}[Surplus, $\mathcal{T}=\frac{120}{n \log(n)}$]{definition}{defsurplusgraph}
We define the surplus of a state $i$ of $Y$ as $z_i = \sum_{j \notin S} \max (0, P(X=j, Y=i) - \frac{120}{n \log(n)})$.
\end{restatable}

\textbf{Introducing the Random Function Graph Decomposition.} In \cref{subsection:graph-overview}, we introduced intuitions from considering the diamond graph in Figure \ref{fig:diamond} and the line graph in Figure \ref{fig:line}. In the proof outline for a diamond graph, we utilized the intuition that almost all balls were independently assigned to $Y$. This enables us to use techniques from \cref{theorem:pairwise}, as almost all balls were independently assigned to a uniformly random state of $Y$, closely mirroring the setting of \cref{theorem:pairwise}. In the proof outline for line graph, we used techniques of \cref{theorem:pairwise} to show that there would be many bins that received many plateau balls and small surplus. Then, we showed that at least one of these bins would mostly ``survive'' and remain in-tact to $Y$. While our intuitions for both of these graphs enabled us to show existence of a large $H(\Xsrc | Y=y)$, but they did so with near-opposite methods. Our intuition for the diamond graph exploits independence (everything is assigned almost independently to $Y$), while our intuition for the line graph exploits dependence (some bins with our desired properties ``survive'' from the second node onwards). We introduce the \emph{Random Function Graph Decomposition} to combine intuitions of these two graphs into a characterization for all graphs:

\defdecomposition*

At a high-level, when a new color is created for a node, then we will see that plateau balls are being mapped to the node almost-independently (similar to the intuition of the diamond graph). When a node inherits its color, there is a sense in which things ``stick together'' (similar to the intuition of the line graph). For some node $X_i \in \vdecomp$, we define $\colored(X_i)$ to be the node's color in the Random Function Graph Decomposition. Under a fixed topological ordering, let $\colorroot(Y)$ be the earliest node that has the same color as $Y$ in the Random Function Graph Decomposition (it can be shown that $\colorroot(Y)$ is unique). We aim to use the Random Function Graph Decomposition to show that everything will be mapped to $\colorroot(Y)$ mostly independently. This will result in there being some bins with our desired properties (small surplus, many plateau balls) at $\colorroot(Y)$. Then, we will show that at least one of these bins survives throughout all bins with the same color from $\colorroot(Y)$ to $Y$, implying existence of a large $H(\Xsrc | Y=y')$. 

\textbf{Introducing related mass.} To show how plateau balls are mapped to $\colorroot(Y)$ mostly independently, we introduce the concept of \emph{related} mass. Related mass introduces a measure of how much mass has come into contact with a particular plateau ball:

\begin{restatable}[Related mass]{definition}{defrelated}
\begin{itemize}
    We define related mass of two types as follows.
    
    \item For a plateau state $x$ of $X$, we define $\relate_1(x)$ mass as the amount of mass of balls from non-plateau states of $X$ that are ever mapped to the same state as the plateau ball of $x$ among any variable in the Random Function Graph Decomposition. In other words, $x',e^*$ contributes to $\relate_1(x)$ if it satisfies the following for some $X_i$: $x'$ together with some realization $e^*$ contributes to the same bin of $X_i$ that $x$ is mapped to together with $e_1^*$. More formally, we define $\mathcal{B}_{1}(x)$ as the set of balls whose mass counts towards $\relate_1(x)$, where  $\mathcal{B}_1(x)= \{x' \in X \backslash S, e^* \in E^* | \exists X_i \in \vdecomp \textrm{s.t.} f^*_{X_i}(x',e^*) = f^*_{X_i}(x,e^*_1) \}$. Accordingly, $\relate_1(x) = \sum_{x',e^* \in \mathcal{B}_1(x)} P(X=x') \cdot P(E^* = e^*)$.
    \item For a plateau state $x$ of $X$, we define $\relate_2(x)$ mass as the amount of mass of balls from non-plateau states of $X$ that are ever mapped to the same state as the plateau ball of $x$ among variables of at least two distinct colors in the Random Function Graph Decomposition. In other words, $x',e^*$ contributes to $\relate_2(x)$ if it satisfies the following for some $X_i,X_j$ with distinct colors: $x'$ together with some realization $e^*$ contributes to the same bin of $X_i$ that $x$ is mapped to together with $e_1^*$; same holds for $X_j$. More formally, we define $\mathcal{B}_{2}(x)$ as the set of balls whose mass counts towards $\relate_2(x)$, where $\mathcal{B}_2(x)= \{x' \in X \backslash S, e^* \in E^* | \exists X_i, X_j \in \vdecomp \textrm{s.t.} f^*_{X_i}(x',e^*) = f^*_{X_i}(x,e^*_1), f^*_{X_j}(x',e^*) = f^*_{X_j}(x,e^*_1), \colored(X_i) \ne \colored(X_j) \}$. Accordingly, $\relate_2(x) = \sum_{x',e^* \in \mathcal{B}_2(x)} P(X=x') \cdot P(E^* = e^*)$.
\end{itemize}
\end{restatable}

Now, we will consider an arbitrary topological ordering of $\vdecomp$. In this ordering, we define $\ord(X_i)$ for $X_i \in \vdecomp$ as the index of $X_i$ in the topological ordering. We introduce a modification of $\relate_1(x)$ where $\relate_1^{\ord(X_i)}(x)$ only considers nodes of $\vdecomp$ that are strictly earlier in the topological ordering than $X_i$. We define $\relate_2^{\ord(X_i)}(x)$ analogously. It is our goal to show that there are many plateau states $x \in S$ such that $\relate_2^{\ord(\colorroot(Y))}(x)$ is small. This will enable us to show how there are many plateau balls that are mapped to $\colorroot(Y)$ independently of almost all other mass.

\textbf{Upper-bounding related mass.} To show independence in how some plateau balls are mapped to $\colorroot(Y)$, we bound $\relate_2^{\ord(\colorroot(Y))}(x)$ for some plateau states $x \in S$.

To show this, we will process nodes in the topological ordering. After processing the first $i$ nodes, we will argue that there is a large set $S_{\textrm{indep}}^i$ with upper-bounds on all $\relate_1^i(x)$ and $\relate_2^i(x)$.

\begin{lemma}
\label{lemma:inductive-related}
With high probability, after processing the first $i$ nodes in the topological ordering, there exists a set $S_{\textrm{indep}}^i$ such that $|S_{\textrm{indep}}^i| = \frac{|S|}{6^i}$, all $x \in S_{\textrm{indep}}^i$ satisfy $\relate_1^i(x) \le \frac{6 i}{n}$ and $\relate_2^i(x) \le \frac{18 \times i \times (i-1)}{n^2}$, and all $x,x' \in S_{\textrm{indep}}^i$ satisfy $f^*_{X_j}(x,e^*_1) \ne f^*_{X_j}(x',e^*_1)$ for all $1 \le j \le i$.
\end{lemma}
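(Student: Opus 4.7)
The plan is to prove the lemma by induction on $i$. For the base case $i=0$, set $S_{\textrm{indep}}^0 = S$; all related-mass quantities vanish and the size bound $|S|/6^0 = |S|$ is immediate. For the inductive step, let $X$ denote the $i$-th node in the topological order and, starting from $S_{\textrm{indep}}^{i-1}$, remove plateau states falling into three ``bad'' categories: (a) collision states whose plateau ball $(x,e_1^*)$ maps at $X$ to the same bin as some other plateau ball of $S_{\textrm{indep}}^{i-1}$ (keeping only a single representative per collision class); (b) $\textrm{related}_1$-exceeders, for which $\textrm{related}_1^{i}(x) - \textrm{related}_1^{i-1}(x)$ exceeds a threshold of order $6/n$; and (c) $\textrm{related}_2$-exceeders, for which $\textrm{related}_2^i(x) - \textrm{related}_2^{i-1}(x)$ exceeds a threshold of order $18(i-1)/n^2$. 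If each category touches at most a $5/18$ fraction of $S_{\textrm{indep}}^{i-1}$, a $1/6$ fraction survives and the inductive bounds $\textrm{related}_1^i \le 6i/n$, $\textrm{related}_2^i \le 18 i(i-1)/n^2$ hold automatically by telescoping.

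The substance of the argument is bounding the \emph{expected} growth of $\textrm{related}_1$ and $\textrm{related}_2$ at step $i$, via a case split on the color of $X$. In Case A, $X$ receives a new color, meaning either $\Xsrc \in \pa(X)$ or $X$ has parents of differing colors. By the inductive hypothesis, the surviving plateau balls lie in distinct bins at every earlier same-color node, and the presence of $\Xsrc$ (or a differently-colored parent whose own history is independent of the current color) ensures the configurations $(f^*_{\pa(X)}(x,e_1^*),(e_1^*)_{E_i})$ are distinct across $x$. Since $f_X$ is then an independent uniform draw on each configuration, each non-plateau ball lands in the bin of $(x,e_1^*)$ with probability $1/n$, so the expected increment to $\textrm{related}_1$ is $O(1/n)$. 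Any ball contributing to the increment of $\textrm{related}_2$ must also have been related to $x$ at a prior node, and that prior node has a different color than $X$ by construction, so the expected increment is at most $(1/n)\cdot\textrm{related}_1^{i-1}(x) = O(i/n^2)$.

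In Case B, $X$ inherits a color $c$ from all its parents, all of which are of color $c$. Again by induction the plateau balls of $S_{\textrm{indep}}^{i-1}$ have distinct configurations at $\pa(X)$, so balls newly grouped with $(x,e_1^*)$ at $X$ either (i) matched the configuration at a same-color parent and were thus already in $\mathcal{B}_1(x)$ via a color-$c$ node, or (ii) are accidental $f_X$-collisions contributing $O(1/n)$ expected mass; hence $\textrm{related}_1$ again grows by $O(1/n)$ in expectation. The key gain is that to increment $\textrm{related}_2$ a ball must have been related at a different-color prior node \emph{and} land in the same bin at $X$; both events together have probability $O(1/n) \cdot \textrm{related}_1^{i-1}(x)$, giving expected growth $O(i/n^2)$. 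Collision counts between two plateau balls at $X$ are $O(|S|/n)$ in expectation in either case, since in both cases the plateau balls of $S_{\textrm{indep}}^{i-1}$ map essentially independently at $X$. Markov's inequality then controls the fraction of bad states in each category.

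The main obstacle is the high-probability upgrade: Markov alone only gives expectation-level bounds, so the growth increments must be shown to concentrate, and the martingale/conditional-independence structure needs to be formalized carefully. Concretely, conditional on the realization of $f_{X_1},\dots,f_{X_{i-1}}$, the map $f_X$ is a fresh uniformly random draw on each configuration of $\pa(X)\cup\{E_i\}$; applying negative-association results in the spirit of \cref{lemma:empty-bins} and \cref{lemma:set-intersection} (treating contributions to the bin of each plateau ball as a balls-into-bins process driven by $f_X$) yields Chernoff-type tail bounds. A secondary subtlety is that in Case A with differently-colored parents, the ``freshness'' of configurations at $X$ depends on the two colored sub-histories being jointly independent, which follows from the Random Function Graph Decomposition because the two color classes are realized by disjoint sets of $f_{X_j}$'s and $E_j$'s. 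These two points together with a union bound over the three bad categories (and over all $i = O(|V|) = O(1)$ induction steps) complete the argument.
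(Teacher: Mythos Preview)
Your inductive framework and the three ``bad'' categories (collisions, $\relate_1$-exceeders, $\relate_2$-exceeders) match the paper's, but the execution diverges in two ways worth noting.

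First, the color case split is unnecessary here. The paper treats every node $X_i$ uniformly, with no distinction between new-color and inherited-color nodes. The reason is that the inductive hypothesis already guarantees that distinct plateau balls in $S_{\textrm{indep}}^{i-1}$ have distinct values at \emph{every} earlier node, in particular at every parent of $X_i$; hence they occupy distinct configurations of $\pa(X_i)\cup E_i$ regardless of coloring, and each plateau ball's cell of $f_{X_i}$ is a fresh independent uniform draw. The Random Function Graph Decomposition is only invoked later, when arguing that $\relate_2$ bounds the mass with dependent assignment at $\colorroot(Y)$; it plays no role in controlling the growth of $\relate_1$ and $\relate_2$ themselves.

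Second, the paper resolves your ``main obstacle'' (the high-probability upgrade) by reversing the order of quantifiers, which is both simpler and avoids the correlation issues you anticipate. Rather than bounding the \emph{random} increment to $\relate_1^i(x)$ for each $x$ and then counting exceeders, the paper first realizes $f_{X_i}$ on all configurations not containing a plateau ball, and then counts bad bins \emph{deterministically}: since total mass is at most $1$, at most $n/6$ bins have mass $\ge 6/n$; since the total mass of $\mathcal{B}_1^{i-1}(x)$ is $\relate_1^{i-1}(x)$, at most $n/6$ bins carry $\ge 6\,\relate_1^{i-1}(x)/n$ of it; and at most $|S_{\textrm{indep}}^{i-1}|\le n/6$ bins are occupied by previously-placed plateau balls. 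Each plateau ball therefore lands in a ``good'' bin with conditional probability at least $1/2$, and a single $\pm 1$-increment martingale (Azuma) gives $\ge |S_{\textrm{indep}}^{i-1}|/6$ survivors with high probability in one stroke. This sidesteps the difficulty that the exceeders indicators $\mathbf{1}\{\relate_1^i(x)-\relate_1^{i-1}(x)>6/n\}$ are correlated across $x$ through the shared realization of $f_{X_i}$ on non-plateau configurations, and it is not clear that the negative-association lemmas you cite apply directly to that count.
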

\begin{proof}
We begin with the following claims.
\begin{claim}
\cref{lemma:inductive-related} holds for $i=1$.
\end{claim}
\begin{proof}
To find a subset of $S$ to be $S_{\textrm{indep}}^i$, we will choose any arbitrary subset of size $\frac{S}{6}$. By definition of $\relate_1$ and $\relate_2$, all plateau balls are different states of $\Xsrc$ so $\relate_1^1(x)=\relate_2^1(x)=0$ for every $x \in S$, and $f^*_{\Xsrc}(x,e^*_1) \ne f^*_{\Xsrc}(x',e^*_1)$ for all $x,x' \in S$.
\end{proof}

\begin{claim}\label{claim:11}
\cref{lemma:inductive-related} holds for $i$ if it holds for all $j<i$. 
\end{claim}
\begin{proof}
First, we realize $f_{X_i}$ for all cells other than those corresponding to configurations of $\pa(X_i) \cup E_{X_i}$ that contain an element of $S_{\textrm{indep}}^{i-1}$. Now, we consider the process of realizing the entries of $f_{X_i}$ corresponding to elements of $S_{\textrm{indep}}^{i-1}$ in an arbitrary order. We define a random variable for every element of $S_{\textrm{indep}}^{i-1}$. For the $j$-th element, we define $\mathcal{S}_j$ as follows:
\begin{itemize}
    \item If the element is mapped to a bin that another element of $S_{\textrm{indep}}^{i-1}$ has been mapped to, then $\mathcal{S}_j = -1$.
    \item Otherwise, if the element $x \in S_{\textrm{indep}}^{i-1}$ is mapped to a bin that contains total mass at least $\frac{6}{n}$, or total mass from $\mathcal{B}_1^{i-1}(x)$ of at least $\frac{6 \relate_1^{i-1}(x)}{n}$, then $\mathcal{S}_j = 0$.
    \item Else, then $\mathcal{S}_j = 1$. 
\end{itemize}

The intuition behind $\mathcal{S}_j$ is that we will count an element of $x \in S_{\textrm{indep}}^{i}$ as being eligible for $S_{\textrm{indep}}^{i}$ if it lands in a bin with no other value of $S_{\textrm{indep}}^{i-1}$, and if it lands in a bin that will not increase $\relate_1^i(x)$ or $\relate_2^i(x)$ by too much. 

\begin{claim}
\label{claim:set-candidates}
Consider the set comprised of each element $x \in S_{\textrm{indep}}^{i-1}$ that satisfies the following. Suppose $x$ is assigned to a bin such that before $x$ is mapped to the bin, the bin has total mass at most $\frac{6}{n}$ and total mass intersecting from $\mathcal{B}^{i-1}(x)$ of at most $\frac{6 \relate_1^{i-1}(x)}{n}$. Moreover, suppose $x$ is the only element of $S_{\textrm{indep}}^{i-1}$ that is ever assigned to this bin. Then, this set of all such $x$ would meet the desired properties required of $ S_{\textrm{indep}}^{i}$.
\end{claim}
\begin{proof}
The increase of the quantity $\relate_1^i(x)$ is bounded by the amount of other mass in the bin that $x$ is assigned to. Accordingly, $\relate_1^i(x) \le \relate_1^{i-1}(x) + \frac{6}{n} \le \frac{6 \times (i-1)}{n} + \frac{6}{n} = \frac{6 i}{n}$. The increase of the quantity $\relate_2^i(x)$ is bounded by the amount of mass from $\mathcal{B}_1^{i-1}(x)$ in the bin $x$ is assigned to. Accordingly, $\relate_2^i(x) \le \relate_2^{i-1}(x) + \frac{6\relate_1^{i-1}(x)}{n} \le \frac{18 \times (i-1) \times (i-2)}{n^2} + \frac{36 (i-1)}{n^2} = \frac{18 \times i \times (i-1)}{n^2}$
\end{proof}

Moreover, we claim that $\sum \mathcal{S}_i$ serves as a lower bound for the set of elements eligible for $S_{\textrm{indep}}^i$ referenced in \cref{claim:set-candidates}.
\begin{claim}
The number of elements of $S_{\textrm{indep}}^{i-1}$ that are eligible for $S_{\textrm{indep}}^{i}$ by satisfying \cref{claim:set-candidates} is at least $\sum_j \mathcal{S}_j$.
\end{claim}
\begin{proof}
For each bin, consider the sum of $\mathcal{S}_j$ for variables corresponding to elements of $S_{\textrm{indep}}^{i-1}$ that were assigned to the bin (if any). If the sum is nonpositive, then we trivially claim the set of elements meeting the criteria in this bin is at least the sum, as there will be at least $0$ such elements. Otherwise, the sum must be $1$, This implies there is exactly one element of $S_{\textrm{indep}}^{i-1}$ assigned to the bin, and that it met the criteria when it was assigned, because its corresponding $\mathcal{S}_j=1$. Moreover, as no other elements could have been assigned to the bin later, it still meets the criteria. Combining both cases, we see that the sum of $\mathcal{S}_j$ for each bin is a lower-bound for the number of elements satisfying the criteria in said bin, and thus globally the sum of all $\mathcal{S}_j$ is a lower-bound for how many elements meet the criteria in total. 
\end{proof}

We aim to now use the sum of $\mathcal{S}_j$ as a lower-bound for the size of the set of elements meeting the criteria. To do so, we will first lower-bound $E[\mathcal{S}_j]$. 

\begin{claim}
Regardless of the realization of any previous randomness, $E[\mathcal{S}_j] \ge \frac{1}{3}$.
\end{claim}
\begin{proof}
$\mathcal{S}_j$ is equal to $-1$ only if it is assigned to a bin with another element of $S_{\textrm{indep}}^{i-1}$. The number of such bins is upper-bounded by $|S_{\textrm{indep}}^{i-1}| \le |S_{\textrm{indep}}^{1}| \le \frac{n}{6}$. Otherwise, $\mathcal{S}_j$ is equal to $0$ only if the bin had mass at least $\frac{n}{6}$ or it has mass from the corresponding $\mathcal{B}_1^{i-1}(x)$ of at least $\frac{6\relate_1^{i-1}(x)}{n}$. There can only be at most $\frac{n}{6}$ bins satisfying the former, and at most $\frac{n}{6}$ bins satisfying the latter. Accordingly, there are at least $n - 3 \times \frac{n}{6} = \frac{n}{2}$ where if the corresponding element is assigned to it, then $\mathcal{S}_j=1$. Hence $E[\mathcal{S}_j] \ge \frac{1}{2} - \frac{1}{6} = \frac{1}{3}$.
\end{proof}

As we need a set $S_{\textrm{indep}}^{i}$ with cardinality $|S_{\textrm{indep}} ^{i}| = \frac{|S_{\textrm{indep}}^{i-1}|}{6}$, we show the following:

\begin{claim}
\label{claim:s-sum}
$\sum_j \mathcal{S}_j \ge \frac{|S_{\textrm{indep}}^{i-1}|}{6}$ with high probability.
\end{claim}
\begin{proof}
We will modify the variables to make a martingale and then utilize Azuma's inequality. We define $\mathcal{S}'_j = \mathcal{S}'_{j-1} + \mathcal{S}_j - E[(\mathcal{S}_j | \mathcal{S}_1 ,\dots, \mathcal{S}_{j-1})]$. Accordingly, the sequence of $\mathcal{S}'$ is a martingale of length $|S_{\textrm{indep}}^{i-1}|$ where $|\mathcal{S}'_{j-1} - \mathcal{S}'_{j}| \le 1$. Thus, we can use Azuma's inequality to show $P(|\mathcal{S}'_{|S_{\textrm{indep}}^{i-1}|} - \mathcal{S}'_1| \ge \frac{|S_{\textrm{indep}}^{i-1}|}{6}) \le 2e^{\frac{-|S_{\textrm{indep}}^{i-1}|}{72}} = 2e^{\frac{-|S|}{72 6^{i-1}}} = 2e^{-\Omega(n)}$. By definition, $\sum_j \mathcal{S}_j = \sum_j \mathcal{S}'_j + \sum_j E[S_j]$. By our result with Azuma's inequality, we then claim that with high probability it holds that $\sum_j \mathcal{S}_j \ge -\frac{|S_{\textrm{indep}}^{i-1}|}{6} + \frac{|S_{\textrm{indep}}^{i-1}|}{3} = \frac{|S_{\textrm{indep}}^{i-1}|}{6}$. 
\end{proof}

Combining \cref{claim:set-candidates} and \cref{claim:s-sum}, we have now shown that there exists a valid set $S_{\textrm{indep}}^{i}$ of size $|S_{\textrm{indep}}^{i}| = \frac{|S_{\textrm{indep}}^{i-1}|}{6}$, completing the proof of Claim \ref{claim:11}.
\end{proof}
By induction Lemma \ref{lemma:inductive-related} holds.

\end{proof}

\begin{corollary}
\label{corollary:s-indep}
There exists a subset of plateau states $S_{\textrm{indep}} \subseteq S$ such that $|S_{\textrm{indep}}| \ge \frac{|S|}{6^{|V|}} = \Omega(n)$ and every $x \in S_{\textrm{indep}}$ satisfies $\relate_2^{\ord(\colorroot(Y))}(x) \le \frac{18 \times |V| \times (|V|-1)}{n^2}$ . Moreover, for all pairs $x,x' \in S_{\textrm{indep}}$ it holds that they never share a state, meaning $f^*_{X_i}(x) \ne f^*_{X_i}(x')$ for all $X_i \in \vdecomp$.
\end{corollary}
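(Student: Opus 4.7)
The plan is to derive Corollary \ref{corollary:s-indep} as a direct consequence of Lemma \ref{lemma:inductive-related} applied at the last topological index, taking $S_{\textrm{indep}} := S_{\textrm{indep}}^{|\vdecomp|}$. Because the Random Function Graph Decomposition only considers vertices on a directed path from $\Xsrc$ to $Y$, I have $|\vdecomp| \le |V| = O(1)$ by \cref{assumption:graph}, so the induction underlying Lemma \ref{lemma:inductive-related} runs for only constantly many steps, and a union bound over its $O(1)$ high-probability events preserves the with-high-probability conclusion.

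For the cardinality, I would simply combine the lemma's bound $|S_{\textrm{indep}}^{|\vdecomp|}| \ge |S|/6^{|\vdecomp|} \ge |S|/6^{|V|}$ with the bound $|S| = \Omega(n)$ coming from Lemma \ref{lemma:plateau}; since $|V|=O(1)$, the constant $6^{|V|}$ is absorbed and we obtain $|S_{\textrm{indep}}| = \Omega(n)$, as required.

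The $\relate_2$ bound I would read off by monotonicity: the collection $\mathcal{B}_2(x)$ can only grow as additional nodes are processed, since any ball witnessing the two-color collision condition at stage $i$ continues to witness it at every stage $j \ge i$. Consequently $\relate_2^j(x)$ is nondecreasing in $j$, and because $\ord(\colorroot(Y)) \le |\vdecomp|$, Lemma \ref{lemma:inductive-related} yields
$\relate_2^{\ord(\colorroot(Y))}(x) \le \relate_2^{|\vdecomp|}(x) \le \tfrac{18\,|\vdecomp|(|\vdecomp|-1)}{n^2} \le \tfrac{18\,|V|(|V|-1)}{n^2}$
for every $x \in S_{\textrm{indep}}$. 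The pairwise non-collision clause $f^*_{X_i}(x,e^*_1)\ne f^*_{X_i}(x',e^*_1)$ for all $X_i \in \vdecomp$ is then exactly the final assertion of Lemma \ref{lemma:inductive-related} at $i = |\vdecomp|$, with $j$ ranging over all indices in the topological order of $\vdecomp$.

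The main hazard, such as it is, is purely bookkeeping around $\ord(\colorroot(Y))$ versus $|\vdecomp|$, and the monotonicity observation above closes that gap immediately. The substantive probabilistic work has already been carried out inside the induction establishing Lemma \ref{lemma:inductive-related}; this corollary is essentially a readoff of that induction at its terminal step, together with the $\Omega(n)$ lower bound on $|S|$ from Lemma \ref{lemma:plateau} and the $O(1)$ size of $V$.
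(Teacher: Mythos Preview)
Your proposal is correct and takes essentially the same approach as the paper, which simply sets $S_{\textrm{indep}} = S_{\textrm{indep}}^{|V|}$ and invokes \cref{lemma:inductive-related}. Your write-up is in fact more careful than the paper's one-line proof, as you explicitly justify the passage from $\relate_2^{|\vdecomp|}$ to $\relate_2^{\ord(\colorroot(Y))}$ via monotonicity and distinguish $|\vdecomp|$ from $|V|$.
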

\begin{proof}
One such set is simply $|S_{\textrm{indep}}^{|V|}|$ as shown in \cref{lemma:inductive-related}.
\end{proof}

\textbf{Characterizing balls.} Recall that each variable assigns balls with the same configuration of its parents and exogenous variable together. We aim to show a similar result at $\colorroot(Y)$. To do so, we will characterize the balls within configurations into types:

\balltypes*

We use $\mathcal{T}=\frac{120 |V|}{n \log (n)}$.

Now, we will show that for every variable $X_i$ there are many bins without too much surplus, such that the plateau configurations have many bins that they may be assigned to that will help us obtain a bin with small surplus and many plateau configurations.

\begin{restatable}[Configuration and ball characterizations]{definition}{configtypes}
We characterize three types of configurations/balls:
\begin{enumerate}
    \item Large \emph{configurations}. For all configurations of the form $(\pa(X_i) \cup E_i)$ where the configuration has balls of total mass $\ge \frac{\mathcal{T}}{2}$.
    \item Dense \emph{ball}. Consider a set $L$ of states of $\Xsrc$, where a state of $\Xsrc$ is in $L$ if $P(\Xsrc=x) \ge \frac{1}{\log^{3}(n)}$. Dense balls are all balls of the form $(x \in L, e \in E^*)$. We call these dense balls, because the low-entropy of $E$ will prevent the collective mass of these balls from being distributed well throughout.
    \item Small \emph{ball}. For all balls of the form $(x \in \Xsrc \backslash (S \cup L), e \in E^*)$ where the ball has mass $< \frac{\mathcal{T}}{2}$.
\end{enumerate}
\label{def:configtypes}
\end{restatable}

\textbf{Bounding dense ball surplus.}
Recall the following used in \cref{theorem:pairwise} to bound contributions from dense balls:

\lemmanoexpansion*

We use the following corollary:

\corollarydenseball*

These imply the following for our graph setting. While this may seems strictly weaker than \cref{corollary:denseball}, we will utilize that the event of a bin having too much mass from dense balls is now independent from how large configurations are mapped.

\begin{corollary}
\label{corollary:dense-indep}
Let $\mathcal{C}_{\textrm{large}}$ denote the set of large configurations of $\pa(X_i) \cup E_i$ as defined in \cref{def:configtypes}. Let $C$ be a random variable denoting the configuration of the corresponding ball of $\pa(X_i) \cup E^*$. We claim that dense balls in configurations other than $\mathcal{C}_{\textrm{large}}$ are not distributed well throughout $X_i$. In particular, there exists no subset $|T|=n/4$ such that $\forall t \in T: P(X \in L, Y=t, C \notin \mathcal{C}_{\textrm{large}}) \ge \frac{1}{n \log(\log(n)) \log^{2\close}(n)}$.
\end{corollary}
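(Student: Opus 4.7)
The plan is to reduce \cref{corollary:dense-indep} to \cref{corollary:denseball} by exploiting the structural property that, in the balls-and-bins characterization of $f_i$, distinct configurations of $\pa(X_i)\cup E_i$ are realized independently of one another. The independence-from-large-configurations clause would then be automatic, and the cardinality bound would follow by replaying the argument for \cref{corollary:denseball} after conditioning away the large configurations.

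First I would define a modified source pair: let $\tilde{X},\tilde{E}^*$ have the conditional distribution of $(X,E^*)$ given that the configuration $C$ induced on $\pa(X_i)\cup E_i$ is not in $\mathcal{C}_{\textrm{large}}$, with everything else in the generative process (the earlier $f_j$ that produce $\pa(X_i)$ from $X$ and the relevant $E_j$, and the restriction of $f_i$ to non-large configurations) coupled to the original process. Under this coupling, the event $\{X\in L,\, X_i = t,\, C\notin\mathcal{C}_{\textrm{large}}\}$ depends only on the values of $f_i$ on non-large configurations. Since $f_i$'s values on distinct configurations are sampled independently in the balls-and-bins game, this event is independent of the random restriction of $f_i$ to $\mathcal{C}_{\textrm{large}}$, establishing the independence clause.

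Second, I would apply \cref{lemma:no-expansion} to $\tilde{X}$, treating the map to $X_i$ (restricted to non-large configurations) as a function of $\tilde{X}$ and an effective exogenous variable whose entropy is at most $H(E^*) = \sum_{j\ne \mathrm{src}} H(E_j) = o(\log\log n)$ because $|V|=O(1)$ and each $H(E_j)=o(\log\log n)$. Setting $R=L$ with $|L|\le\log^{3}(n)$ and $\delta=\frac{1}{n\log\log n\,\log^{2\close} n}$, the numerator $H(E^*)+\log|L|+2$ is $O(\log\log n)+O(\log\log n)=O(\log\log n)$, and the same chain of inequalities used in \cref{corollary:denseball} (in particular Steps \ref{step:close-bound}--\ref{step:final-nlarge}) goes through verbatim to give $|T|\le n/4$ for sufficiently large $n$.

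The main obstacle is essentially bookkeeping: confirming that the coupling of $(\tilde{X},\tilde{E}^*)$ and of the restricted $f_i$ preserves the joint probability appearing in the statement (so that probabilities computed in the conditional world translate to the original world up to the harmless factor $P(C\notin\mathcal{C}_{\textrm{large}})\le 1$), and that the entropy bound continues to absorb the extra $\log\log n$ factors introduced by replacing $E$ with $E^*$. Once these are verified, no new idea beyond \cref{lemma:no-expansion} and the configuration-wise independence of $f_i$ is needed.
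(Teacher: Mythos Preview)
Your plan would work, but it is far more elaborate than what the paper actually does. The paper's proof is a one-line monotonicity argument: since $\{X\in L,\, X_i=t,\, C\notin\mathcal{C}_{\textrm{large}}\}\subseteq\{X\in L,\, X_i=t\}$, we have
\[
P(X\in L,\, X_i=t,\, C\notin\mathcal{C}_{\textrm{large}})\;\le\;P(X\in L,\, X_i=t),
\]
so any $T$ of size $n/4$ violating \cref{corollary:dense-indep} would immediately violate \cref{corollary:denseball} (applied in the graph setting, where $X_i$ is written as a function of $\Xsrc$ and $\bigcup_{j}E_j$, whose total entropy is still $o(\log\log n)$ because $|V|=O(1)$). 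No conditioning, no coupling, no re-running of the inequalities is needed.

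Your conditioning-and-recomputing route can be salvaged, but it carries a wrinkle you wave away as ``bookkeeping'': after conditioning on $C\notin\mathcal{C}_{\textrm{large}}$, the pair $(\tilde X,\tilde E^*)$ need not be independent, because whether a ball's configuration is large depends on both $X$ and $E^*$ through the realized parents $\pa(X_i)$. In fact the hypothesis $X\indep E$ in \cref{lemma:no-expansion} is not actually used in its proof (the chain $H(Y')\le H(X',E)\le H(X')+H(E)$ holds regardless), so your argument survives, but you do not note this. The paper's monotonicity sidesteps the issue entirely.

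One more remark: you devote effort to an ``independence-from-large-configurations clause,'' but the formal statement contains no such clause; the only assertion is the nonexistence of the set $T$. The independence observation appears in the surrounding text as motivation for stating this restricted version and is exploited later when the three ball types are combined, but it is not part of what must be proved here.
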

\begin{proof}
Note that the results of \cref{lemma:no-expansion} and \cref{corollary:denseball} still hold in this setting as any $X_i \in \vdecomp$ can be written as a function of $\Xsrc$ and $\cup_{j} E_j$. This corollary trivially follows from \cref{corollary:denseball}, as it is strictly weaker in that we add a restriction that $C \notin \mathcal{C}_{\textrm{large}}$. Any set $T$ that contradicts \cref{corollary:dense-indep} would immediately contradict \cref{corollary:denseball}.
\end{proof}

\textbf{Bounding large configuration surplus.} To bound contribution to surplus by large configurations, we bound the number of bins that receive any mass from large configurations. Recall \cref{lemma:empty-bins} from the proof of \cref{theorem:pairwise}:

\lemmaavoidbig*

Accordingly, we can use the following:

\begin{corollary}
\label{corollary:bigballs}
As there are at most $\frac{1}{\mathcal{T}/2} \le \frac{n \log(n)}{60 |V|} \le \frac{1}{40 |V|} \cdot n \ln(n)$ large balls, with high probability there are at least $\frac{n^{1-\frac{1}{40 |V|}}}{2}$ bins that receive no large balls.
\end{corollary}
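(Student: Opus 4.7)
\textbf{Proof proposal for \cref{corollary:bigballs}.} The plan is to reduce the statement to a direct application of \cref{lemma:empty-bins}, treating each large configuration of $\pa(X_i) \cup E_i$ as a single ``ball'' in a balls-and-bins game over the states of $X_i$. This reduction is legitimate because, when realizing $f_{X_i}$, all balls sharing the same configuration of $(\pa(X_i) \cup E_i)$ are mapped together to a single state of $X_i$ chosen independently and uniformly at random, so distinct configurations are independent i.i.d.\ uniform picks among the $n$ bins. Any bin that receives mass from some large configuration must be the target of at least one such random pick, so it suffices to show few bins are hit by these picks.

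First, I would upper bound the number of large configurations. Since each large configuration carries total ball mass at least $\mathcal{T}/2 = \tfrac{60 |V|}{n \log(n)}$, and the total probability mass across all configurations is $1$, there can be at most $\tfrac{1}{\mathcal{T}/2} = \tfrac{n \log(n)}{60 |V|}$ large configurations. Converting $\log$ (base $2$) to $\ln$ via $\log(n) = \ln(n)/\ln(2)$ and using $60 \ln(2) > 40$ yields $\tfrac{n \log(n)}{60 |V|} \le \tfrac{1}{40 |V|} \, n \ln(n)$, matching the form required by \cref{lemma:empty-bins}.

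Next, I would apply \cref{lemma:empty-bins} with $c = \tfrac{1}{40 |V|}$. Since $|V| \ge 1$ and $|V| = O(1)$, we have $c \le \tfrac{1}{40} \le \tfrac{1}{3}$, so the hypothesis of the lemma is satisfied. The lemma then yields that, with high probability, at least $\tfrac{n^{1-c}}{2} = \tfrac{n^{1 - 1/(40|V|)}}{2}$ bins of $X_i$ receive none of the (at most $\tfrac{1}{40|V|} n \ln(n)$) random picks, i.e.\ no large configuration is mapped to them. Bins receiving no large configuration receive no large balls, which is the desired conclusion.

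No genuine obstacle is anticipated: the argument is a bookkeeping reduction to \cref{lemma:empty-bins}. The only points requiring mild care are the $\log$-to-$\ln$ conversion (to land in the regime of \cref{lemma:empty-bins}) and the observation that it is the \emph{number of large configurations} rather than the number of large balls that plays the role of ``balls'' in the balls-and-bins game, since all balls in a configuration share a single random destination.
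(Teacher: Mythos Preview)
Your proposal is correct and matches the paper's approach: the paper simply states this corollary as an immediate consequence of \cref{lemma:empty-bins} with $c=\tfrac{1}{40|V|}$, and you have correctly filled in the bookkeeping (the mass-counting bound on the number of large configurations, the $\log$-to-$\ln$ conversion via $60\ln 2>40$, and the verification that $c\le \tfrac13$). Your observation that it is the number of large \emph{configurations} that plays the role of the i.i.d.\ ``balls'' is exactly right and in fact clarifies a slight terminological looseness in the corollary's own statement.
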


\textbf{Bounding small ball surplus.} Here we will bound the surplus from small balls. Note that, while this proof is not short, it is using the same ideas as the corresponding section in the proof of \cref{theorem:pairwise}. However, there are some subtle differences that necessitate a separate proof for the graph setting. We use identical text from the proof of \cref{theorem:pairwise} when applicable.

For the small balls, we will also show that they cannot contribute too much surplus to too many states of any $X_i$. We will notably use that all small balls correspond to a state of $\Xsrc$ where $P(\Xsrc=x) \le \frac{1}{\log^{3}(n)}$. We will utilize this to show that most small balls are assigned to a state of $X_i$ that has not yet received $>\frac{\mathcal{T}}{2}$ mass from its corresponding state of $\Xsrc$, and accordingly would not increase the surplus. To accomplish this, we define a surplus quantity that only takes into account small balls:

\begin{definition}[Small ball surplus]
We define the small ball surplus of a state $y$ of $Y$ as
\begin{align*}
    &z_{j}^{\textrm{small}} = \sum_{x_{\textrm{src}} \notin (S \cup L)} \max \Bigg( 0,- \mathcal{T}+\\
    &\left. \sum\limits_{\substack{\hspace{-0.05in}e^*  :\\\hspace{0.05in} (X=x_{\textrm{src}},E^*=e.C \notin \mathcal{C}_{\textrm{large}})\\}}P(\Xsrc=x, E^*=e, X_i=j)   \right).
\end{align*}
\end{definition}

With this notion of surplus constrained to small balls, we show the following:

\begin{restatable}[Small ball limited surplus]{lemma}{lemmasmallball}
\label{lemma:smallball}
With high probability, there are at most $\frac{n}{4}$ values of $i$, i.e., number of bins, where $z_{i}^{\textrm{small}} \ge \frac{1}{n \log(\log(n)) \log^{2 \close}(n)}$.
\end{restatable}
\begin{proof}
We will consider configurations $C \notin \mathcal{C}_{\textrm{large}}$ in an arbitrary order, and within each configuration consider balls in an arbitrary order. Let $x_i(c)$ be the corresponding state of $X_i$ for the $c$-th configuration, let $x_{{\textrm{src}}_c}(t)$ be the corresponding state of $\Xsrc$ for the $t$-th ball in the $c$-th configuration. $e_{{\textrm{src}}_c}(t)$ be the corresponding state of $E^*$ for the $t$-th ball in the $c$-th configuration, and $w_{c,\textrm{ball}}(t)$ be the $t$-th ball's probability mass in the $c$-th configuration (i.e., $P(\Xsrc=x_{{\textrm{src}}_c}(t),E=e_{{\textrm{src}}_c}(t))$). Moroever, we define $w_{\textrm{config}}(c)$ as the weight of all such balls with configuration $c$. Recall that for all small balls it must hold that $x_{{\textrm{src}}_c}(t) \notin L$ and thus $P(\Xsrc=x_{{\textrm{src}}_c}(t)) < \frac{1}{\log^{3}(n)}$. We define the total small ball surplus as $Z^{\textrm{small}}=\sum_{j \in X_i} z_j^{\textrm{small}}$. Now, we will consider all non-large configurations in an arbitrary order and realize their corresponding entry of $f$ to map them to a state of $X_i$. Initially, we have not realized the entry of $f$ for any balls and thus all $z^{\textrm{small}}_{j}=0$ and $Z^{\textrm{small}}=0$. As we map configurations to states of $X_i$, we define $\Delta(c)$ as the increase of $Z^{\textrm{small}}$ after mapping the $c$-th configuration to a state of $X_i$. By definition, $\sum_c \Delta(c)$ is equal to $Z^{\textrm{small}}$ after all values of $f$ have been completely realized. 

Our primary intuition is that we will show for many small balls it holds that they have zero contribution towards their configuration's quantity $\Delta(c)$. 
As $f$ is realized for each configuration, let $w_{X_i}(x_i,x_{\textrm{src}})$ denote the total mass of balls assigned to state $x_i$ of $X_i$ so far from state $x_{\textrm{src}}$ of $\Xsrc$, i.e., $w_{X_i}(x_i',x_{\textrm{src}}') \coloneqq \sum\limits_{c' < c, t: x_i(c)=x_i'} w_{c,\textrm{ball}}(t')$. Note that this quantity is shared among all configurations.

\begin{claim}
\label{claim:delta-expectation-claim}
Regardless of the realizations of all $\Delta(c')$ for $c'<c$, it holds that $\Delta(c)$ is a random variable with values in range $[0,w_{\textrm{config}}(c)]$ and $E[\Delta(c)] \le \frac{w_{\textrm{config}}(c)}{\log^{2}(n)}$.
\end{claim}

\begin{proof}
Let us define $\Delta_t(c)$ as the contribution of the $t$-th ball to $\Delta(c)$. By definition, $\sum_t \Delta_t(c) = \Delta(c)$. We aim to show $E[\Delta_t(c)] \le \frac{w_{c,\textrm{ball}}(t)}{\log^{2}(n)}$. This would immediately imply the desired bound on $E[\Delta(t)]$ by linearity of expectation.

The only conditions under which $\Delta_t(c)$ takes a non-negative value (which is upper-bounded by $w_{c,\textrm{ball}}(t)$), is when $w_{X_i}(x_i(c),x_{\textrm{src}_c}(t)) > \frac{\mathcal{T}}{2}$ before the entry of $f$ for the $c$-th configuration is realized (other. Recall that $P(x_{\textrm{src}_c}(t)) \le \frac{1}{\log^{3}(n)}$. Accordingly, the number of states $x_i'$ of $X_i$ where $w_{X_i}^c(x_i',x_{\textrm{src}_c}(t)) > \frac{\mathcal{T}}{2}$ is upper-bounded by $\frac{P(\Xsrc = x_{\textrm{src}}))}{\mathcal{T}/2} \le \frac{1/\log^{3}(n)}{60 |V|/(n \log(n))} = \frac{n \log(n)}{60 |V| \log^{3}(n)} \le \frac{n}{\log^{2}(n)}$. This is due to the fact that balls partition the total mass of $P(\Xsrc=x_{\textrm{src}}(t))$ since we have $P(\Xsrc=x_{\textrm{src}}(t))=\sum_eP(X=x_{\textrm{src}}(t),E=e)$. 
This implies that the probability that the $t$-th ball of configuration $c$ will be mapped to a state $x_{i}'$ of $X_i$ such that $w_{X_i}^c(x_i',x_{\textrm{src}_c}(t))$  already exceeds the threshold of $\mathcal{T}/2$ (in other words where we will have $\Delta_t(c)>0$) is upper-bounded by $\frac{n / \log^{2}(n)}{n} = \frac{1}{\log^{2}(n)}$ due to the fact that the function $f$ is realized uniformly randomly. Accordingly, $E[\Delta_t(c)] \le \frac{w_{c,\textrm{ball}}(t)}{\log^{2}(n)}$ and thus $E[\Delta(c)] \le \frac{w_{\textrm{config}}(c)}{\log^{2}(n)}$.
\end{proof}

This enables us to upper-bound the sum of $\Delta(t)$:

\begin{claim}
$\sum_c \Delta(c) \le \frac{1}{4 \log(n)}$ with high probability.
\end{claim}
\begin{proof}

We will transform $\Delta(c)$ into a martingale. In particular, we define $\Delta'(c) = \Delta'(c-1) + \Delta(c) - E[\Delta(c) | \Delta(1), \dots, \Delta(c-1)]$. We define $\Delta'(0)=0$, and note that $\Delta'(c)$ is a martingale. By Azuma's inequality, we show $|\sum_c \Delta'(c)| \le \frac{1}{8 \log(n)}$ with high probability:
\begin{align*}
    & P[|\Delta(c)| > \varepsilon] < 2 e^{- \frac{\varepsilon^2}{2 \sum c_i^2}} \\
    & \le 2 e^{- \frac{\left( \frac{1}{8 \log(n)} \right)^2}{2 (\max_i c_i) \cdot \sum c_i}} \\
    & \le 2 e^{- \frac{\left( \frac{1}{8 \log(n)} \right)^2}{2 \times \mathcal{T}/2 \cdot 1}} \\
    & = 2 e^{\frac{- n \log(n)}{120 \times 8 \times |V| \times \log(n)}} \\
\end{align*}
Accordingly, by definition of $\Delta'(c)$ this implies $|(\sum_c \Delta(c)) - \sum_c E[\Delta(c) | \Delta(1) , \dots, \Delta(c-1)]| \le \frac{1}{8 \log(n)}$. By \cref{claim:delta-expectation-claim} we know all $E[\Delta(c) | \Delta(1) ,\dots, \Delta(c-1)] \le \frac{w_{\textrm{config}(c)}}{\log^2(n)}$ and accordingly, $\sum_c E[\Delta(c) | \Delta(1), \dots, \Delta(c-1)] \le \frac{1}{\log^2(n)}$. Together, these imply $\sum_c \Delta(c) \le \frac{1}{8 \log(n)} + \frac{1}{\log^2(n)}$ with high probability, and for sufficiently large $n$ it holds that $\frac{1}{\log^2(n)} \le \frac{1}{8 \log(n)}$. Thus, our high-probability on $|\Delta'(c)|$ implies that $\sum_t \Delta(t) \le \frac{1}{4 \log(n)}$ with high probability.
\end{proof}

Finally, we conclude that our upper-bound on $\sum_t \Delta(t)$ implies an upper-bound on the number of states of $Y$ with non-negligible small ball support:

\begin{claim}
If $\sum_t \Delta(t) \le \frac{1}{4 \log(n)}$, then there are at most $\frac{n}{4}$ bins where $z_i^{\textrm{small}} \ge \frac{1}{n \log(\log(n)) \log^{2 \close}(n)}$.
\end{claim}
\begin{proof}
By definition, $Z^{\textrm{small}} = \sum_t \Delta(t) \le \frac{1}{4 \log(n)}$. Given this upper-bound for total small ball surplus, we can immediately upper-bound the number of states of $X_i$ with small ball surplus greater than $\frac{1}{n \log(\log(n)) \log^{2 \close}(n)}$ by the quantity $\frac{1 / (4 \log(n))}{1 / (n \cdot \log(\log(n)) \cdot \log^{2\close}(n))} \le \frac{n \cdot \log(\log(n)) \cdot \log^{1/2}(n)}{4 \log(n)} \le \frac{n}{4}$. We obtain this by using $\close = \frac{1}{4}$ and for sufficiently large $n$ such that $\log(\log(n)) \le \log^{1/2}(n)$. %

\end{proof}
This concludes the proof of the lemma.
\end{proof}

\textbf{Concluding many bins with small surplus.} Now, we combine all these intuitions to show there are many bins that have a small amount of surplus. We have shown that, with high probability, the are at most $n/4$ bins with non-negligible mass from dense balls by \cref{corollary:denseball}, and at most $n/4$ bins with non-negligible surplus from small balls from non-large configurations \cref{lemma:smallball}. Combining these sets, there are at most $n/2$ bins with non-negligible mass from dense balls or surplus from small balls. By \cref{corollary:bigballs}, with high probability at least $\frac{n^{1-\frac{1}{40 |V|}}}{2}$ bins will receive no large configurations mapped to it. Our goal is to show the intersection of the sets is large, so there are many bins that have small surplus.  We use \cref{lemma:set-intersection} proven in \cref{theorem:pairwise}:

\lemmasetintersection*
\begin{corollary}
\label{corollary:num-bins}
With high probability, there are at least $\frac{n^{1-\frac{1}{40 |V|}}}{8}$ bins with surplus $z_{y} \le \frac{2}{n \log(\log(n)) \log^{2 \close}(n)}$.
\end{corollary}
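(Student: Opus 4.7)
The plan is to combine the three preceding bounds on harmful contributions to the surplus via the set-intersection lemma (\cref{lemma:set-intersection}). First, I would define two sets of bins: let $A$ be the set of bins $y$ of $X_i$ (or rather, of the relevant variable being processed) such that the mass from dense balls in non-large configurations satisfies $P(\Xsrc \in L, X_i = y, C \notin \mathcal{C}_{\textrm{large}}) \le \frac{1}{n \log(\log(n)) \log^{2\close}(n)}$ and simultaneously the small-ball surplus obeys $z^{\textrm{small}}_y \le \frac{1}{n \log(\log(n)) \log^{2\close}(n)}$; let $B$ be the set of bins that receive no large configuration. By \cref{corollary:dense-indep} there are at most $n/4$ bins violating the dense-ball condition, and by \cref{lemma:smallball} at most $n/4$ bins violating the small-ball condition, so by a union bound $|A| \ge n/2$ with high probability. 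By \cref{corollary:bigballs}, $|B| \ge \frac{n^{1-1/(40|V|)}}{2}$ with high probability.

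The key subtlety I need to justify is the applicability of \cref{lemma:set-intersection}: that lemma requires $A$ and $B$ to be independent uniformly random sets of their respective sizes (or at least behave enough like such sets that the negative-association argument goes through). The crucial structural point is that membership in $A$ is determined entirely by how dense balls from \emph{non-large} configurations and small balls are distributed across bins, while membership in $B$ is determined entirely by how \emph{large} configurations are distributed. Since each configuration of $\pa(X_i) \cup E_i$ is realized independently of every other configuration under the uniform random choice of $f_{X_i}$, the random variables governing $A$ and the random variables governing $B$ depend on disjoint collections of cell realizations and are therefore independent. Thus the joint distribution of $A$ and $B$ satisfies the hypotheses of \cref{lemma:set-intersection} (with $|A| \ge n/2$), yielding $|A \cap B| \ge |B|/4 \ge \frac{n^{1-1/(40|V|)}}{8}$ with failure probability at most $2 e^{-|B|/8} = 2 e^{-\Omega(n^{1-1/(40|V|)})}$.

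Finally, I would argue that every bin in $A \cap B$ has total surplus $z_y \le \frac{2}{n \log(\log(n)) \log^{2\close}(n)}$. The surplus $z_y$ decomposes according to the ball types in \cref{def:configtypes}: contributions from large configurations, from dense balls in non-large configurations, and from small balls in non-large configurations. For a bin in $B$, no large configuration lands there, so the first contribution is zero. For a bin in $A$, the second contribution is bounded by $\frac{1}{n \log(\log(n)) \log^{2\close}(n)}$ directly (since mass contributed is an upper bound on surplus contribution), and the third contribution is bounded by $z_y^{\textrm{small}} \le \frac{1}{n \log(\log(n)) \log^{2\close}(n)}$ by definition of $A$. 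Summing the three bounds gives the claimed $\frac{2}{n \log(\log(n)) \log^{2\close}(n)}$. Combining all the high-probability events via a union bound yields the corollary.

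The main obstacle I anticipate is ensuring the independence argument between $A$ and $B$ is airtight: strictly speaking \cref{lemma:set-intersection} assumes \emph{uniformly random} subsets of fixed size, while here $A$ and $B$ are random sets of random sizes determined by the realization of $f$. The resolution is to condition on the realizations of the non-large-configuration cells (which determine $A$) and then invoke a monotonicity/conditioning argument so that the conditional distribution of $B$ is still a uniformly random subset among bins (of size at least $\frac{n^{1-1/(40|V|)}}{2}$) — since the large-configuration cells are realized i.i.d.\ uniformly and symmetrically over bins. Once this conditioning reduction is made explicit, \cref{lemma:set-intersection} applies verbatim.
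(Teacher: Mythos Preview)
Your proposal is correct and follows essentially the same approach as the paper: define $A$ via the dense-ball and small-ball bounds (restricted to non-large configurations), define $B$ as the bins avoiding all large configurations, argue $|A|\ge n/2$ and $|B|\ge \tfrac{1}{2}n^{1-1/(40|V|)}$, invoke \cref{lemma:set-intersection} for the intersection, and sum the two surviving contributions to bound $z_y$. Your treatment of the independence hypothesis for \cref{lemma:set-intersection} (conditioning on the non-large-configuration cells so that $B$ is still a symmetric random subset) is in fact more careful than the paper's own one-line justification that ``$A$ is undetermined by the mapping of large configurations.''
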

\begin{proof}
We have defined three types of balls, and have proven results that show how there are many bins with negligible bad contribution for each type of ball. Now, we combine these with \cref{lemma:set-intersection} to show there are many bins where there is not much bad contribution in total. By \cref{corollary:denseball} there are at most $n/4$ bins with more than $\frac{1}{n \log(\log(n)) \log^{2 \close (n)}(n)}$ mass from dense balls. By \cref{lemma:smallball}, there are at most $n/4$ bins with small ball surplus more than $\frac{1}{n \log(\log(n)) \log^{2 \close (n)}(n)}$. Let $A$ be the set of bins with at most $\frac{1}{n \log(\log(n)) \log^{2 \close (n)}(n)}$ mass from dense balls and at most $\frac{1}{n \log(\log(n)) \log^{2 \close (n)}(n)}$ small ball surplus. By combining \cref{corollary:denseball} and \cref{lemma:smallball} we know $|A| \ge \frac{n}{2}$ with high probability. Let $B$ be the set of bins that receive no large configurations. By \cref{corollary:bigballs}, it holds that $|B| \ge \frac{n^{1-\frac{1}{40 |V|}}}{2}$ with high probability. $A$ and $B$ are independent, as $A$ is undetermined by the mapping of large configurations. By \cref{lemma:set-intersection}, it holds that $|A \cap B| \ge \frac{n^{1-\frac{1}{40 |V|}}}{8}$ with failure probability at most $2e^{\frac{-n^{1-\frac{1}{40 |V|}}}{16}}$. Moreover, all such bins will have total surplus at most $\frac{2}{n \log(\log(n)) \log^{2 \close}(n)}$, because they receive no large configurations and total surplus is then upper-bounded by the sum of small ball surplus and total mass from dense balls.
\end{proof}

\textbf{Existence of many desirable bins at $\colorroot(Y)$}.

We have shown that at each node $X_i$ there are many bins without much surplus. If we restrict this calculation of surplus to not include mass from plateau configurations in $S_{\textrm{indep}}$ for $\colorroot(Y)$, then the set of bins that do not have much surplus is independent of the assignment of such configurations with plateau balls not having much related mass. Consider the set $S^{\ord(\colorroot(Y))}_{\textrm{indep}}$. By \cref{corollary:s-indep}, we know $|S^{\ord(\colorroot(Y))}_{\textrm{indep}}| \ge \frac{|S|}{6^{|V|}}$, no two corresponding plateau balls ever share a state before $\colorroot(Y)$, and all $\relate_2^{\ord(\colorroot(Y))}(x) \le \frac{18 \times |V| \times (|V|-1)}{n^2}$. Recall that $\colorroot(Y)$ by its definition must create a new color, and thus either have $\Xsrc$ as a parent, or have at least two distinct colors in its parent set. Given these properties, we know that each plateau ball in this set has at most $\relate_2^{\ord(\colorroot(Y))}(x) \le \frac{18 \times |V| \times (|V|-1)}{n^2}$ mass in its configuration for $\colorroot(Y)$, and all elements in the set will be in different configurations. We aim to show that there are many bins with small surplus that receive many of these configurations corresponding to the plateau balls in the set. To do so, we use the following results shown in \cref{theorem:pairwise}. First, we use negative association:

\nathreshold*

Second, we lower bound the probability of a bin researching a certain threshold:

\claimindividualbin*

\begin{corollary}
\label{corollary:num-configs}
With high probability, there are $\frac{n^{1 - \frac{1}{20 |V|}}}{16 e}$ bins with $z_{j} \le \frac{2}{n \log(\log(n)) \log^{2 \close}(n)}$ and at least $\frac{\log(n)}{40 |V| \log(\log(n))}$ configurations mapped from states of $S_{\textrm{indep}}^{\ord(\colorroot(Y))}$.
\end{corollary}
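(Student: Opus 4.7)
\textbf{Proof proposal for \cref{corollary:num-configs}.} The plan is to combine the two previous results into a balls-and-bins argument at $\colorroot(Y)$, where the ``bins'' are the states of $\colorroot(Y)$ with small surplus, and the ``balls'' are the distinct configurations at $\colorroot(Y)$ induced by the plateau balls indexed by $S_{\textrm{indep}}^{\ord(\colorroot(Y))}$. The first observation I would record is that, because $\colorroot(Y)$ creates a new color in the Random Function Graph Decomposition, either $\Xsrc \in \pa(\colorroot(Y))$ or $\colorroot(Y)$ has parents of at least two distinct colors; in either case, the no-sharing property of $S_{\textrm{indep}}$ from \cref{corollary:s-indep} together with the fixed value $e^*_1$ on $E^*$ guarantees that the plateau balls of $S_{\textrm{indep}}^{\ord(\colorroot(Y))}$ occupy pairwise distinct configurations of $\pa(\colorroot(Y)) \cup E_{\colorroot(Y)}$. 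Hence, in the realization of $f_{\colorroot(Y)}$ each of these configurations is mapped to an i.i.d.\ uniformly random state of $\colorroot(Y)$.

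Second, I would fix the set $A$ of bins of $\colorroot(Y)$ with surplus $z_j \le \frac{2}{n \log(\log(n)) \log^{2\close}(n)}$ before considering where the plateau configurations land. By \cref{corollary:num-bins} applied to $\colorroot(Y)$, one has $|A| \ge \frac{n^{1-1/(40|V|)}}{8}$ with high probability. Crucially, the event defining $A$ depends only on dense balls, small balls, and the mapping of non-plateau large configurations for $\colorroot(Y)$, and thus is independent of where the distinct plateau configurations indexed by $S_{\textrm{indep}}^{\ord(\colorroot(Y))}$ are thrown. So condition on a realization with $|A| \ge \frac{n^{1-1/(40|V|)}}{8}$ and throw the $|S_{\textrm{indep}}^{\ord(\colorroot(Y))}| \ge \frac{|S|}{6^{|V|}} = \Omega(n)$ plateau configurations i.i.d.\ uniformly into the $n$ bins.

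Third, I would invoke \cref{claim:individual-bin} with $c = |S_{\textrm{indep}}^{\ord(\colorroot(Y))}|/n = \Omega(1)$, $d = \frac{1}{40|V|}$, and $k = \frac{\log n}{40 |V| \log\log n}$, noting $d/c \le \log\log n$ for large $n$; this gives that any particular bin in $A$ receives at least $k$ plateau configurations with probability at least $\frac{1}{e n^{1/(40|V|)}}$. Summing over $A$, the expected number of ``good'' bins (small surplus and $\ge k$ plateau configurations) is at least
\[
\frac{|A|}{e n^{1/(40|V|)}} \;\ge\; \frac{n^{1-1/(40|V|)}}{8 e n^{1/(40|V|)}} \;=\; \frac{n^{1-1/(20|V|)}}{8 e}.
\]

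Finally, I would concentrate this count. By \cref{claim:na-threshold} the indicator variables ``bin $j$ receives at least $k$ plateau configurations'' are negatively associated across $j$, so their sum restricted to $A$ also enjoys Chernoff-type lower-tail bounds (Theorem 5 of \cite{wajc2017negative}). A standard lower-tail Chernoff then gives that this sum is at least half its expectation, i.e.\ $\ge \frac{n^{1-1/(20|V|)}}{16 e}$, except with probability $\exp(-\Omega(n^{1-1/(20|V|)}))$. A union bound over the high-probability events of \cref{corollary:s-indep}, \cref{corollary:num-bins}, and this concentration step yields the claim. The main obstacle, and the only non-routine part of the argument, is justifying that $A$ is truly independent of the plateau-configuration mapping at $\colorroot(Y)$; this is where the strong guarantees of \cref{corollary:s-indep}, namely that plateau balls in $S_{\textrm{indep}}$ never share a state at any earlier node and are free of large $\relate_2$-mass, play their decisive role.
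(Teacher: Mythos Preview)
Your proposal is correct and follows essentially the same approach as the paper: both fix the set of small-surplus bins at $\colorroot(Y)$ via \cref{corollary:num-bins}, argue independence from the plateau-configuration mapping, apply \cref{claim:individual-bin} with $d=\tfrac{1}{40|V|}$ to lower-bound each indicator's expectation by $\tfrac{1}{e n^{1/(40|V|)}}$, and then concentrate the sum. The only cosmetic difference is that the paper uses Hoeffding's inequality on the indicators (valid under NA by the same reference you cite), whereas you use a Chernoff lower-tail bound; both yield the stated $\tfrac{n^{1-1/(20|V|)}}{16e}$ count with the requisite high-probability guarantee.
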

\begin{proof}
Consider the indicator variable $\mathcal{B}_i$ if a bin met the threshold of plateau configurations. We show that with high probability $\sum_i \mathcal{B}_i$ is large enough, considering just the bins with small surplus. By \cref{corollary:num-bins} we know there are at least $\frac{n^{1 - \frac{1}{40 |V|}}}{8}$ such bins with high probability. Now let us focus on just the configurations corresponding to $S_{\textrm{indep}}^{\ord(\colorroot(Y))}$ that we know has cardinality $\Omega(n)$. By \cref{claim:individual-bin}, the probability of a bin receiving at least $\frac{\log(n)}{40 |V| \log(\log(n))}$ such configurations is at least $\frac{1}{e n ^{\frac{1}{40 |V|}}}$. Accordingly, it holds that $\sum_i E[\mathcal{B}_i] \ge \frac{n^{1 - \frac{1}{20|V|}}}{8e}$. 

By Hoeffding's inequality, we show $|\sum_i \mathcal{B_i} - \sum_i E[\mathcal{B_i}]| \le \frac{n^{1 - \frac{1}{20 |V|}}}{16e}$ with high probability:
\begin{align*}
    & P[|S_n - E_n| > t] < 2 e^{- \frac{2 t^2}{\sum c_i^2}} \\
    & \le 2 e^{- \frac{2 n^{2 - \frac{1}{10 |V|}}}{16^2 e^2 \frac{n^{1 - \frac{1}{40 |V|}}}{8}}} \\
    & \le 2 e^{- \frac{n^{1 - \frac{3}{40 |V|}}}{16 e^2}}.
\end{align*}

Therefore, $\sum_i \mathcal{B}_i \ge \frac{n^{1 - \frac{1}{20|V|}}}{16e}$ with high probability.
\end{proof}

\textbf{Survival of desirable bins to $Y$.} Now we aim to show that of the bins that received many plateau configurations and had small surplus, that enough will ``survive'' and keep these properties as we process nodes within the same color as $Y$, and that eventually at least one such bin will survive to $Y$ with high probability. 

\begin{lemma}
\label{lemma:survival}
After processing $i$ nodes of the same color as $Y$, with high probability there are at least $\frac{n^{1 - \frac{i}{20|V|}}}{16e}$ sets of plateau balls, such that each set has cardinality at least $\frac{\log(n)}{100 \log(\log(n))}$, have been assigned together to a bin with surplus $z_{j} \le \frac{2}{n \log(\log(n)) \log^{2 \close}(n)}$, and no two sets were ever mapped to the same state within this color.
\end{lemma}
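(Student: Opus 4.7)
The plan is to prove \cref{lemma:survival} by induction on $i$, with the base case handled by \cref{corollary:num-configs} and the inductive step powered by a balls-and-bins argument together with a structural observation about inheritance.

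For the base case $i=1$, \cref{corollary:num-configs} provides at least $\frac{n^{1-1/(20|V|)}}{16e}$ bins of $\colorroot(Y)$ that have small surplus and receive at least $\frac{\log(n)}{40|V|\log(\log(n))}$ plateau configurations from $S_{\textrm{indep}}^{\ord(\colorroot(Y))}$. Each such bin, together with the plateau balls it received, serves as an initial good set. The invariant ``no two sets were ever mapped to the same state within this color'' is vacuous since $\colorroot(Y)$ is the only same-colored node processed so far.

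For the inductive step, assume we have $N_i \ge \frac{n^{1-i/(20|V|)}}{16e}$ good sets after processing $i$ same-colored nodes, and process the next same-colored node $X_{i+1}$. The key structural claim is that every good set is mapped together to a single bin of $X_{i+1}$, and that distinct good sets are mapped independently. This follows because $X_{i+1}$ inherits its color, so every parent of $X_{i+1}$ in $\vdecomp$ has the same color $c$ and was processed earlier in the topological order; by the inductive invariant the balls in any good set agree on all previously processed same-colored nodes, hence on all of $X_{i+1}$'s $\vdecomp$-parents. Combined with the fact that all plateau balls share $E^*=e_1^*$ (hence the same $E_{i+1}$), this means each good set corresponds to a single configuration of $\pa(X_{i+1})\cup E_{i+1}$, and since distinct good sets sit in distinct bins at the previous same-colored node, they correspond to distinct configurations and are mapped independently and uniformly at random into bins of $X_{i+1}$.

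Next, I invoke \cref{corollary:num-bins} at $X_{i+1}$, which gives at least $G_{i+1}\ge \frac{n^{1-1/(40|V|)}}{8}$ bins of small surplus with high probability. I couple the two random processes by first revealing all non-good-set mappings into $X_{i+1}$ (which suffices to determine which bins have small surplus, since the good sets contribute a vanishing amount of mass), and then throwing the $N_i$ good sets i.i.d.\ uniformly. Each set lands in a good bin with probability at least $G_{i+1}/n \ge n^{-1/(40|V|)}/8$. A standard Chernoff/Hoeffding bound shows that at least half of the expected $N_i G_{i+1}/n$ sets do so with high probability, while the expected number of collisions between good sets is $\binom{N_i}{2}/n \ll N_i G_{i+1}/n$ because $N_i = o(\sqrt{n})$ throughout the induction; discarding collided sets leaves at least $N_{i+1}\ge \frac{n^{1-(i+1)/(20|V|)}}{16e}$ good sets that occupy distinct good bins of $X_{i+1}$, preserving every clause of the invariant. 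A union bound over the $O(|V|)$ inductive steps retains the high-probability guarantee.

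The main obstacle is the independence coupling in the inductive step: the surplus at $X_{i+1}$ and the bin assignments of the good sets are both functions of the random function $f_{X_{i+1}}$, so they are not literally independent. I would handle this by separating the randomness---revealing the mappings of all non-plateau and non-good-set configurations first (which alone determine the small-surplus set up to the negligible plateau contribution, using the $\mathcal{T}=\frac{120|V|}{n\log n}$ slack in the surplus definition) and only afterwards revealing the uniform mapping of each good-set configuration. Verifying that the plateau mass carried by a single good set is small enough not to push a ``good'' bin out of the small-surplus regime, and that the total good-set mass does not perturb the \cref{corollary:num-bins} count, is the delicate bookkeeping step, but it follows from $\mathcal{T}$ being chosen much larger than the per-set plateau mass.
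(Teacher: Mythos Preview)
Your inductive strategy is the same as the paper's: the base case is \cref{corollary:num-configs}, the structural claim that each surviving set stays in a single configuration at the next same-colored node is exactly the paper's observation (inheritance forces all $\vdecomp$-parents to share the color, and all plateau balls share $E^*=e_1^*$), and the survival step is a balls-and-bins argument combining \cref{corollary:num-bins} with a Hoeffding/Chernoff bound. The paper packages ``lands in a good bin'' and ``collides with another surviving set'' into a single $\{-1,0,1\}$ score per configuration, while you handle these two events separately and then subtract; both bookkeeping choices work.

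There is, however, one concrete error. You justify the collision bound $\binom{N_i}{2}/n \ll N_i G_{i+1}/n$ by asserting $N_i = o(\sqrt{n})$. That is false: already $N_1 \ge \frac{n^{1-1/(20|V|)}}{16e}$, which is polynomially larger than $\sqrt{n}$ for every constant $|V|$. What you actually need is $N_i = o(G_{i+1})$, i.e., the number of surviving sets is small compared to the number of small-surplus bins. This does hold, but only once you truncate the surviving family to exactly $\frac{n^{1-i/(20|V|)}}{16e}$ sets at each step (you may always discard extras); then $N_i \le \frac{n^{1-1/(20|V|)}}{16e}$ while $G_{i+1} \ge \frac{n^{1-1/(40|V|)}}{8}$, so $N_i/G_{i+1} = O(n^{-1/(40|V|)}) \to 0$ and the collision term is indeed negligible. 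The paper's proof implicitly uses the same truncation when it fixes $|B_{\textrm{bad}}| = \frac{n^{1-i/(20|V|)}}{16e}$. With this correction your argument goes through.

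One smaller point: the claim that balls in a single good set agree on \emph{all} previously processed same-colored nodes is not literally part of the lemma's stated invariant (which only says the set currently occupies one small-surplus bin). It is, however, an immediate consequence of the construction, since the set starts together at $\colorroot(Y)$ and at every inheriting node is forced into a single configuration; you should either add this to the invariant you carry through the induction or note explicitly that it follows from the previous steps.
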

\begin{proof}
Trivially, this holds for $i=1$ from \cref{corollary:num-configs}.

First, we note that all sets of plateau balls will again be mapped together. This is because they are in the same configuration, as for any node $X_i$, as it inherits its color, it must be true that they all have the same values for $\pa(X_i)$ and because they are plateau balls they must have $E^* = e^*_1$. 

Now, we will make a random variable $\mathcal{S}_i$ for whether the $i$-th configuration survived together. Roughly, we desire $\mathcal{S}_i$ to be $1$ if it is assigned to a bin with small surplus with none of the other bins that has survived to this stage, we desire $\mathcal{S}_i$ to be $0$ if it is assigned to a bin with non-small surplus, and $\mathcal{S}_i$ to be $-1$ if it lands in a small surplus bin with another bin that had survived (the intuition is that said bin would likely have a positive $\mathcal{S}_j$ and now we must cancel them out). Now, we slightly modify $\mathcal{S}_i$ so all $\mathcal{S}_i$ are independent. By \cref{corollary:num-bins} we know there will be at least $\frac{n^{1 - \frac{1}{40 |V|}}}{8}$ small surplus bins with high probability. Let us create a subset of bad bins $B_{\textrm{bad}}$ for which $\mathcal{S}_i$ will take value $-1$. Before realizing the assignment for the $i$-th configuration, add all small surplus bins that have already received a bin that survived to this round. Arbitrarily fill the remainder of $B_{\textrm{bad}}$ so that $|B_{\textrm{bad}}| =  \frac{n^{1 - \frac{i}{20|V|}}}{16e}$ . Let us define $|B_{\textrm{good}}|=\frac{n^{1 - \frac{1}{40 |V|}}}{8} - \frac{n^{1 - \frac{i}{20|V|}}}{16e}$. So, if the configuration is assigned to $B_{\textrm{bad}}$ then $\mathcal{S}_i = -1$, if assigned to $B_{\textrm{good}}$ then $\mathcal{S}_i = 1$, and otherwise $\mathcal{S}_i = 0$. By Hoeffding's inequality, it holds that $\sum_i \mathcal{S_i} \ge \frac{n^{1 - \frac{i+1}{20|V|}}}{16e}$ with high probability
and thus the lemma holds.
\end{proof}

\textbf{Concluding large conditional entropy from desirable bin.} By \cref{lemma:survival}, it is clear that with high probability there is at least $\frac{n^{19/20}}{16e} >\sqrt{n}$ bin $y'$ of $Y$ satisfying the desired properties. Now, we seek to prove that this implies $H(\Xsrc | Y=y')$. Consider a looser definition of surplus:

\begin{restatable}[Relaxed Surplus, $\mathcal{T}^{\textrm{relax}}=\frac{120 |V|}{n \log(n)}$]{definition}{defsurplusgraph}
We define the surplus of a state $i$ of $Y$ as $z_i^{\textrm{relax}} = \sum_{j \notin S} \max (0, P(X=j, Y=i) - \frac{120 |V|}{n \log(n)})$.
\end{restatable}

\begin{claim}
There exists a bin with at least $\frac{\log(n)}{100 \log(\log(n))}$ plateau balls and relaxed surplus at most $\frac{2 |V|}{n \log(\log(n)) \log^{2 \close}(n)}$
\end{claim}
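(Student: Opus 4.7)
The plan is to invoke \cref{lemma:survival} with $i$ equal to the number of nodes in the color class of $Y$ under the Random Function Graph Decomposition. Since every color class contains at most $|V|$ nodes, the relevant range $i\le |V|$ is available, and the lemma then yields with high probability at least
\[
\frac{n^{1 - |V|/(20|V|)}}{16e}\;=\;\frac{n^{19/20}}{16e}\;>\;\sqrt{n}
\]
bins of $Y$ each of which (i) was reached by a surviving set of plateau balls of cardinality at least $\frac{\log(n)}{100\log\log(n)}$, and (ii) has tracked surplus at most $\frac{2}{n\log\log(n)\log^{2\close}(n)}$. Pick any one such bin to serve as the desired $y'$; condition (i) already gives the plateau-ball part of the claim.

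Next, I would translate the tracked surplus bound at $y'$ into the relaxed surplus $z_{y'}^{\textrm{relax}}$. Because the relaxed threshold $\mathcal{T}^{\textrm{relax}}=\frac{120|V|}{n\log(n)}$ matches the threshold $\mathcal{T}$ that is actually used in the graph-setting proof (as stated right before \cref{def:configtypes}, overriding the earlier typographical value), the relaxed surplus at $y'$ coincides with the total non-plateau mass above threshold that accumulates in $y'$. This mass can only be deposited at $\colorroot(Y)$, when the plateau balls are first grouped together, and then at each of the at most $|V|-1$ intermediate same-color nodes between $\colorroot(Y)$ and $Y$. The survival criterion used in the inductive step of \cref{lemma:survival} certifies that at every one of these stages the destination bin has surplus at most $\frac{2}{n\log\log(n)\log^{2\close}(n)}$; summing over the at most $|V|$ stages yields
\[
z_{y'}^{\textrm{relax}} \;\le\; |V|\cdot\frac{2}{n\log\log(n)\log^{2\close}(n)} \;=\; \frac{2|V|}{n\log\log(n)\log^{2\close}(n)},
\]
which is exactly the bound required.

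The main obstacle in writing this out is the bookkeeping that shows that surplus accumulated across the same-color chain really does sum to at most $|V|$ times the per-stage bound, rather than compounding multiplicatively. This follows because at each intermediate node the surviving set of plateau balls lives in a single bin whose mass from non-plateau states of $\Xsrc$ is controlled by the per-stage survival predicate; once those plateau balls are bundled into a single cell and forwarded, they pick up only the mass above threshold present in the destination bin at the next stage, and the $\mathcal{T}$-threshold structure of surplus ensures additivity. Because all probabilistic failure events were already absorbed into the high-probability conclusion of \cref{lemma:survival}, the present claim is essentially a deterministic consequence, requiring no new concentration argument.
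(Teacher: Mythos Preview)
Your argument has a genuine gap: it omits the $\relate_2$ mass that travels \emph{inside} the plateau configurations, and the pigeonhole step over the $\sqrt{n}$ surviving bins that the paper uses to control it.

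Recall how the surplus bound at $\colorroot(Y)$ is obtained in \cref{corollary:num-configs}: in order to make the event ``bin has small surplus'' independent of where the plateau configurations land, the surplus calculation deliberately \emph{excludes} mass coming from the configurations in $S_{\textrm{indep}}^{\ord(\colorroot(Y))}$. Consequently, the per-stage bound $\frac{2}{n\log\log(n)\log^{2\close}(n)}$ from \cref{lemma:survival} covers only non-plateau mass that was \emph{already in the destination bin} from other configurations. It does not cover the non-plateau mass that a plateau ball carries with it inside its own configuration at $\colorroot(Y)$---precisely the $\relate_2^{\ord(\colorroot(Y))}(x)$ mass. That mass is deposited in whichever bin the plateau configuration lands and then rides along through the entire same-color chain, so it contributes to $z_{y'}^{\textrm{relax}}$ at $Y$ regardless of the per-stage survival predicate.

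The paper handles this by noting that the total $\relate_2$ mass over all plateau balls is at most $n\cdot\frac{18|V|(|V|-1)}{n^2}\le\frac{18|V|^2}{n}$, and then using the fact that at least $\sqrt{n}$ bins survive to $Y$ to pigeonhole: the bin with the least $\relate_2$ mass has at most $\frac{18|V|^2}{n^{3/2}}$, which is absorbed into the $\frac{2|V|}{n\log\log(n)\log^{2\close}(n)}$ budget for large $n$. Your proposal picks ``any one such bin'' and never uses the $\sqrt{n}$ multiplicity, so you have no control on this term; in principle a single surviving bin could hold $\Theta(|V|^2/n)$ of $\relate_2$ mass, which exceeds the target bound. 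You need both the separate accounting for $\relate_2$ mass and the averaging over the many survivors.
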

\begin{proof}
There are two contributors towards relaxed surplus. First, when configurations were assigned to $\colorroot(Y)$, each plateau ball brought $\relate_2(x)$ mass with it that could contribute to the surplus. By \cref{lemma:inductive-related}, each of the plateau balls we considered satisfied $\relate_2(x) \le \frac{18 |V| (|V|-1)}{n^2}$. Accordingly, there is at most $n \times \frac{18 |V| (|V|-1)}{n^2} \le \frac{18 |V|^2}{n}$ such mass in total. Among the at least $\sqrt{n}$ bins that survived to $Y$, let us choose the one with the least initial mass from $\relate_2$. Accordingly, it must have at most $\frac{18 |V|^2}{n^{1.5}}$ such mass.

Now, consider the at most $|V-1|$ times that mass may have been acquired by landing in a bin with at most $\frac{2}{n \log(\log(n)) \log^{2 \close}(n)}$ surplus. Combining all these masses and calculating the worst-case relaxed mass results in an upper-bound of $\frac{2}{n \log(\log(n)) \log^{2 \close}(n)} \times (|V|-1) + \frac{18 |V|^2}{n^{1.5}} \le \frac{2 |V|}{n \log(\log(n)) \log^{2 \close}(n)}$. This is because the definition of relaxed surplus gives enough threshold to fit within it all the mass that was within the regular surplus threshold for each of the groups we are aggregating. 
\end{proof}

Now, we show that this implies $H(X|Y=y')$ is large, with almost exactly the same proof as \cref{lemma:high-conditional}:

\begin{restatable}[High-entropy conditional]{lemma}{lemmahighentropygraph}
\label{lemma:high-conditional-graph}
Given a bin $y'$ that has $z^{\textrm{relax}}_{y'} \le \frac{2 |V|}{n \cdot \log(\log(n)) \cdot \log^{2 \close}(n)}$, and receives $\frac{\log(n)}{100 \log(\log(n))}$ plateau balls, then $H(\Xsrc | Y=y') = \Omega(\log(\log(n)))$.
\end{restatable}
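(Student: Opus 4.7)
\textbf{Proof proposal for \cref{lemma:high-conditional-graph}.} The plan is to essentially mirror the structure of \cref{lemma:high-conditional}, adapting constants to the graph-case setting (weaker plateau-ball count, relaxed surplus threshold, and $E^*$ replacing $E$). As in the bivariate case, I would define a vector $v(x) = P(\Xsrc = x, Y = y')$ and then write $v = \vplat + \vinit + \vsur$, where $\vplat$ captures all mass from the plateau set $S$, $\vinit$ captures the first $\mathcal{T}^{\textrm{relax}}$ of mass from each non-plateau state, and $\vsur$ captures the remainder of mass from non-plateau states. By construction $|\vsur|_1 = z^{\textrm{relax}}_{y'} \le \frac{2|V|}{n \log(\log(n)) \log^{2\close}(n)}$, and the goal is to show $H(\overline{v}) = \Omega(\log(\log(n)))$, where $\overline{v} = v / |v|_1$.

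The first step is to lower-bound the helpful mass $|\vinit + \vplat|_1$. Using the assumption that $y'$ receives at least $\frac{\log(n)}{100 \log(\log(n))}$ plateau balls, and combining the plateau near-uniformity $\min_{x \in S} P(\Xsrc=x) \ge \max_{x \in S} P(\Xsrc=x)/\log^{\close}(n)$ from \cref{lemma:plateau} with the lower bound $P(E^* = e^*_1) \ge 1/\log^{\close |V|}(n)$ from \cref{lemma:e-star-bound}, I would conclude $|\vinit + \vplat|_1 \ge |\vplat|_1 = \Omega\!\left(\frac{1}{n \log(\log(n)) \log^{\close(|V|+1)}(n)}\right)$. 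Since $|V| = O(1)$ and $\close = 1/4$, the exponent $\close(|V|+1)$ is a bounded constant strictly less than $1$, which is all the subsequent steps will need.

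The second step is to bound $\max_x \overvhelp(x)$ where $\overvhelp = (\vinit + \vplat)/|\vinit + \vplat|_1$. For non-plateau states $x \notin S$, each coordinate is at most $\mathcal{T}^{\textrm{relax}} = \frac{120|V|}{n \log(n)}$, so $\max_{x \notin S} \overvhelp(x) \le \mathcal{T}^{\textrm{relax}} / |\vinit+\vplat|_1 = O(\log(\log(n))\cdot \log^{\close(|V|+1)}(n)/\log(n))$. For plateau states $x \in S$, each coordinate is at most $\max_{x \in S} P(\Xsrc=x)$, and dividing by the plateau-ball-driven lower bound on $|\vinit+\vplat|_1$ gives a bound of the same order. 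Since $\close(|V|+1) < 1$ (via the choice $\close = 1/4$ and $|V| = O(1)$), both bounds are of the form $O(\log^{1-\delta}(n)^{-1})$ for a positive constant $\delta$, and so $H(\overvhelp) \ge \log(1/\max_x \overvhelp(x)) = \Omega(\log(\log(n)))$.

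The final step is to show that incorporating $\vsur$ into $\overline{v}$ only costs a constant multiplicative factor in entropy. This proceeds exactly as in \cref{claim:helpful-entropy}: using that $\sum_x p \log(1/p)$ is monotonic on $[0,1/e]$ and losing at most a constant additive term, one obtains $H(\overline{v}) \ge \frac{|\vinit+\vplat|_1}{|\vinit+\vplat|_1 + z^{\textrm{relax}}_{y'}} \cdot H(\overvhelp) - 2$. The main obstacle will be verifying that $|\vinit+\vplat|_1/(|\vinit+\vplat|_1 + z^{\textrm{relax}}_{y'}) = \Omega(1)$, i.e., that the $|V|$-dependent constants in the relaxed surplus threshold and the factor-loss from the smaller plateau-ball count (only $\frac{\log n}{100\log\log n}$ rather than $\frac{\log n}{2\log\log n}$) do not erode the helpful/hurtful ratio. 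Concretely, this amounts to checking that both $|\vinit+\vplat|_1$ and $z^{\textrm{relax}}_{y'}$ scale like $\Theta(1/(n \log(\log(n)) \log^{O(1)}(n)))$ with the helpful-side constant dominating; since $|V| = O(1)$ the ratio is bounded by a constant depending only on $|V|$ and $\clb$, completing the proof.
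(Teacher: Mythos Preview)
Your proposal follows the same three-step decomposition as the paper (helpful mass, helpful entropy, ratio absorption), but there is a genuine quantitative gap in how you lower-bound the helpful mass, and it causes both your second and third steps to fail for $|V|>2$.

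Concretely, you invoke \cref{lemma:e-star-bound} to get $P(E^*=e^*_1)\ge 1/\log^{\close |V|}(n)$, yielding
\[
|\vinit+\vplat|_1 \;=\; \Omega\!\left(\frac{1}{n\log\log n\cdot \log^{\close(|V|+1)}(n)}\right).
\]
You then assert that $\close(|V|+1)<1$ because $\close=1/4$ and $|V|=O(1)$; this is simply false once $|V|\ge 3$ (for $|V|=3$ the exponent is already $1$, for $|V|=4$ it is $5/4$). More importantly, compare your helpful-mass bound to the hypothesis $z^{\textrm{relax}}_{y'}\le \frac{2|V|}{n\log\log n\cdot \log^{2\close}(n)}$: the ratio you need in the last step is
\[
\frac{|\vinit+\vplat|_1}{|\vinit+\vplat|_1+z^{\textrm{relax}}_{y'}}\;\ge\;\Omega\!\left(\frac{1}{1+\log^{\close(|V|-1)}(n)}\right),
\]
which tends to $0$ for every $|V|>1$. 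So the exponents on the $\log$-factors do \emph{not} match, and ``the helpful-side constant dominating'' is not a matter of constants but of diverging powers of $\log n$.

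The fix the paper (implicitly) uses is to bypass \cref{lemma:e-star-bound} and apply \cref{lemma:e-bound} directly to $E^*$: since $|V|=O(1)$ and each $H(E_i)=o(\log\log n)$, we have $H(E^*)\le\sum_i H(E_i)=o(\log\log n)\le \close\log\log n$ for $n$ large enough, whence $P(E^*=e^*_1)\ge 1/\log^{\close}(n)$. With this single-$\close$ bound, the plateau contribution gives $|\vinit+\vplat|_1\ge \Omega(1/(n\log\log n\cdot\log^{2\close}(n)))$, the exponent now matches the surplus hypothesis, and both the $\max_x\overvhelp(x)=O(\log\log n\cdot\log^{2\close}(n)/\log n)$ bound and the $\Omega(1)$ ratio go through exactly as in \cref{lemma:high-conditional}.
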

\begin{proof}
To show $H(\Xsrc | Y=y')$ is large, we first define the vector $v$ such that $v(x) = P(\Xsrc=x,Y=y')$. Similarly, we define $\overline{v}(x) = \frac{v}{P(Y=y')}$, meaning $\overline{v}(x) = P(\Xsrc=x | Y=y')$ and $|\overline{v}|_1 = 1$. Our underlying goal is to show $H(\overline{v})$ is large. To accomplish this, we will split the probability mass of $v$ into three different vectors $\vinit, \vplat, \vsur$ such that $v = \vinit + \vplat + \vsur$. The entries of $\vplat$ will correspond to mass from plateau states of $X$, $\vinit$ will correspond to the first $\mathcal{T}^{\textrm{relaxed}}$ mass from non-plateau states of $\Xsrc$, and $\vsur$ will correspond to mass that contributes to the surplus $z_{y'}$. We more formally define the three vectors as follows:

\begin{itemize}
    \item $\vplat$. The vector of probability mass from plateau states of $\Xsrc$. $\vplat(x)$ is $0$ if $x \notin S$ and $\vplat(x) = P(\Xsrc=x,Y=y')$ if $x \in S$.
    \item $\vinit$. For non-plateau states of $\Xsrc$, their first $\mathcal{T}$ probability mass belongs to $\vinit$. $\vinit(x)= \min (P(X=x,Y=y'),\mathcal{T}^{\textrm{relaxed}})$ if $x \notin S$ and $\vinit(x)=0$ otherwise.
    \item $\vsur$. For non-plateau states of $\Xsrc$, their probability mass beyond the first $\mathcal{T}^{\textrm{relaxed}}$ mass belongs to $\vsur$. This corresponds to the surplus quantity. $\vsur(x)= \max (0,P(\Xsrc=x,Y=y')-\mathcal{T}^{\textrm{relaxed}})$ if $x \notin S$ and $\vsur(x)=0$ otherwise. By this definition, $z_{y'} = |\vsur|_1$.
\end{itemize}

To show $H(\Xsrc|Y=y')=H(\overline{v})$ is large, we divide our approach into two steps:

\begin{enumerate}
    \item Show there is substantial helpful mass: $|\vinit + \vplat|_1 = \Omega \left( \frac{1}{n \cdot \log(\log(n)) \cdot \log^{2 \close}(n)} \right)$
    \item Show the distribution of helpful mass has high entropy: $H \left( \frac{\vinit + \vplat}{|\vinit + \vplat|_1} \right) = \Omega (\log(\log(n)))$ 
    \item Show that, even after adding the hurtful mass, the conditional entropy is large: $H(\Xsrc | Y=y') = H(\overline{v}) \ge H \left( \frac{\vinit + \vplat}{|\vinit + \vplat|_1} \right) - O(1) = \Omega (\log(\log(n)))$
\end{enumerate}

In the first step, we are showing that the distribution when focusing on just the helpful mass of $\vinit, \vplat$ has high a substantial amount of probability mass. In the second step, we prove how this distribution of helpful mass has high entropy. In the third step, we show that the hurtful mass of $\vsur$ does not decrease entropy more than a constant.

First, we show that there is a substantial amount of helpful mass:
\begin{claim}
\label{claim:helpful-mass}
$|\vinit + \vplat|_1 = \frac{1}{100 \clb n \cdot \log(\log(n)) \cdot \log^{2 \close}(n)} $
\end{claim}
\begin{proof}
Recall that the bin $y'$ received $\frac{\log(n)}{100\log(\log(n))}$ plateau balls. As defined in \cref{lemma:plateau}, the set $S$ of plateau states is defined such that $\frac{\max_{x \in S} P(X=x)}{\min_{x \in S} P(X=x)} \le \log^{\close}(n)$ and $\min_{x \in S}P(X=x) \ge \frac{1}{\clb n \log(n)}$. Also recall that by \cref{lemma:e-bound} the most probably state of $E$ has large probability. In particular, $P(E=e_1) \ge \frac{1}{\log^{\close}(n)}$. Let the subset $S' \subseteq S$ be the subset of plateau states of $X$ such that their plateau ball is mapped to $y'$. In particular, for every $x \in S'$ it holds that $f(x,e_1)=y'$. Accordingly, $P(\Xsrc=x,Y=y') \ge P(\Xsrc=x)\cdot P(E=e_1)$ for $x \in S'$. Thus, the total weight from plateau states of $\Xsrc$ is at least $|S'| \cdot \min_{x \in S'} P(\Xsrc=x) \cdot P(E=e_1) \ge |S'| \cdot \frac{\max_{x \in S'} P(\Xsrc=x)}{\log^{\close}(n)} \cdot P(E=e_1) \ge \frac{1}{100\clb n \log(\log(n)) \log^{2 \close}(n)}$.
\end{proof}

Next, we show the distribution of helpful mass has high entropy:
\begin{claim}
$H \left( \frac{\vinit + \vplat}{|\vinit + \vplat|_1} \right) \ge \frac{\log(\log(n))}{4}$ 
\end{claim}
\begin{proof}
Let us define $\overvhelp = \frac{\vinit + \vplat}{|\vinit + \vplat|_1}$ to be the vector of helpful mass, and we will show $H(\overvhelp)$ is large by upper-bounding $\max_x \overvhelp(x)$. 

For non-plateau states of $\Xsrc$, it follows from \cref{claim:helpful-mass} that $\max_{x \notin S} \overvhelp(x) \le \frac{\mathcal{T}^{\textrm{relaxed}}}{|\vinit + \vplat|_1} \le \frac{\mathcal{T}^{\textrm{relaxed}}}{\frac{1}{100 \clb n \cdot \log(\log(n)) \cdot \log^{2 \close}(n)}} = \frac{100 |V| \clb \log(\log(n)) \cdot \log^{2 \close}(n)}{\log(n)}$. 
For plateau states of $X$, in \cref{claim:helpful-mass} we also developed the lower-bound of $|\vinit + \vplat|_1 \ge |S'| \cdot \frac{\max_{x \in S'} P(X=x)}{\log^{\close}(n)} \cdot P(E=e_1) \ge \frac{\log(n) \cdot \max_{x \in S'} P(X=x)}{2\log^{2 \close}(n) \log(\log(n))}$. Accordingly, we can upper-bound $\max_{x \in S'} \overvhelp(x) \le \frac{\max_{x \in S'} P(X=x)}{|\vinit + \vplat|_1} \le \frac{100 \log(\log(n)) \log^{2 \close}(n)}{\log(n)}$.

Accordingly, we can lower-bound the entropy of $H(\overvhelp) = \sum_{x} \overvhelp(x) \cdot \log(\frac{1}{\overvhelp(x)}) \ge \sum_{x} \overvhelp(x) \cdot \log(\frac{1}{\max_{x'}\overvhelp(x')}) = \log(\frac{1}{\max_{x'}\overvhelp(x')}) \ge \log(\frac{1200 \clb \log(n)}{\log^{2 \close}(n) \log(\log(n))}) = (1 - 2 \close) \log(\log(n)) - \log(\log(\log(n))) - \log(1200 \clb) = \frac{\log(\log(n))}{2} - \log(\log(\log(n))) - \log(1200 \clb) \ge \frac{\log(\log(n))}{4}$ for sufficiently large $n$ where $\frac{\log(\log(n))}{2} \ge \log(\log(\log(n))) + \log(1200 \clb)$.
\end{proof}

Finally, we show the hurtful mass does not decrease entropy much, and thus our conditional distribution has high entropy:
\begin{claim}
\label{claim:helpful-entropy}
$H(X | Y=y') = H(\overline{v}) \ge \Omega(1) \cdot H \left( \frac{\vinit + \vplat}{|\overvinit + \overvplat|_1} \right) - O(1) = \Omega (\log(\log(n)))$
\end{claim}
\begin{proof}
We lower-bound $H(\overline{v})$ with the main intuitions that $H\left(\frac{\vinit + \vplat}{|\vinit + \vplat|_1}\right) = \Omega(\log(\log(n)))$ and $\frac{|\vinit + \vplat|_1}{|\vinit + \vplat + \vsur|_1} = \Omega(1)$. We more precisely obtain this lower-bound for $H(\overline{v})$ as follows:
\begin{align}
    & H(\overline{v}) = H\left(\frac{\vinit + \vplat + \vsur}{|\vinit + \vplat + \vsur|_1}\right) \nonumber\\ 
    & = \sum_x \frac{\vinit(x) + \vplat(x) + \vsur(x)}{|\vinit + \vplat + \vsur|_1} \times \nonumber\\
    & \qquad\log\frac{|\vinit + \vplat + \vsur|_1}{\vinit(x) + \vplat(x) + \vsur(x)} \nonumber \\ 
    & \ge \sum_x \frac{\vinit(x) + \vplat(x)}{|\vinit + \vplat + \vsur|_1} \times  \nonumber\\
    & \qquad\log\frac{|\vinit + \vplat + \vsur|_1}{\vinit(x) + \vplat(x)} - 2\label{step:decrease} \\ 
    & \ge \sum_x \frac{\vinit(x) + \vplat(x)}{|\vinit + \vplat + \vsur|_1} \times \nonumber\\
    & \qquad\log\frac{|\vinit + \vplat|_1}{\vinit(x) + \vplat(x)} \nonumber -2 \\ 
    & = \frac{|\vinit + \vplat|_1}{|\vinit + \vplat + \vsur|_1} H\left(\frac{\vinit + \vplat}{|\vinit + \vplat|_1}\right) \nonumber \\
    & -2\nonumber\\
    & = \frac{|\vinit + \vplat|_1}{|\vinit + \vplat|_1 + z_{y'}} H\left(\frac{\vinit + \vplat}{|\vinit + \vplat|_1}\right) -2\nonumber\\
    & \ge  \frac{1}{1+50\clb |V|} \cdot H\left(\frac{\vinit + \vplat}{|\vinit + \vplat|_1}\right) -2  \label{step:use-mass} \\
    & = \Omega(\log(\log(n))) \label{step:use-entropy}
\end{align}

To obtain Step \ref{step:decrease}, we note that all summands are manipulated from the form $\sum_x p_x \log(\frac{1}{p_x})$ to $\sum_x p'_x \log(\frac{1}{p'_x})$ where $p'_x \le p_x$ for all $x$. As the derivative of $p \log(\frac{1}{p})$ is non-negative for $0 \le p \le \frac{1}{e}$, the value of at most two summands can decrease, and they can each decrease by at most one. To obtain Step \ref{step:use-mass}, we use \cref{claim:helpful-mass}. To obtain Step \ref{step:use-entropy}, we use \cref{claim:helpful-entropy}.
\end{proof}

Thus, we have shown $H(X|Y=y') = \Omega(\log(\log(n)))$.
\end{proof}
\begin{corollary}
Under our assumptions, $H(\Xsrc|Y=y')=\Omega(\log(\log(n)))$ and thus $H(\tilde{E}) = \Omega(\log(\log(n)))$.
\end{corollary}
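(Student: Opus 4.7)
The plan is to decompose the joint-mass vector $v(x) := P(\Xsrc = x, Y = y')$ into three nonnegative pieces $v = \vplat + \vinit + \vsur$ tracking, respectively, mass from plateau states of $\Xsrc$, the first $\mathcal{T}^{\textrm{relax}}$ of mass from each non-plateau coordinate, and the remaining surplus above the threshold. Since $H(\Xsrc \mid Y = y') = H(\overline{v})$ for $\overline{v} := v/|v|_1$, the task reduces to lower-bounding $H(\overline{v})$. The strategy is to first analyze the helpful sub-distribution $\overvhelp := (\vplat + \vinit)/|\vplat + \vinit|_1$ in isolation, show that it has entropy $\Omega(\log\log n)$, and then argue that re-introducing the surplus $\vsur$ only costs an $O(1)$ additive factor.

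Step 1 (mass floor). The bin $y'$ is assumed to receive at least $\frac{\log n}{100\log\log n}$ plateau balls, each of which contributes at least $\min_{x\in S} P(\Xsrc = x)\cdot P(E^* = e_1^*)$ to $|\vplat|_1$. Plugging in the plateau lower bound $\min_{x\in S} P(\Xsrc = x) \ge \Omega(1/(n\log n))$ from \cref{lemma:plateau} and the bound $P(E^* = e_1^*) \ge (\log n)^{-\close(|V|-1)}$ from \cref{lemma:e-star-bound}, and using the plateau-tightness factor $(\log n)^{\close}$ to replace $\min$ by $\max$ inside $S$, this yields $|\vplat + \vinit|_1 \ge |\vplat|_1 = \Omega\bigl(1/(n\,\log\log n\cdot (\log n)^{O(\close |V|)})\bigr)$.

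Step 2 (helpful entropy). I upper-bound $\|\overvhelp\|_\infty$ in order to lower-bound $H(\overvhelp)$. Every non-plateau coordinate of $\overvhelp$ is at most $\mathcal{T}^{\textrm{relax}}/|\vplat+\vinit|_1 = O(|V|\log\log n\cdot (\log n)^{O(\close|V|)}/\log n)$; every plateau coordinate is at most $\max_{x\in S} P(\Xsrc = x)\cdot P(E^* = e_1^*)/|\vplat+\vinit|_1$, which is only a factor $(\log n)^{\close}$ larger than the per-ball term used in Step 1. Both bounds give $\|\overvhelp\|_\infty = O(\log\log n\cdot (\log n)^{O(\close|V|)}/\log n)$, whence $H(\overvhelp) \ge \log(1/\|\overvhelp\|_\infty) = (1 - O(\close|V|))\log\log n - O(\log\log\log n) = \Omega(\log\log n)$, since $|V| = O(1)$ and the paper fixes $\close = 1/4$, making the exponent constant strictly less than one.

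Step 3 (absorbing surplus). I expand $H(\overline{v})$ coordinate-by-coordinate and compare term-by-term to $H(\overvhelp)$. Using the monotonicity of $p\log(1/p)$ on $[0,1/e]$, I drop the $\vsur$ contributions from each summand at a cost of at most two additive $O(1)$ boundary terms, obtaining $H(\overline{v}) \ge \frac{|\vplat+\vinit|_1}{|v|_1}\cdot H(\overvhelp) - O(1)$. Under the surplus hypothesis $|\vsur|_1 = z^{\textrm{relax}}_{y'} \le 2|V|/(n\log\log n\cdot (\log n)^{2\close})$, the prefactor $|\vplat+\vinit|_1/|v|_1 = 1/(1 + |\vsur|_1/|\vplat+\vinit|_1)$ is $\Omega(1)$ because $|V| = O(1)$, giving $H(\overline{v}) = \Omega(\log\log n)$ as desired, and consequently $H(\tilde{E}) \ge \max_y H(\Xsrc \mid Y=y) = \Omega(\log\log n)$. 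The main obstacle I expect is precisely this last step: the surplus simultaneously shrinks each coordinate's weight (hurtful) and inflates the log denominators (helpful), so one must track both effects together with the constant loss from coordinates where $p\log(1/p)$ is non-monotone; the template for this calculation is \cref{claim:helpful-entropy} from the bivariate analysis, which carries over with constants adjusted for the looser plateau-closeness allowance and the $|V|$-dependent threshold $\mathcal{T}^{\textrm{relax}}$.
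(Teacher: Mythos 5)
Your proposal tracks the paper's own argument closely: the corollary in the paper follows immediately from the preceding Lemma~\ref{lemma:high-conditional-graph} (``High-entropy conditional''), and your Steps~1--3 reproduce exactly that lemma's internal structure --- the decomposition $v = \vplat + \vinit + \vsur$, the floor on $|\vplat+\vinit|_1$ via the number of plateau balls times the least plateau mass and $P(E^*=e_1^*)$, the $\ell_\infty$ bound on $\overvhelp$ to obtain $H(\overvhelp)=\Omega(\log\log n)$, and the monotonicity-of-$p\log(1/p)$ argument that absorbs $\vsur$ at an $O(1)$ additive and $\Omega(1)$ multiplicative cost (this is precisely the chain of Steps~\ref{step:decrease}--\ref{step:use-entropy} in the paper). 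The only substantive difference is notational care: you write the polylog factor as $(\log n)^{O(\close|V|)}$, correctly accounting for $P(E^*=e_1^*)\ge(\log n)^{-\close(|V|-1)}$, whereas the paper's claim within Lemma~\ref{lemma:high-conditional-graph} invokes Lemma~\ref{lemma:e-bound} and writes $P(E=e_1)\ge 1/\log^\close(n)$, which gives the cleaner-looking $\log^{2\close}(n)$ but is really only correct for the bivariate case --- the graph version should use Lemma~\ref{lemma:e-star-bound} and thus carry the $|V|$-dependent exponent.

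That leaves one small gap in your Step~2: you assert that the resulting exponent is ``a constant strictly less than one'' merely because $\close=1/4$, but $(|V|+1)\close<1$ fails already at $|V|=4$. The conclusion $H(\overvhelp)=\Omega(\log\log n)$ requires that $\close$ be chosen with $\close<1/(|V|+1)$; since $|V|=O(1)$ by Assumption~\ref{assumption:graph} this is always possible (e.g., $\close=\tfrac{1}{4|V|}$), but it should be stated explicitly rather than inherited from the bivariate choice $\close=1/4$. You should make that dependence visible; with it in place, your argument closes correctly.
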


\section{Counterexample for General Identifiability with Unconfounded-Pairwise Oracles}
\label{section:non-identifiable}
We formalize the oracle first discussed in \cref{section:learning-graphs}:
\begin{definition}[Unconfounded-pairwise oracle]
An unconfounded-pairwise oracle is an oracle that returns the correct orientation of an edge if the edge exists in the true graph and if there is no confounding for the edge.
\end{definition}

Consider a causal graph $G_1$ with four nodes and the edge set $\{X_1 \rightarrow X_2, X_1 \rightarrow X_3, X_2 \rightarrow X_4, X_2 \rightarrow X_3, X_3 \rightarrow X_4\}$. Likewise, consider $G_2$ with four nodes and the edge set $\{X_2 \rightarrow X_1, X_3 \rightarrow X_1, X_4 \rightarrow X_2, X_2 \rightarrow X_3, X_4 \rightarrow X_3\}$. Note that these graphs are in the same Markov equivalence class. Finally, consider a causal graph $G_3$ with four nodes and the edge set $\{X_1 \rightarrow X_2, X_1 \rightarrow X_3, X_4 \rightarrow X_2, X_2 \rightarrow X_3, X_4 \rightarrow X_3\}$. If we orient all edges in the skeleton for $G_1$ and $G_2$ without conditioning using an unconfounded-pairwise oracle, $G_3$ is a consistent output of the oracle for both $G_1$ and $G_2$. As a result, the peeling approach would see the same set of edge orientations for $G_1$ and $G_2$ and not be able to identify the true source.

\section{Proof of \cref{theorem:graph-alg}}
We need to show that in each iteration of the \textbf{while} loop in \emph{line 4}, the algorithm correctly identifies all non-source nodes and only the true non-sources. In the first iteration of the loop, there is no node to condition on and therefore tests are independence (not conditional independence) tests. Consider a non-source node $N$ in the initial graph. Then there must be a directed path from some source node $\Xsrc$ to $N$. Due to the faithfulness assumption, two nodes with a directed path cannot be unconditionally independent. Then either $\oracle(\Xsrc,N | \emptyset)$ will return $\Xsrc\rightarrow N$ in \emph{line 12} or $\oracle(N,\Xsrc|\emptyset)$ will return $\Xsrc\rightarrow N$. In either case, $N$ is added to the non-source list. Now suppose $S$ is a source node. It is never identified as a non-source since it is either conditionally independent with the node it is compared with, found in \emph{line 9} , or it is conditionally dependent with a non-source (for which it has a path to) and the oracle orients correctly in \emph{line 11} or \emph{line 13}. Since all non-sources are correctly identified and only the true non-sources are identified as non-sources, all sources are correctly identified as well in \emph{line 15}. Since sources are incomparable in the partial order, the order in which they are added to the topological order does not impact the validity of topological order in \emph{line 18}. 

Suppose the \textbf{while} loop identified all sources correctly for all iterations $j$, $\forall j<i$. Let $\mathcal{S}_i$ be the sources in $\mathcal{R}$, i.e., the sources that are to be discovered in iteration $i$. Let $Pa_{\mathcal{S}_i}$ be the set of parents of $\mathcal{S}_i$ in the initial graph. Then $Pa_{\mathcal{S}_i}\subseteq \mathcal{C}$, i.e., the set of conditioned nodes include all the parents of the current source nodes. Therefore, $\mathcal{C}$ blocks all backdoor paths from $\mathcal{S}_i$, effectively disconnecting the previous found sources from the graph: This is because $G\backslash \mathcal{C}$ is a valid Bayesian network for the conditional distribution $p(.|\mathcal{C}=c)$. Therefore, in $G\backslash \mathcal{C}$, $\mathcal{S}_i$ are source nodes and the source pathwise oracle correctly identifies all non-source nodes, similar to the base case. 

This implies after the \textbf{while} loop terminates, we have a valid topological order for the nodes in the graph. Finally, the for loop in \emph{line 19} converts the obtained total order into a partial order since either it removes an edge, or adds an edge that is consistent with the topological order. Therefore we only need to show that non-edges are correct. Suppose $X_i, X_j$ are non-adjacent in the true graph. Then conditioned on all ancestors of $X_i, X_j$ they are independent due to d-separation in the graph. Therefore all non-edges are identified at some iteration of the \textbf{for loop}. Furthermore, under the faithfulness assumption, no edge can be mistakenly identified as a non-edge:
No two adjacent nodes at any stage of the algorithm are independent conditioned on the ancestors of the cause variable
.

\section{Additional Experiments}
\label{app:experiments}
\subsection{Further Synthetic Experiments and Comparisons with ANM}
Figure \ref{fig:3_node_triangle_anm} compares the performance under the additive noise model assumption, i.e., the data is sampled from $X=f(\pa_X)+N_X$ for all variables. The noise term is chosen as a uniform, zero-mean cyclic noise in the supports $\{-1,0,1\}, \{-2,-1,0,1,2\}, \{-3,-2,-1,0,1,2,3\}$ which corresponds to different entropies $H(N_X)$. This entropy is shown on the $x-$axis in Figure \ref{fig:3_node_triangle_anm}. The corresponding plots where the $x-$axis represents the number of samples are given in the Appendix. While this is the generative model that discrete ANM is designed for, its performance is either approximately matched or exceeded by entropic enumeration.

For further experiments with different number of nodes, please see Figures \ref{fig:3_node_triangle_anm}, \ref{fig:3_node_triangle_anm_samples}, 
\ref{fig:5_node_anm}.

\begin{figure*}[h!]
\begin{center}
\subfigure[a]{\label{fig:3_node_triangle_anm_100}\includegraphics[width=0.3\textwidth]{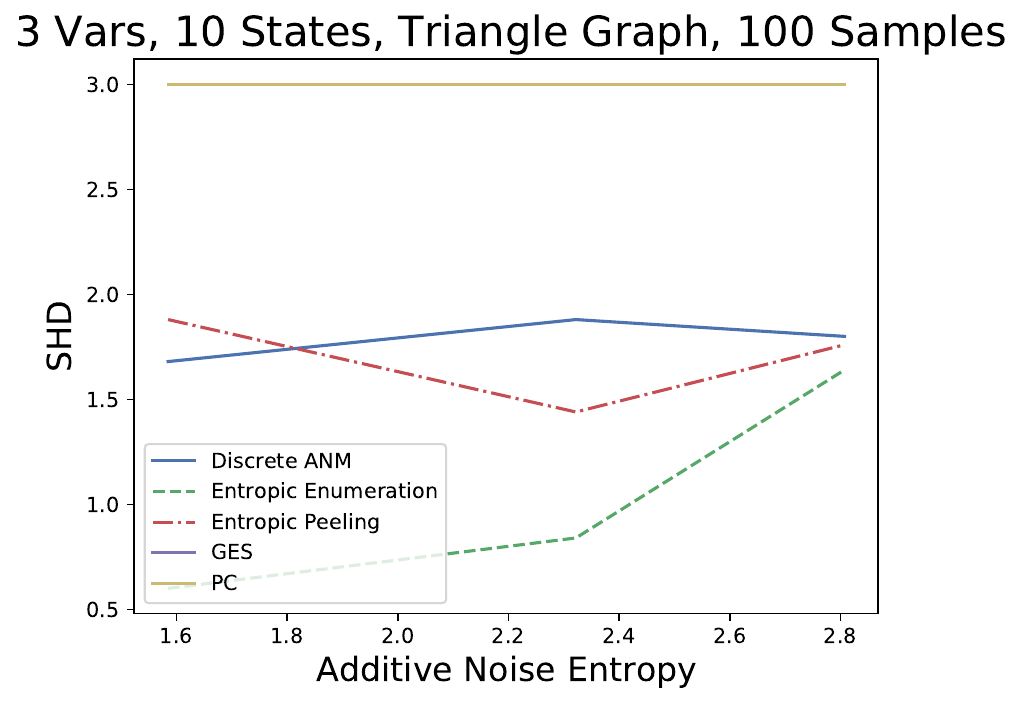}}
\subfigure[b]{\label{fig:3_node_triangle_anm_500}\includegraphics[width=0.3\textwidth]{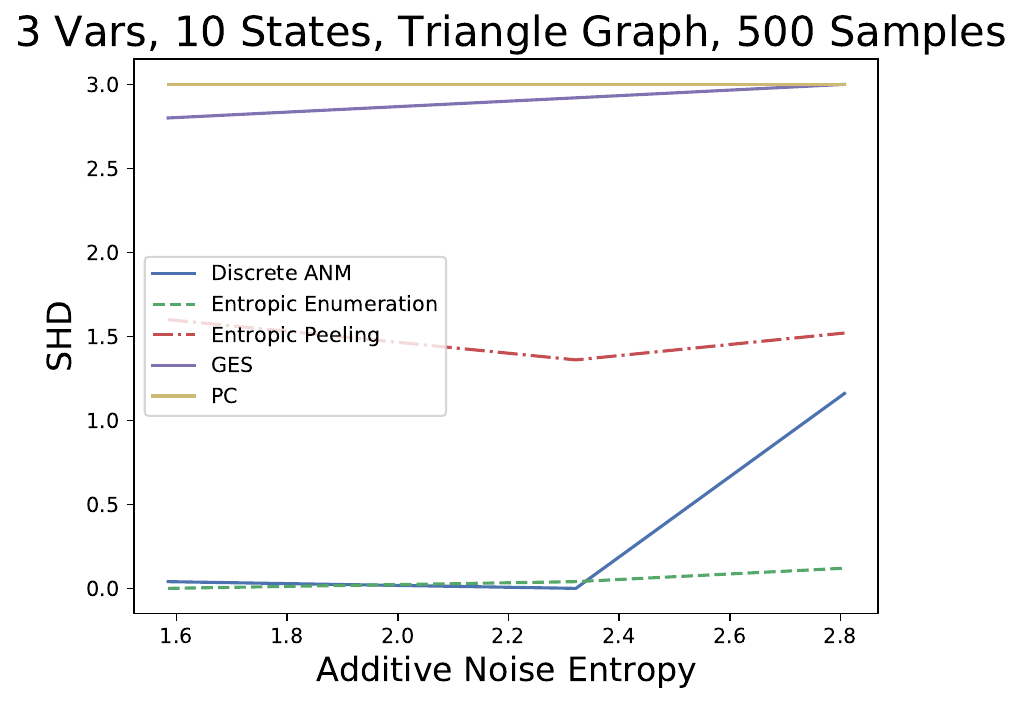}}
\subfigure[c]{\label{fig:3_node_triangle_anm_1000}\includegraphics[width=0.3\textwidth]{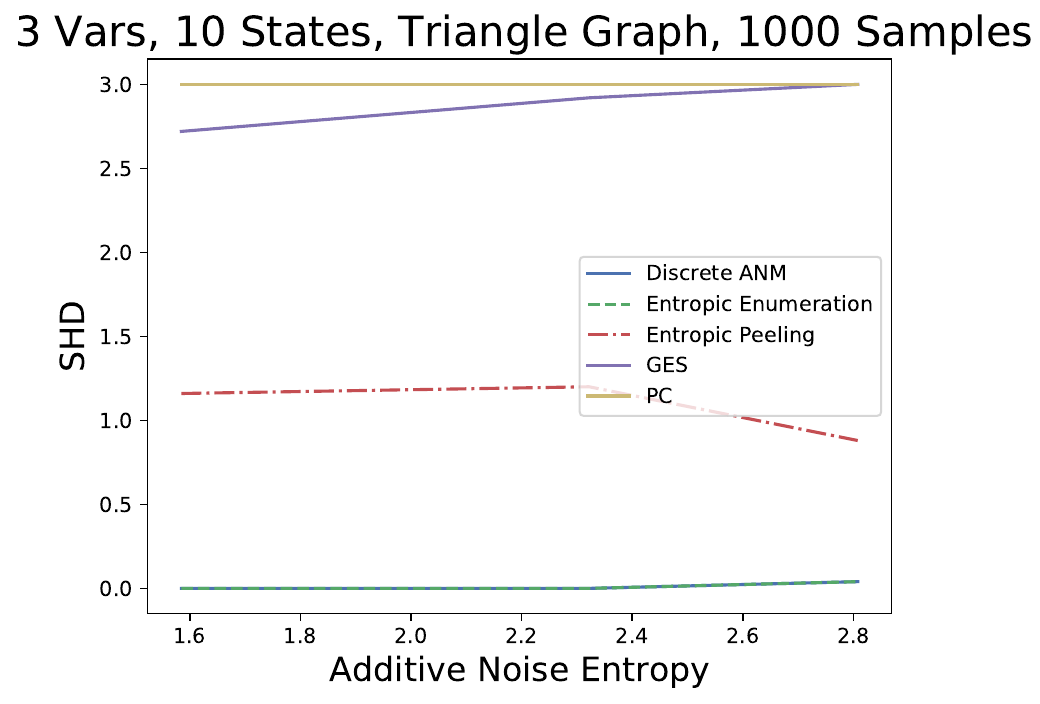}}
\caption{Performance of methods in the ANM setting in the triangle graph $X\rightarrow Y\rightarrow Z, X\rightarrow Z$: $25$ datasets are sampled for each configuration from the ANM model $X=f(\pa_X)+N$. The $x-$axis shows entropy of the additive noise. 
}
\label{fig:3_node_triangle_anm}
\end{center}
\vskip -0.2in
\end{figure*}

\begin{figure*}[ht!]
\vskip 0.2in
\begin{center}
\subfigure[]{\label{fig:3_node_triangle_anm_100}\includegraphics[width=0.3\textwidth]{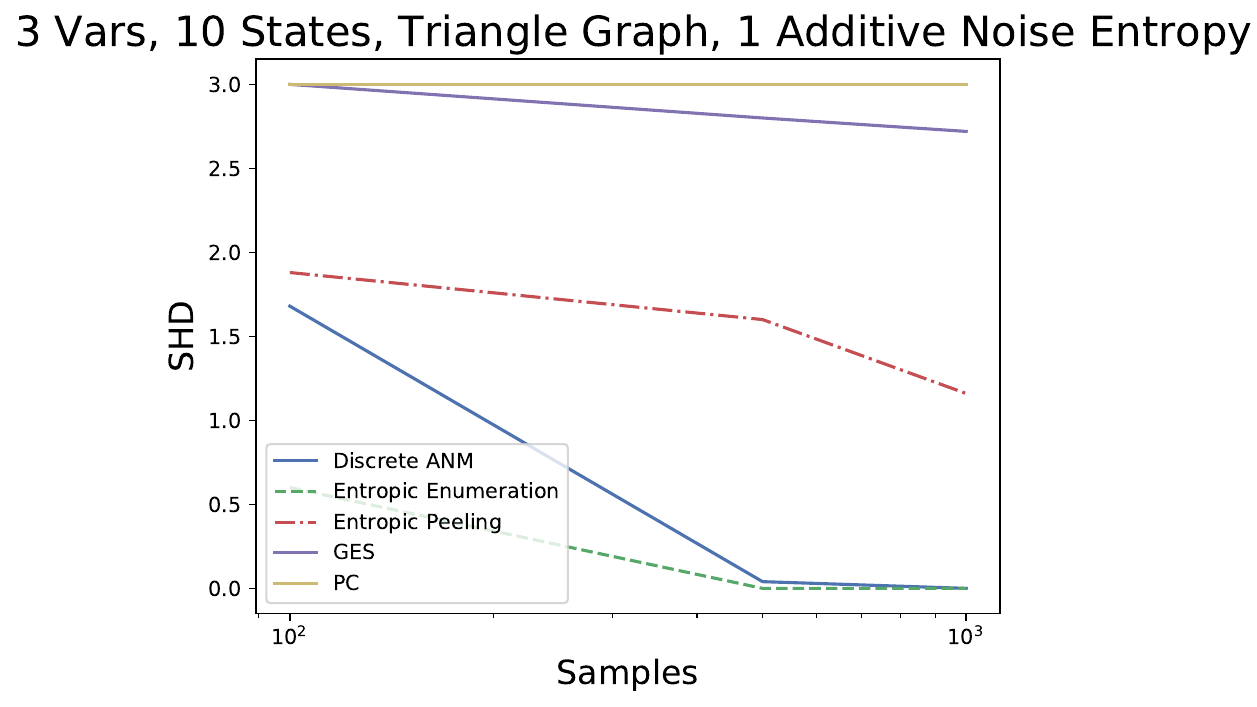}}
\subfigure[]{\label{fig:3_node_triangle_anm_500}\includegraphics[width=0.3\textwidth]{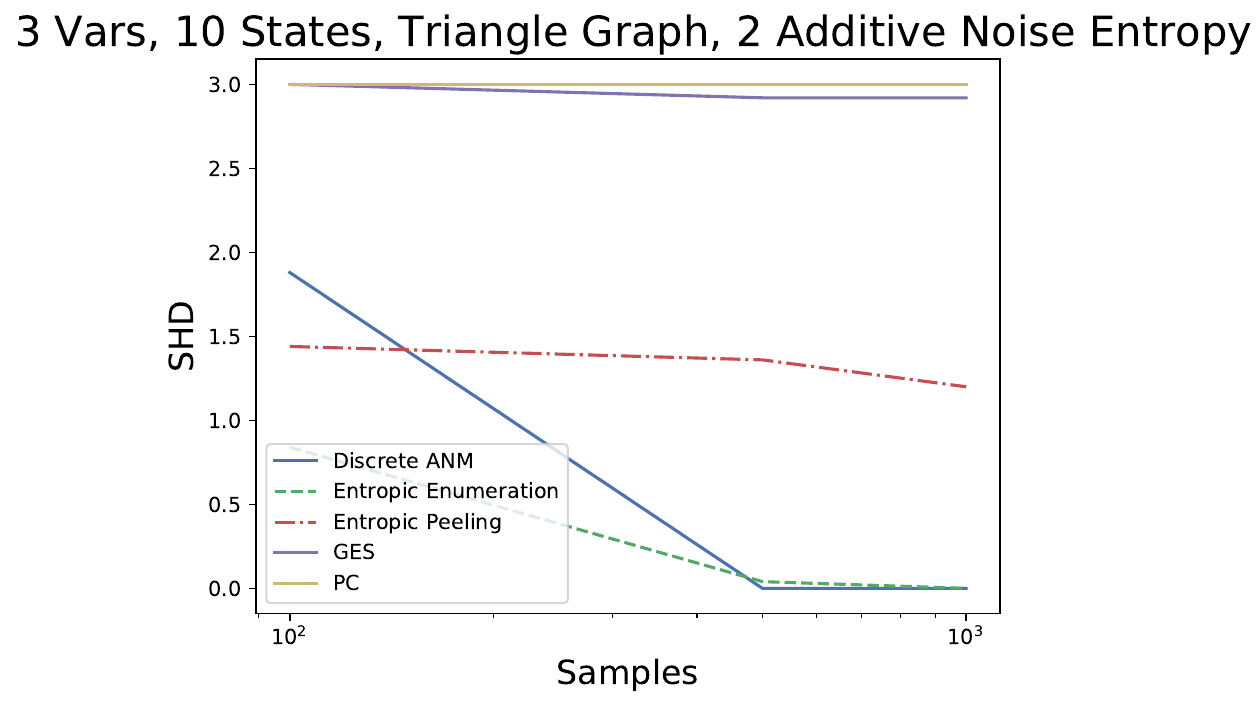}}
\subfigure[]{\label{fig:3_node_triangle_anm_1000}\includegraphics[width=0.3\textwidth]{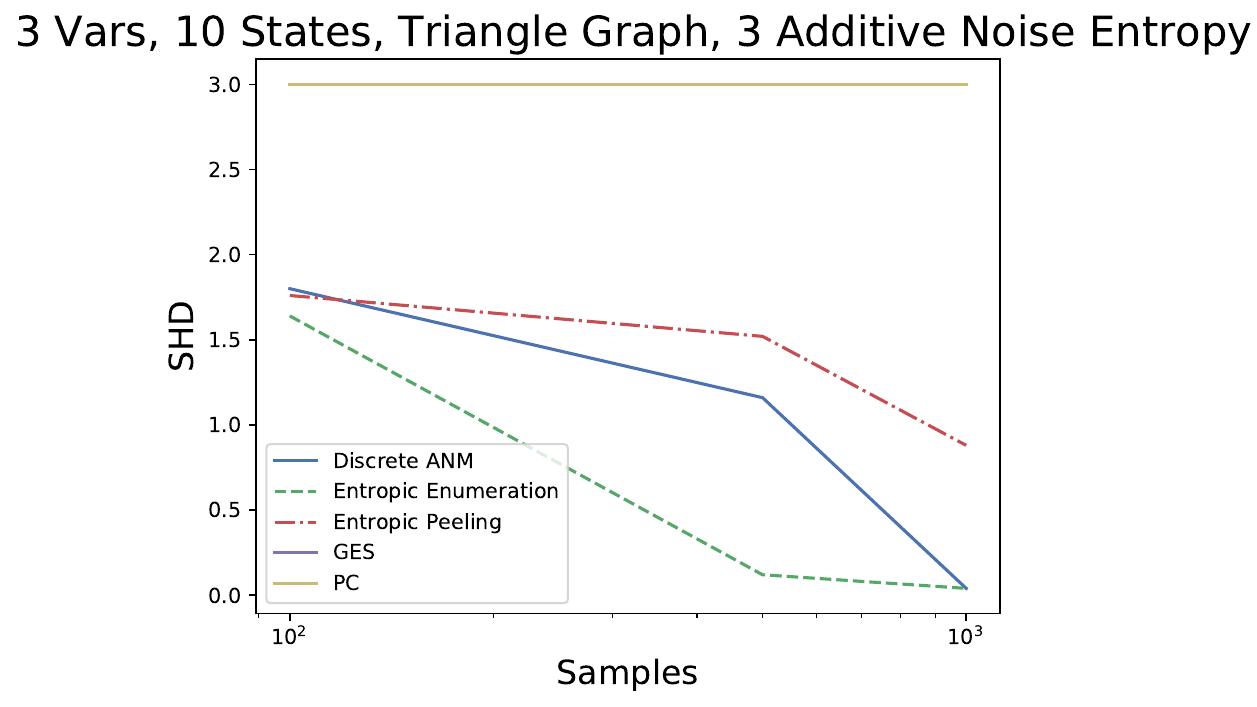}}
\caption{Performance of methods in the ANM setting in the triangle graph $X\rightarrow Y\rightarrow Z, X\rightarrow Z$: $25$ datasets are sampled for each configuration from the ANM model $X=f(\pa_X)+N$. The $x-$axis shows the number of samples in each dataset. Entropic enumeration outperforms ANM algorithm in the low-noise low-sample regime.}
\label{fig:3_node_triangle_anm_samples}
\end{center}
\vskip -0.2in
\end{figure*}

\begin{figure*}[ht!]
\vskip 0.2in
\begin{center}
\subfigure[]{\label{fig:3_node_triangle_anm_100}\includegraphics[width=0.3\textwidth]{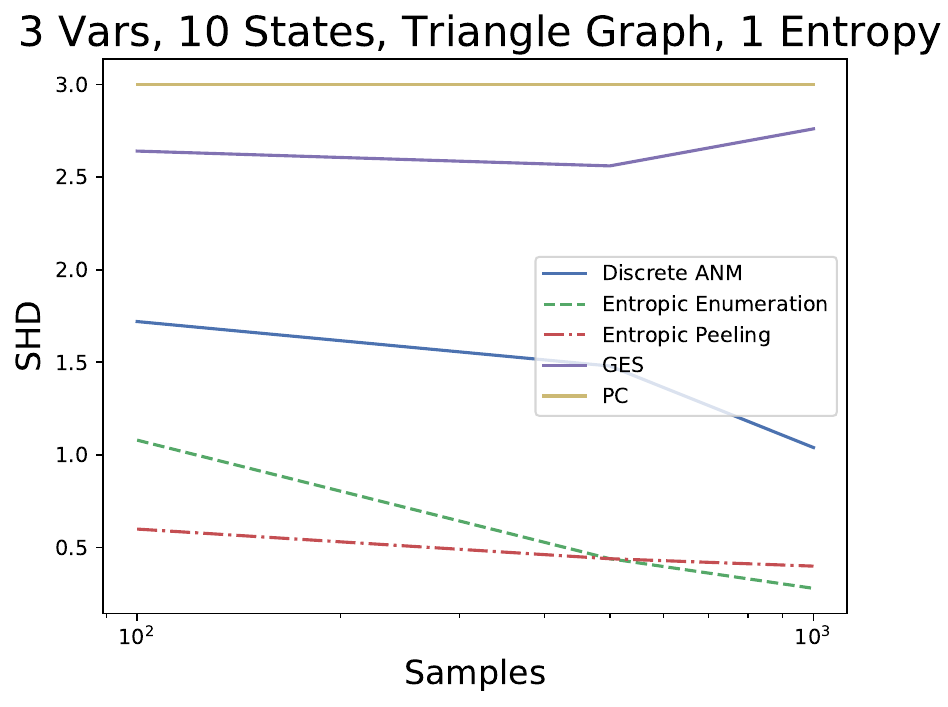}}
\subfigure[]{\label{fig:3_node_triangle_anm_500}\includegraphics[width=0.3\textwidth]{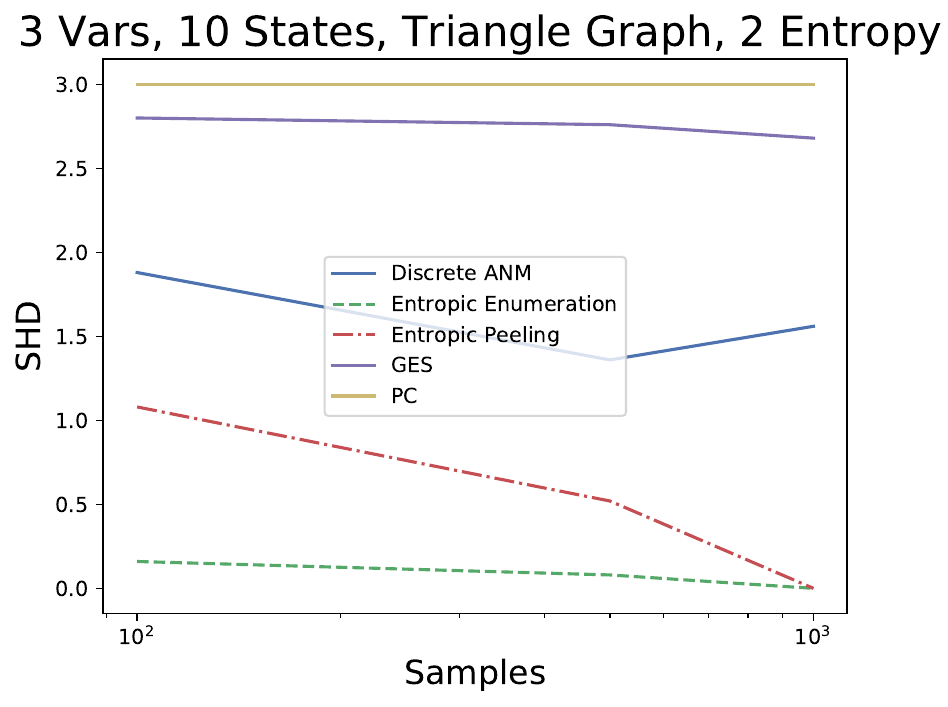}}
\subfigure[]{\label{fig:3_node_triangle_anm_1000}\includegraphics[width=0.3\textwidth]{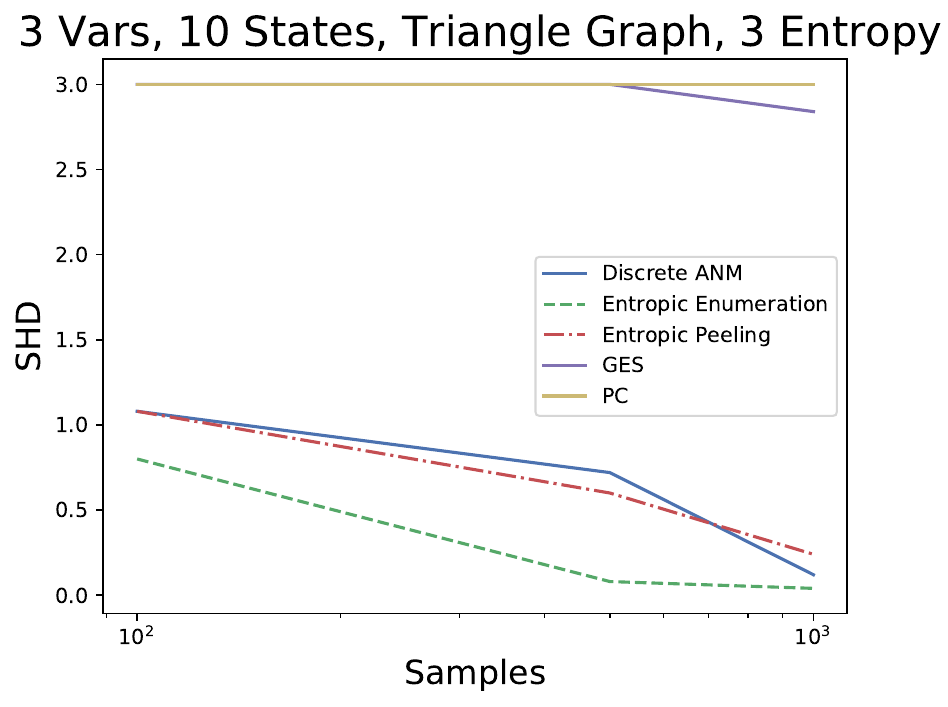}}
\caption{Performance of methods in the unconstrained setting in the triangle graph $X\rightarrow Y\rightarrow Z, X\rightarrow Z$: $25$ datasets are sampled for each configuration from the unconstrained model $X=f(\pa_X,E_X)$. The $x-$axis shows the number of samples in each dataset. Entropic enumeration and peeling algorithms consistently outperform the ANM algorithm in almost all regimes.}
\label{fig:3_node_triangle_entropic_samples}
\end{center}
\vskip -0.2in
\end{figure*}

\begin{figure*}[ht!]
\vskip 0.2in
\begin{center}
\subfigure[]{\label{fig:3_node_line_anm100}\includegraphics[width=0.3\textwidth]{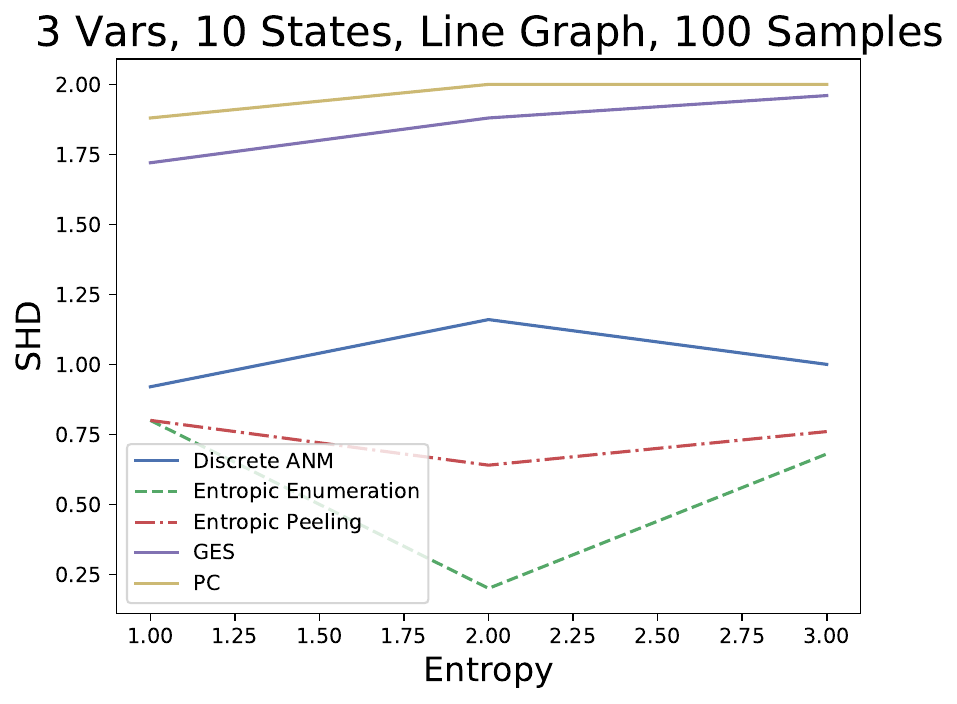}}
\subfigure[]{\label{fig:3_node_line_anm500}\includegraphics[width=0.3\textwidth]{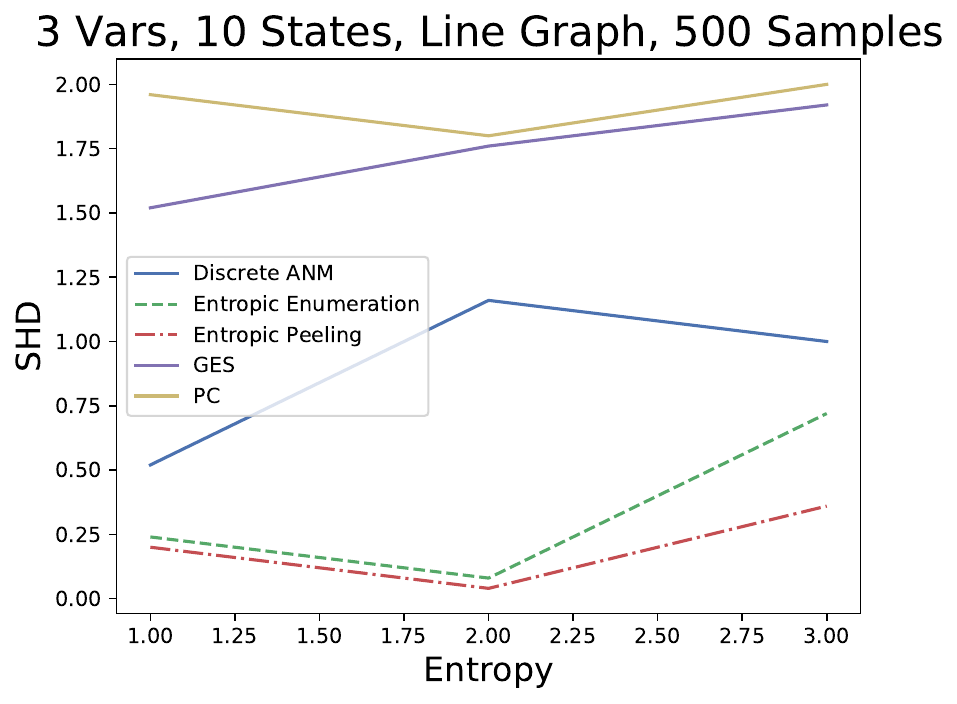}}
\subfigure[]{\label{fig:3_node_line_anm1000}\includegraphics[width=0.3\textwidth]{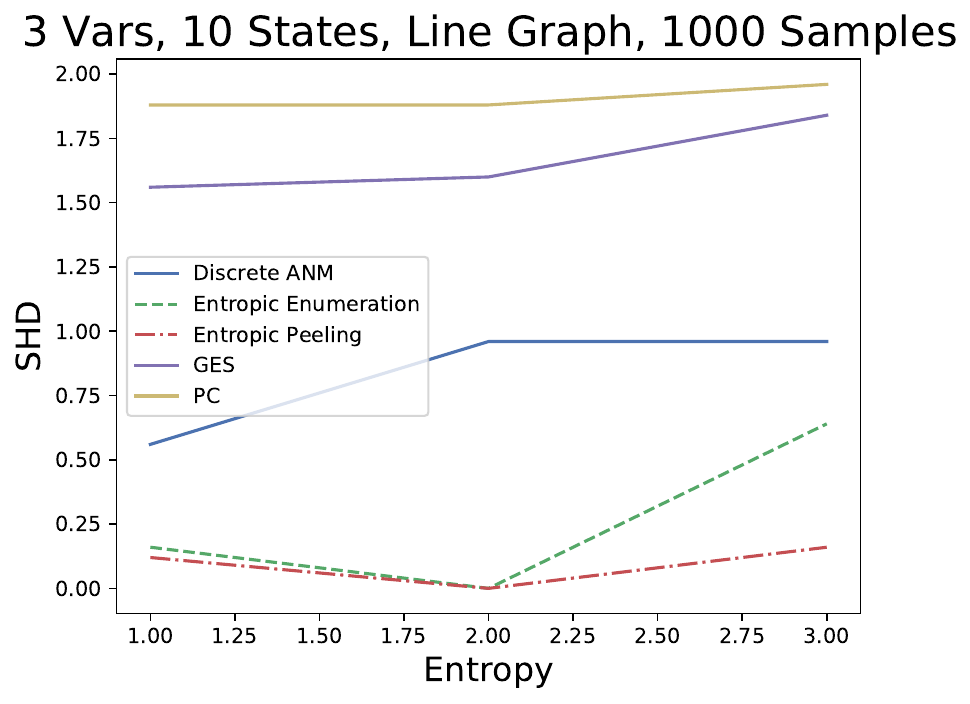}}
\caption{Performance of methods in the unconstrained setting on the line graph $X\rightarrow Y \rightarrow Z$: $25$ datasets are sampled for each configuration from the unconstrained model $X=f(\pa_X,E_X)$. The $x-$axis shows entropy of the exogenous noise. Entropic enumeration and peeling algorithms consistently outperform the ANM algorithm in all regimes.}
\label{fig:3_node_line_entropic}
\end{center}
\vskip -0.2in
\end{figure*}

\begin{figure*}[ht!]
\vskip 0.2in
\begin{center}
\subfigure[]{\label{fig:3_node_triangle_anm_100}\includegraphics[width=0.3\textwidth]{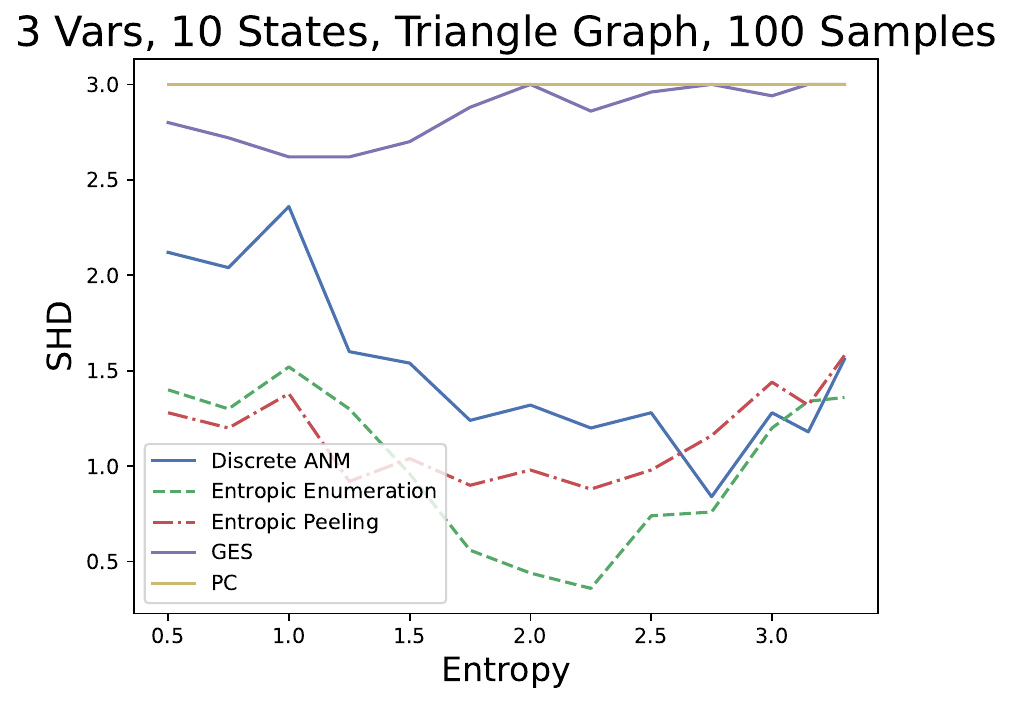}}
\subfigure[]{\label{fig:3_node_triangle_anm_1000}\includegraphics[width=0.3\textwidth]{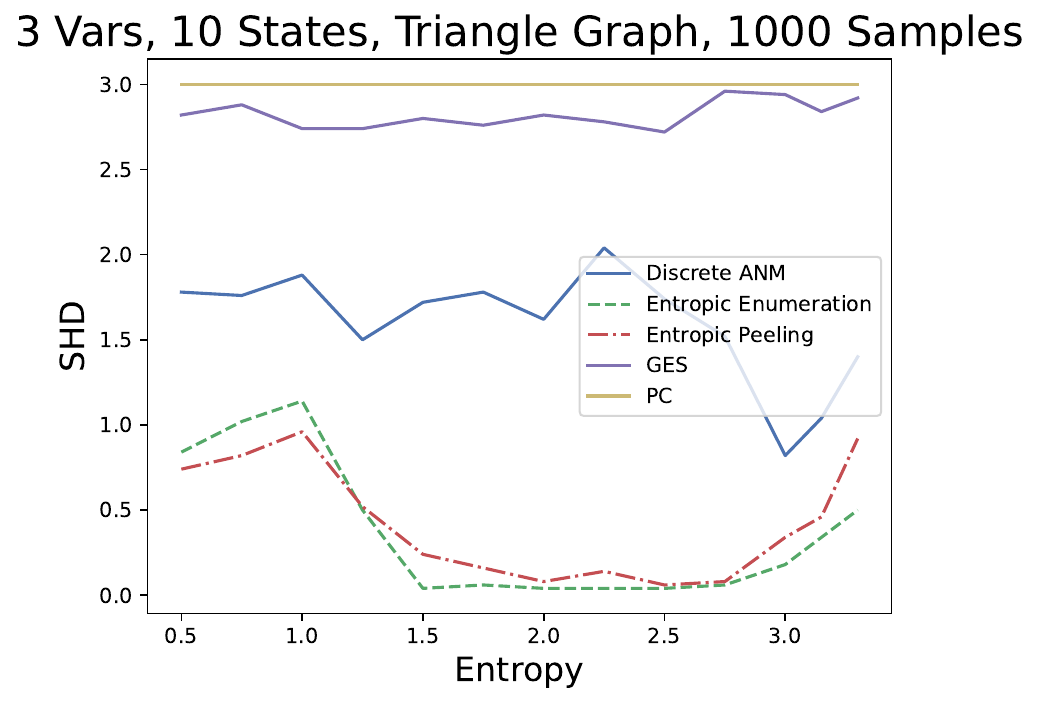}}
\subfigure[]{\label{fig:3_node_triangle_anm_50000}\includegraphics[width=0.3\textwidth]{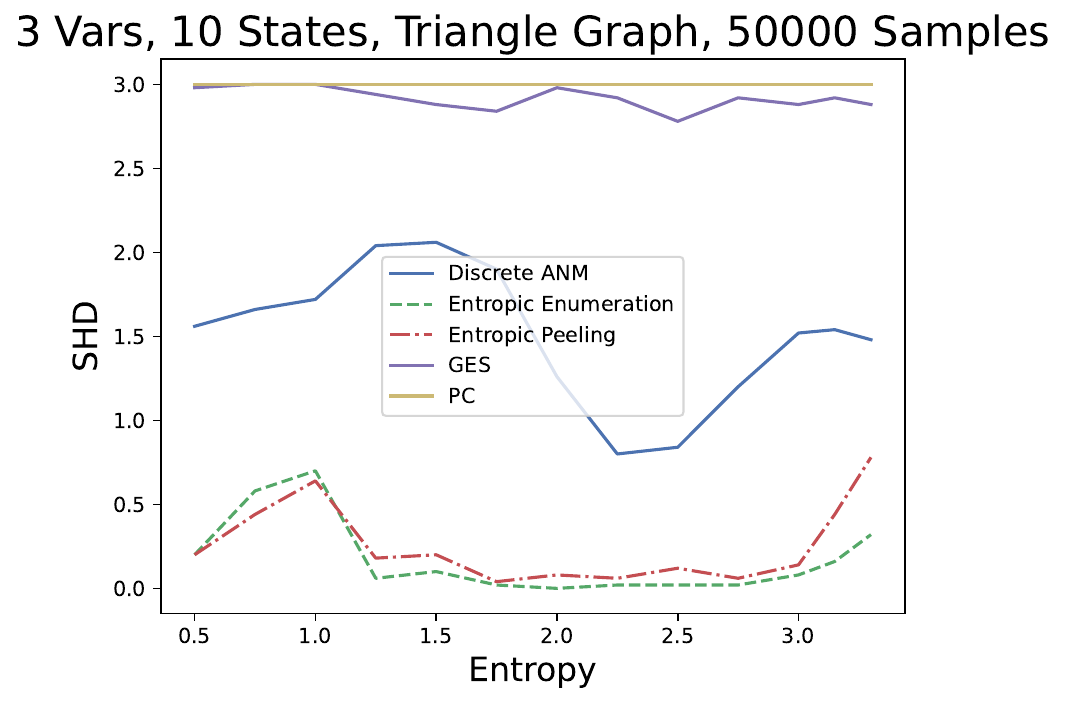}}
\caption{Performance of methods in the unconstrained setting in the triangle graph $X\rightarrow Y\rightarrow Z, X\rightarrow Z$: $50$ datasets are sampled for each configuration from the unconstrained model $X=f(\pa_X,E_X)$. The $x-$axis shows entropy of the exogenous noise. Entropic enumeration and peeling algorithms consistently outperform the ANM algorithm in almost all regimes. Note how unlike Figure \ref{fig:3_node_triangle_entropic_HES}, we do not fix the source to have high-entropy or treat the source differently than the other nodes.}
\label{fig:3_node_triangle_entropic}
\end{center}
\vskip -0.2in
\end{figure*}

\begin{figure*}[ht!]
\vskip 0.2in
\begin{center}
\subfigure[]{\label{fig:5_node_anm100}\includegraphics[width=0.3\textwidth]{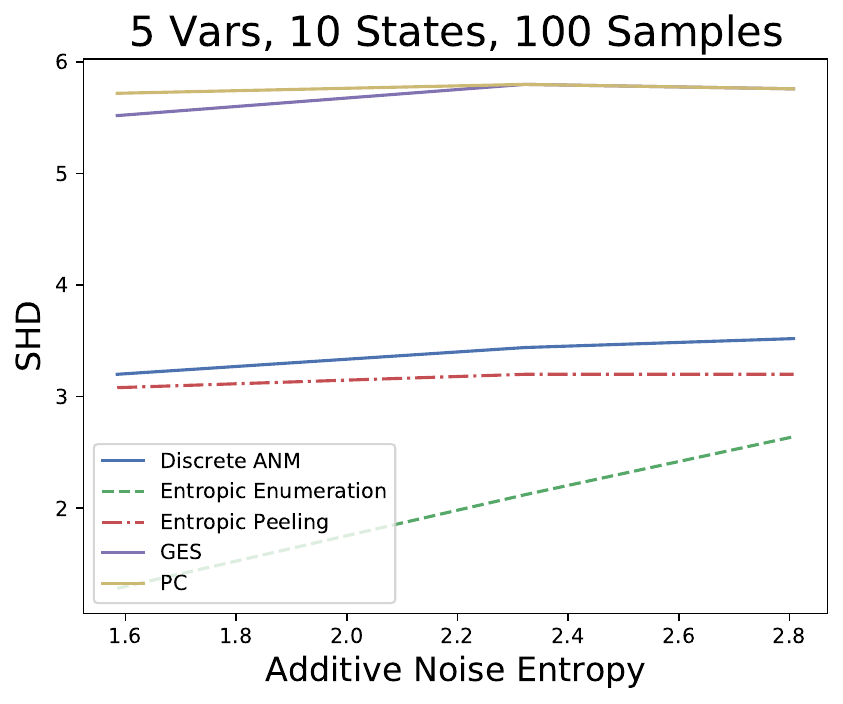}}
\subfigure[]{\label{fig:5_node_anm500}\includegraphics[width=0.3\textwidth]{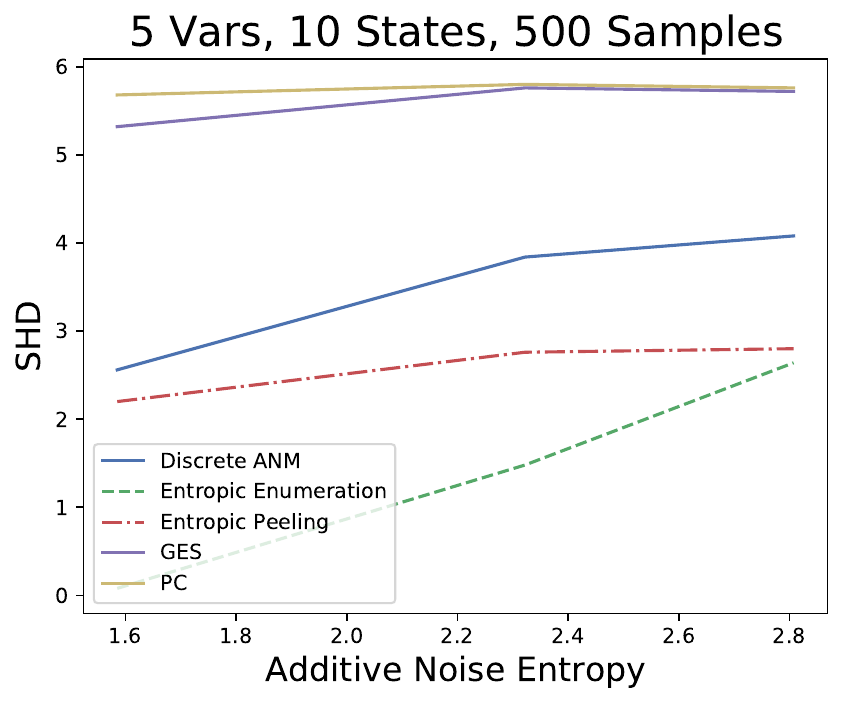}}
\subfigure[]{\label{fig:5_node_anm1000}\includegraphics[width=0.3\textwidth]{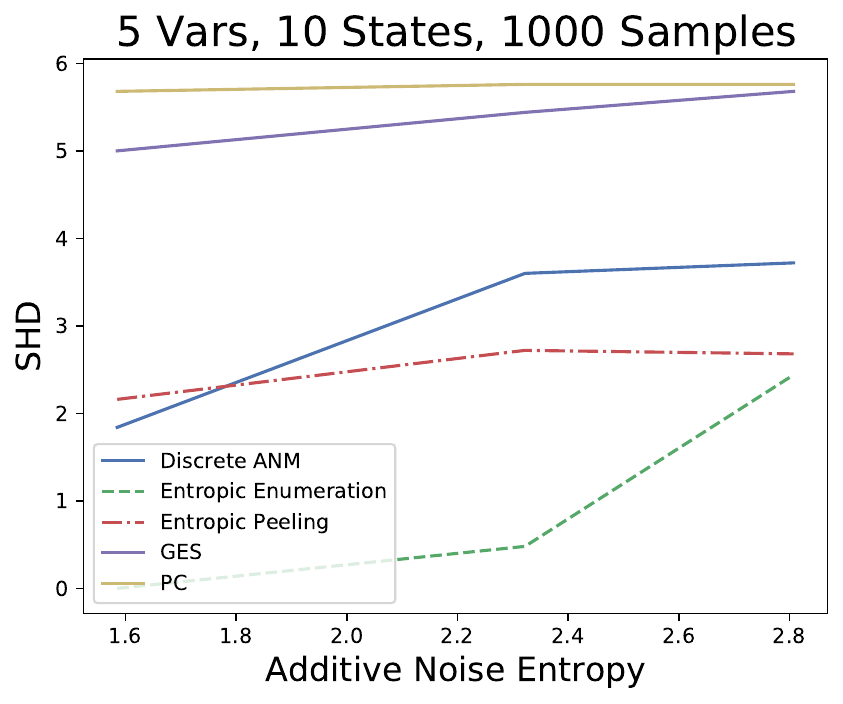}}
\caption{Performance of methods in the ANM setting on random $5-$node graphs: $25$ datasets are sampled for each configuration from the ANM $X=f(\pa_X) + N$. The $x-$axis shows entropy of the exogenous noise. Entropic enumeration outperforms consistently in all regimes.}
\label{fig:5_node_anm}
\end{center}
\vskip -0.2in
\end{figure*}
\begin{figure*}[ht!]
\vskip 0.2in
\begin{center}
\subfigure[]{\label{fig:5_node_entropic100}\includegraphics[width=0.3\textwidth]{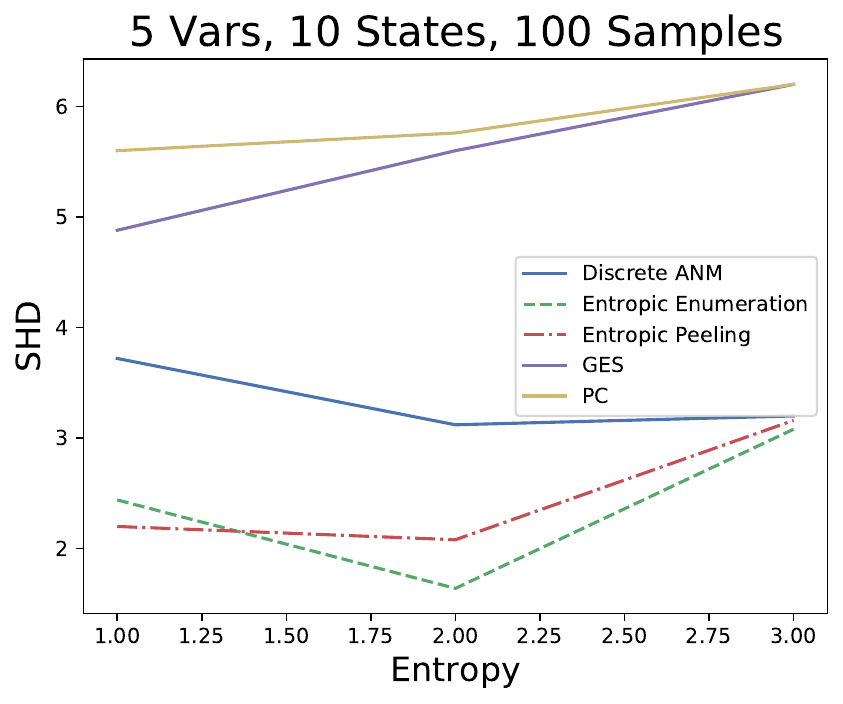}}
\subfigure[]{\label{fig:5_node_entropic500}\includegraphics[width=0.3\textwidth]{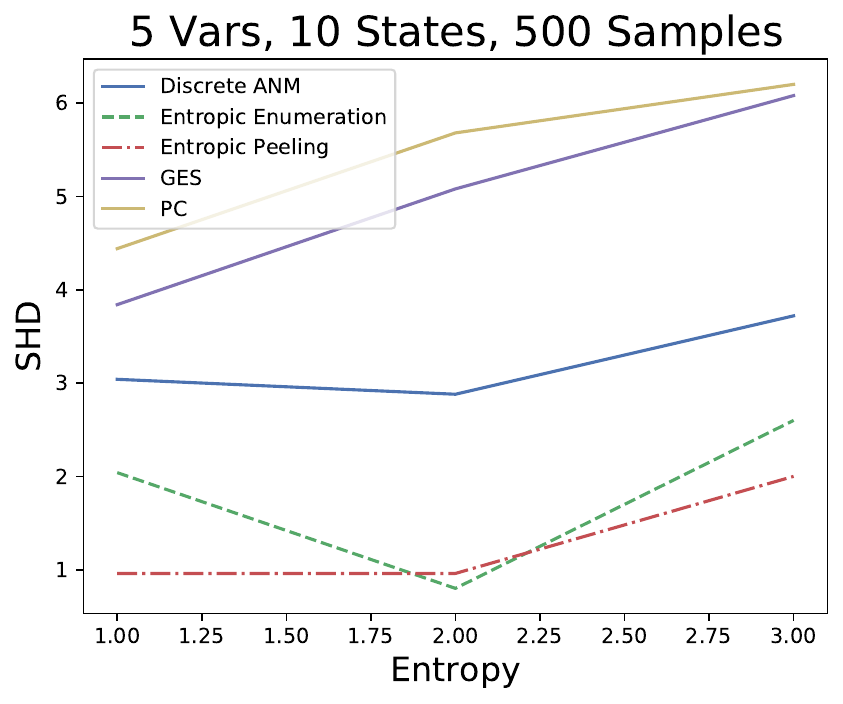}}
\subfigure[]{\label{fig:5_node_entropic1000}\includegraphics[width=0.3\textwidth]{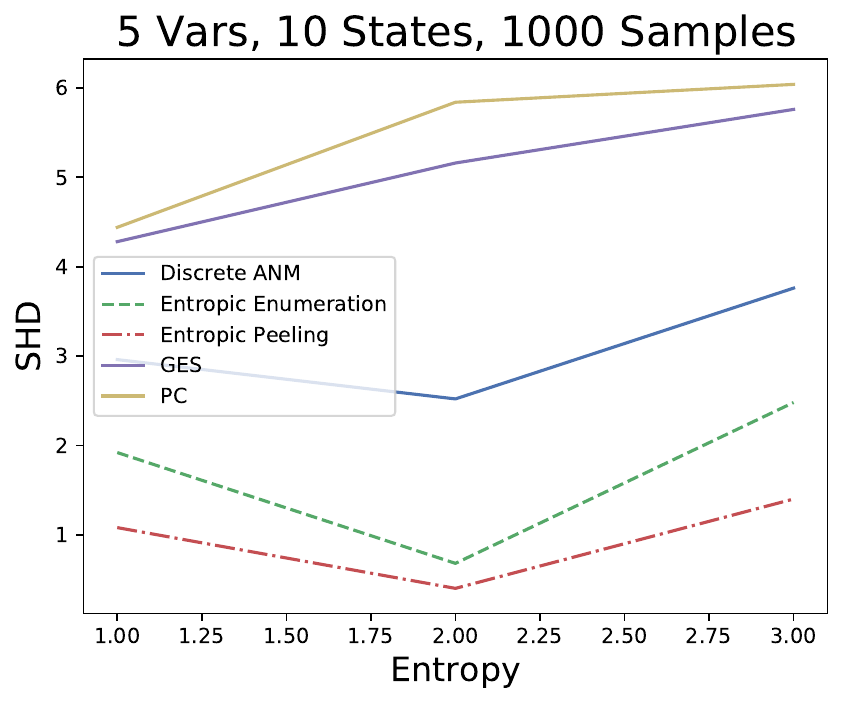}}
\caption{Performance of methods in the unconstrained setting on random $5-$node graphs: $25$ datasets are sampled for each configuration from a random graph from the unconstrained model $X=f(PA_X,N)$. The $x-$axis shows entropy of the additive noise. Entropic enumeration and peeling algorithms consistently outperform the ANM algorithm in all regimes.}
\label{fig:5_node_entropic}
\end{center}
\vskip -0.2in
\end{figure*}

\subsection{Entropy Measure for Peeling Algorithm}
In this section, we compare different versions of peeling algorithm, one that uses only the exogenous entropy and one that uses the total entropy in pairwise comparisons. We randomly sample exogenous distributions according to symmetric Dirichlet distribution, which is characterized by a single parameter $\alpha$. By varying $\alpha$ and with rejection sampling, we are able to generate distributions for the exogenous nodes $E$ such that $H(E)\leq \theta$ for some $\theta$. For each distribution, we then compare the structural Hamming distance of the output of our peeling algorithm with the true graph. The structural Hamming distance (SHD) is the number of edge modifications (insertions, deletions, flips) required to change one graph to another. Results are given in Figure \ref{fig:peeling_synthetic}. Let $H(E)$ and $H(\tilde{E})$ be the minimum exogenous entropy needed to generate $Y$ from $X$ and the minimum exogenous entropy needed to generate $X$ from $Y$, respectively. At each step of the algorithm for every pair $X,Y$, the red curve compares $H(X)$ with $H(Y)$, the blue curve compares $H(E)$ with $H(\tilde{E})$, and the green curve compares $H(X)+H(E)$ with $H(Y)+H(\tilde{E})$ and orients the edge based on the minimum. As expected, comparing exogenous entropies perform better than comparing observed variables' entropies in the low-entropy regime. Interestingly, we observe that comparing total entropies consistently performs much better than either. 
\begin{figure*}[ht!]
\vskip 0.2in
\begin{center}
\subfigure[Bivariate]{\label{fig:peeling_synthetic_2Nodes}\includegraphics[width=0.3\textwidth]{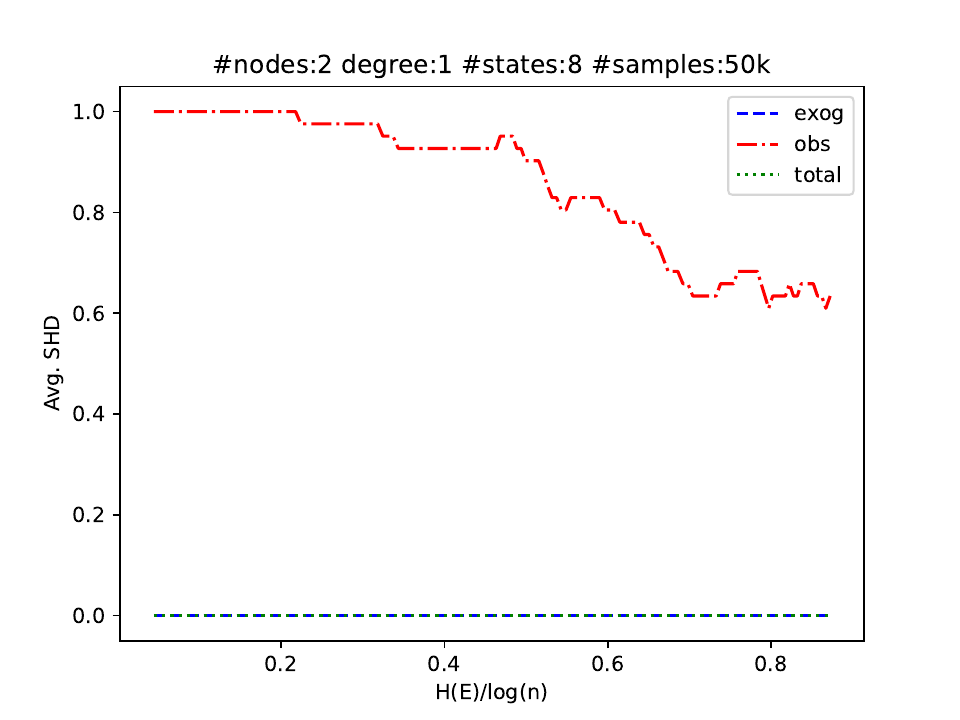}}
\subfigure[$3$-Node Complete Graph]{\label{fig:peeling_synthetic_3Nodes}\includegraphics[width=0.3\textwidth]{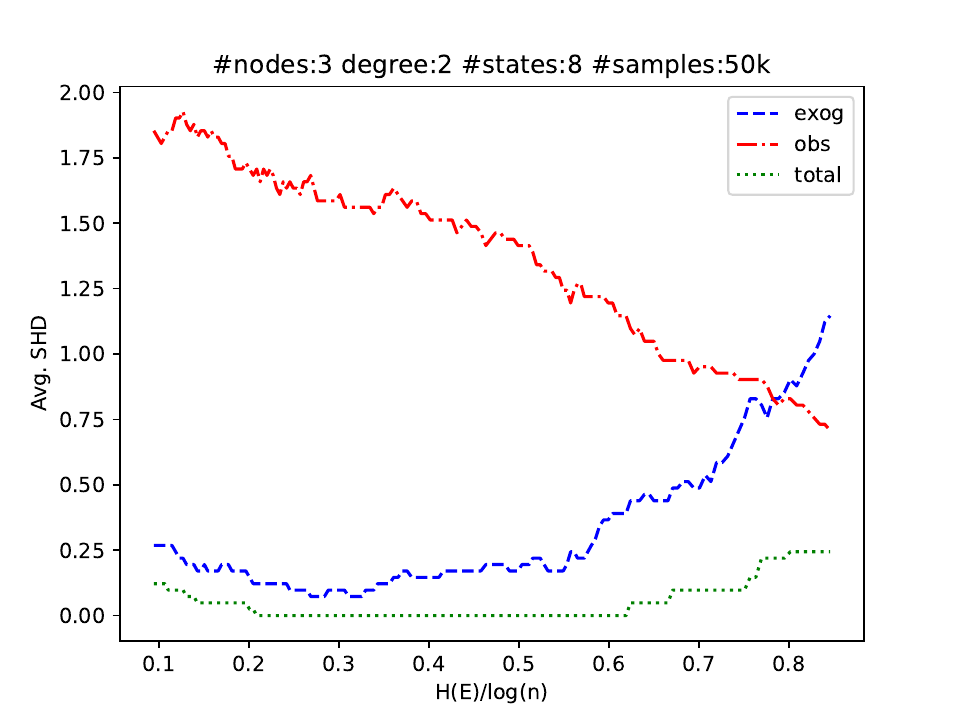}}
\subfigure[$4$-Node Complete Graph]{\label{fig:peeling_synthetic_4Nodes}\includegraphics[width=0.3\textwidth]{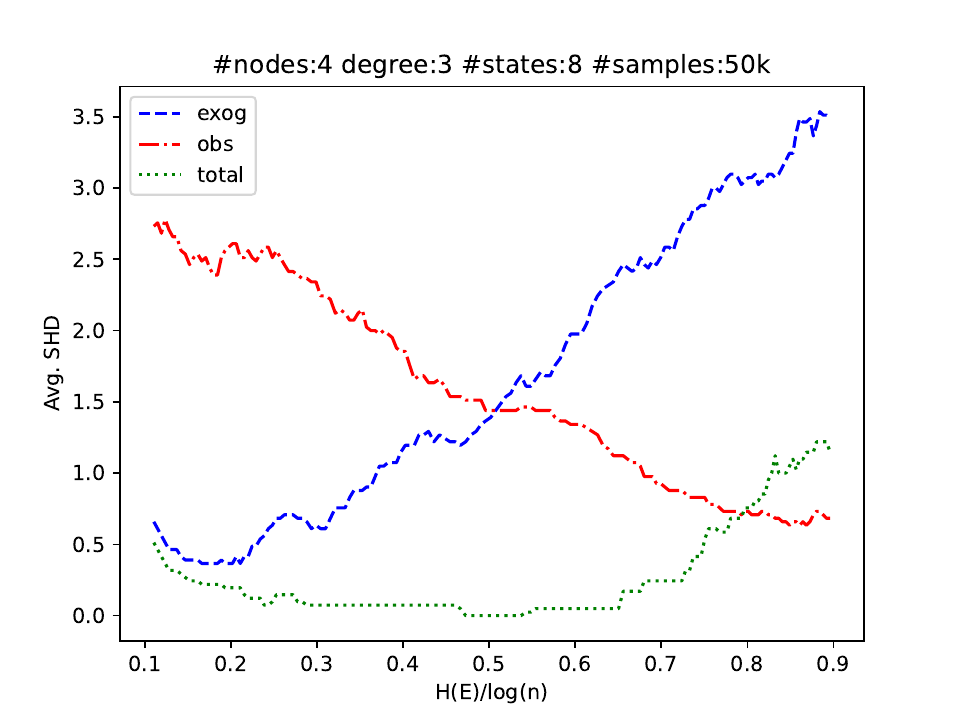}}
\caption{Average structural Hamming distance (SHD) of peeling algorithm on synthetic data for comparing \emph{i)} exogenous entropies (blue, dashed), \emph{ii)} entropies of observed variables (red, dotted-dashed) and \emph{iii) } total entropies (green, dotted) in line $11$ as the Oracle for Algorithm \ref{alg:general}. Randomly orienting all edges would result in an average SHD equal to half the number of edges (0.5, 1.5, and 3.0 for (a), (b), and (c), respectively).}
\label{fig:peeling_synthetic}
\end{center}
\vskip -0.2in
\end{figure*}

\subsection{Entropy Percentile of True Graph}
In this section, We test the hypothesis that \emph{when true exogenous entropies are small, the true causal graph is the DAG that minimizes the total entropy}. To test this, we find the minimum entropy needed to generate the joint distribution for every directed acyclic graph that is consistent with the true graph skeleton. We then look at the percentile of the entropy of the true graph. For example, if there are $5$ DAGs with less entropy than the true graph out of $100$ distinct DAGs, then the percentile is $1-5/100=0.95$.

\subsubsection{Synthetic Data}
Figure \ref{fig:permutation_synthetic} shows the results for various graphs. It can be seen that, especially for dense graphs, the true graph is the unique minimizer of total entropy needed to generate the joint distribution for a very wide range of entropy values. This presents exhaustive search as a practical algorithm for graphs with a small number of nodes (or generally, those for which the MEC is small). %
Our experiments, contrary to our theory, show that even when the number of nodes is the same as the number of states, entropic causality can be used for learning the graph.

\begin{figure*}[ht!]
\vskip 0.2in
\begin{center}
\subfigure[$4$-Node Graphs]{\label{fig:permutation_4Nodes}\includegraphics[width=0.3\textwidth]{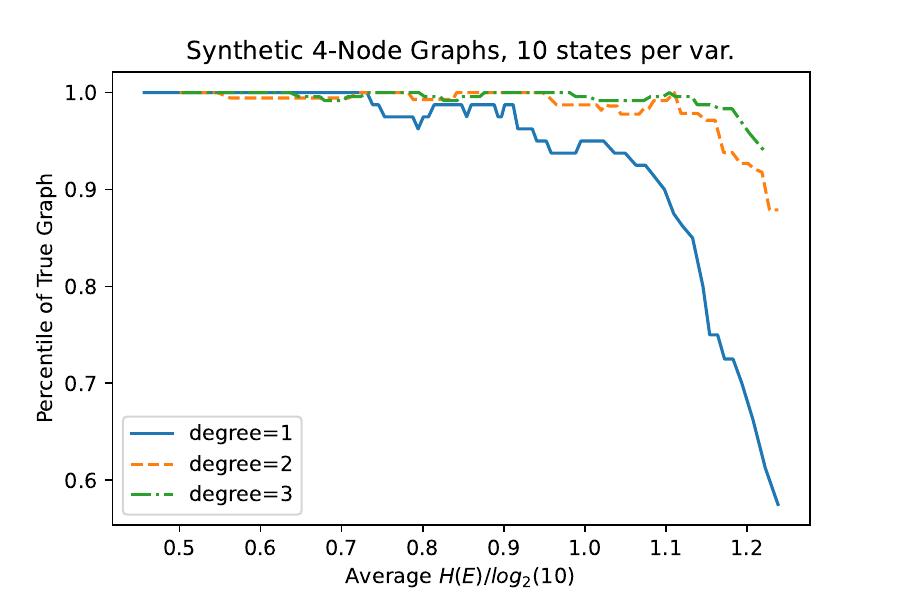}}
\subfigure[$5$-Node Graphs]{\label{fig:permutation_5Nodes}\includegraphics[width=0.3\textwidth]{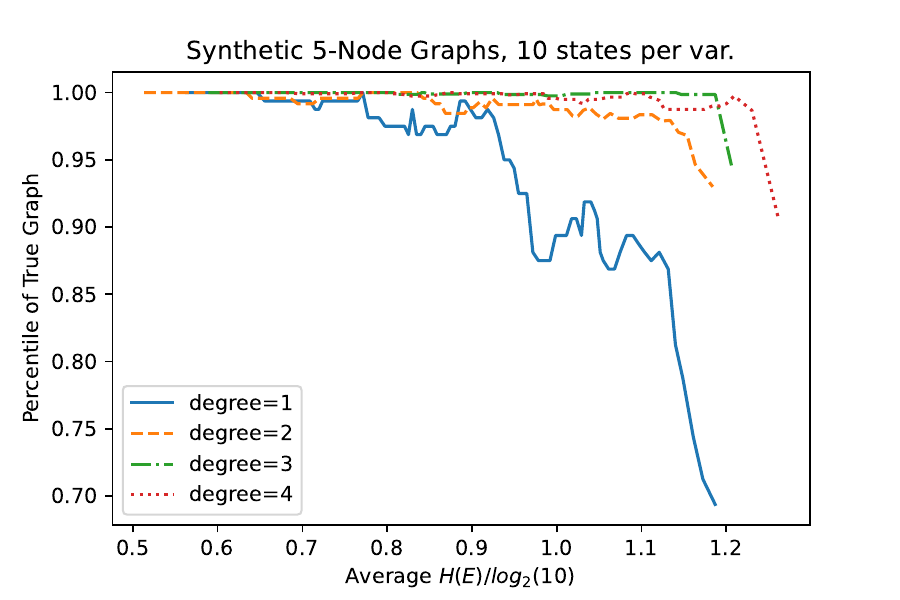}}
\subfigure[$10$-Node Graphs]{\label{fig:permutation_10Nodes}\includegraphics[width=0.3\textwidth]{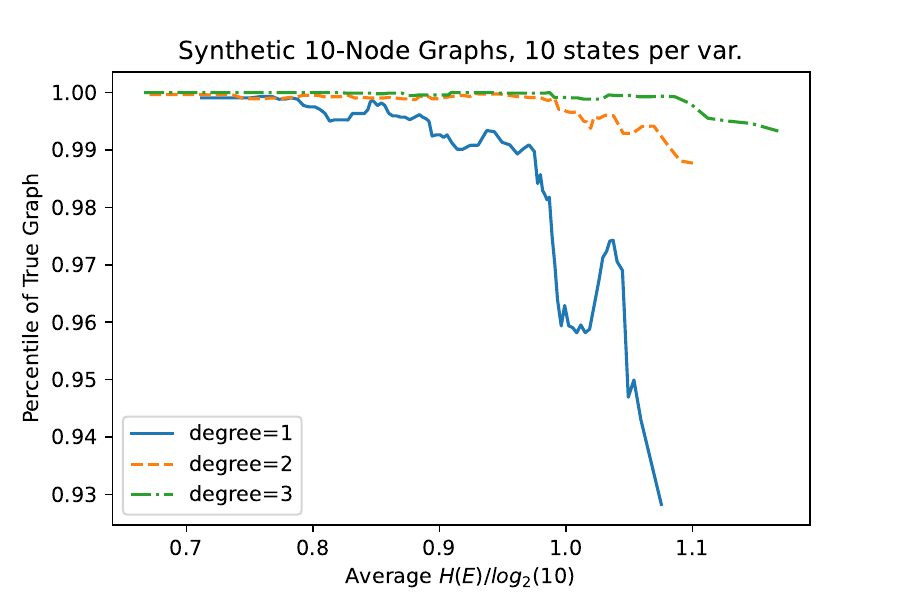}}
\caption{Percentile of the true graph's entropy compared to minimum entropy required to fit every other incorrect possible causal graph that is consistent with the skeleton (synthetic data).}
\label{fig:permutation_synthetic}
\end{center}
\vskip -0.2in
\end{figure*}

\subsubsection{Semi-synthetic Data}
\label{app:semi_synthetic}
\begin{figure*}[h!]
\vskip 0.2in
\begin{center}
\includegraphics[width=0.45\textwidth]{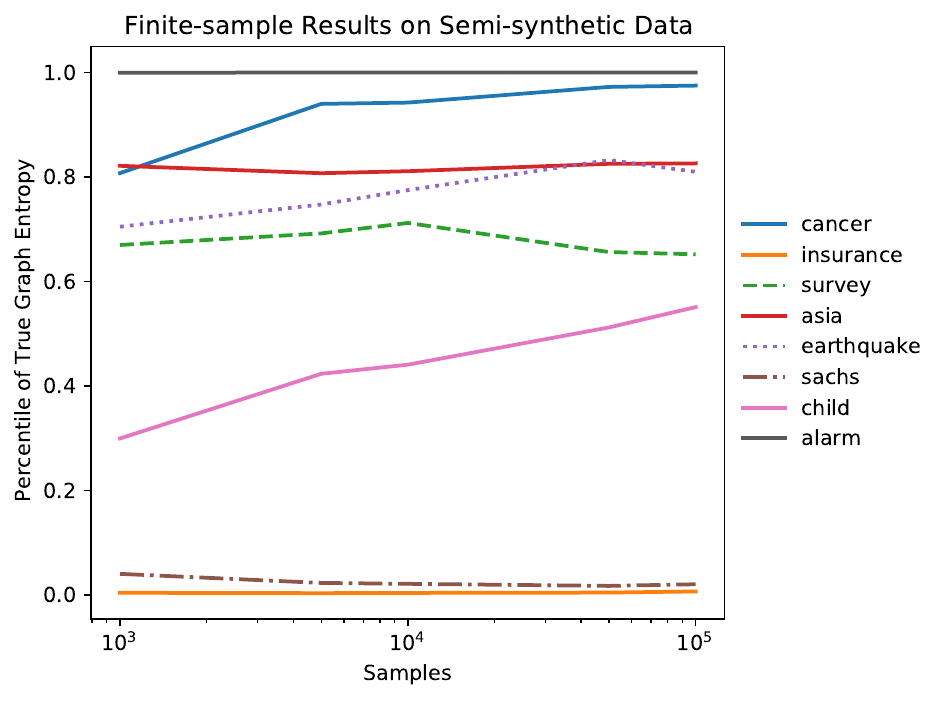}
\caption{Percentile of true graphs entropy compared to minimum entropy required to fit wrong causal graphs in semi-synthetic data from \emph{Bayesian Network Repository}~\cite{scutariLearning10}. }
\label{fig:real_data}
\end{center}
\vskip -0.2in
\end{figure*}

We use the \emph{bnlearn} repository\footnote{\url{https://www.bnlearn.com/bnrepository/}} which contains a selection of Bayesian network models curated from real data~\cite{scutariLearning10}. Each model in bnlearn contains the real causal DAG along with a .bif file that details the conditional distributions
that define the distribution. Using this, we generate some
number of samples (specified per experiment) and evaluate
a collection of causal graph discovery algorithms given the
graph skeleton (which can be learned using, e.g., Greedy
Equivalence Search) and generated samples as input. Using these models, we can generate any number of samples and test the accuracy of our algorithms. The datasets however are often binary which makes them less suitable for Algorithm \ref{alg:general}. We therefore limit our use of this data to test our hypothesis that the true graph has the smallest entropy among all graphs consistent with the skeleton. The results are given in Figure \ref{fig:real_data}. We resample datasets 25 times. For Alarm/Child/Insurance, the graph is too large to try all orientations, so we sample from random topological orderings until we have $10^4-1$ other unique orientations to compare against. Interestingly, for Alarm (46 edges), the true graph is often the entropy minimizer among the sampled orientations. In some cases, it seems the entropy of an orientation contains some real signal. Also interestingly (although not in line with our hypothesis), for Insurance (52 edges), the true graph is often the entropy maximizer among the sampled orientations. A deeper understanding of these phenomena would be interesting.

\end{document}